\newcommand{\ignore}[1]{}
\newcommand{\bmu}{{\boldsymbol{\mu}}}
\newtheorem{lemma}{Lemma}
\newtheorem{theorem}{Theorem}
\definecolor{kentuckyblue}{RGB}{0, 93, 170}
\title{Fair Algorithms with Probing for Multi-Agent Multi-Armed Bandits}
\author{
    Tianyi Xu\textsuperscript{\rm 1},
    Jiaxin Liu\textsuperscript{\rm 2},
    Nicholas Mattei\textsuperscript{\rm 1},
    Zizhan Zheng\textsuperscript{\rm 1},
}
\begin{document}

\maketitle

\begin{abstract}
We propose a multi-agent multi-armed bandit (MA-MAB) framework to ensure fair outcomes across agents while maximizing overall system performance. For example, in a ridesharing setting where a central dispatcher assigns drivers to distinct geographic regions, utilitarian welfare (the sum of driver earnings) can be highly skewed—some drivers may receive no rides. We instead measure fairness by Nash social welfare, i.e., the product of individual rewards. A key challenge in this setting is decision-making under limited information about arm rewards (geographic regions). To address this, we introduce a novel probing mechanism that strategically gathers information about selected arms before assignment. In the offline setting, where reward distributions are known, we exploit submodularity to design a greedy probing algorithm with a constant-factor approximation guarantee. In the online setting, we develop a probing-based algorithm that achieves sublinear regret while preserving Nash social welfare. Extensive experiments on synthetic and real-world datasets demonstrate that our approach outperforms baseline methods in both fairness and efficiency.
\end{abstract}


\section{Introduction}
The multi-agent multi-armed bandit (MA-MAB) framework models a scenario where \(M\) agents compete for \(A\) arms over discrete rounds. In each round, a centralized decision-maker assigns each agent to an arm and observes the individual reward returned by each agent–arm pair, which can then be aggregated to measure total system performance. Typically this aggregation is measured by the utilitarian welfare, or sum of total rewards.
One such application is ridesharing, where a central dispatcher assigns drivers (agents) to geographic regions (arms) based on estimated demand and driver availability. The dispatcher then observes the individual reward obtained by each driver in its assigned region.

Maximizing total expected reward is a common objective in the MA-MAB framework, but it can lead to inequalities in practice \citep{NIPS2016_eb163727}. Optimizing for aggregate performance often results in a concentration of profitable arms among a few agents, disadvantaging others \citep{pandp.20181018,pmlr-v32-agarwalb14}. This is especially concerning in applications where fair resource assignment is essential, such as in ridesharing platforms, where drivers need equitable access to profitable areas \citep{bubeck2012, Lattimore_Szepesvári_2020}, and in content recommendation systems, where creator exposure should not be monopolized \cite{abdollahpouri2020multistakeholder}.

Many MA-MAB studies aim to improve fairness, often by altering assignment strategies or adding constraints to naive utilitarian objectives \citep{Meritocratic2018,Patil_Ghalme_Nair_Narahari_2020}.
A natural baseline is to maximize the sum of agent rewards (utilitarian welfare), but this can produce highly imbalanced assignments: profitable arms tend to be concentrated on a few agents while others receive little or nothing. This effect—where some agents are effectively “starved” of reward despite high aggregate system utility—is well documented in assignment and scheduling literature and is referred to as starvation. Maximizing total reward without regard to its distribution can therefore systematically disadvantage certain agents even when the system as a whole appears efficient.
To address this, a line of work replaces or augments the sum objective with fairness-aware criteria such as Nash Social Welfare (NSW), which takes the product (equivalently the geometric mean) of individual utilities, thereby discouraging assignments that leave any agent with very low reward and yielding a balanced equity-efficiency trade-off \citep{NEURIPS2018_be3159ad,Zhang_Conitzer_2021}. In particular, \citet{jones2023efficient} demonstrated that optimizing NSW in MA-MAB settings can prevent persistent exclusion and achieve simultaneous improvements in fairness and overall utility.


However, a common limitation of existing approaches is their dependence on instantaneous reward feedback for updating estimates and steering assignment policies, even though in many real-world scenarios key information is either unobserved or noisy \citep{li2016collaborative}. For instance, in ridesharing platforms, uncertainty about passenger demand and fluctuating road conditions can corrupt the platform's estimates of driver-region rewards. Those corrupted estimates then propagate into the assignment step, potentially producing assignments that are unfair—for example, by repeatedly starving certain drivers who are misestimated as low-reward. To mitigate such risks, probing can be used to actively gather targeted information before making assignments \citep{NEURIPS2019_36d75342}. Originating in economics \citep{pandora-Weitzman} as a cost-bearing method to reduce uncertainty in sequential decision-making, probing has been adapted in domains such as database query optimization \citep{deshpande2016approximation,liu2008near}, real-time traffic-aware vehicle routing \citep{bhaskara2020adaptive,xu2025online}, and wireless network scheduling \citep{abs-2108-03297,xu2023online}.


Our work extends the NSW-based multi-agent multi-armed bandit (MA-MAB) framework by incorporating a probing mechanism to gather extra information, refining reward estimates and improving the exploration-exploitation balance while ensuring fairness. The decision-maker first probes a subset of arms for detailed reward data, then assigns agents fairly according to an objective that incorporates a fairness measure. In the offline setting, where reward distributions are known, we develop a greedy probing algorithm with a provable performance bound, leveraging the submodular structure of our objective. For the online setting, we propose a combinatorial bandit algorithm with a derived regret bound.

Integrating probing into MA-MAB poses the challenge of balancing exploration and exploitation while maintaining fairness. Related probing problems have been shown to be NP-hard \citep{goel2006asking}, and while previous works \citep{Zuo_Zhang_Joe-Wong_2020}, have explored probing strategies in MA-MAB, they have simplified assumptions, such as limiting rewards to Bernoulli distributions and ignoring fair assignment. Our framework overcomes these limitations by considering general reward distributions, ensuring fairness, and introducing a probing budget to optimize performance under exploration constraints.

The primary contributions of our work are as follows:
   
   (1) We extend the multi-agent MAB framework with a novel probing mechanism that tests selected arms before assignment. This approach ensures fairness through Nash Social Welfare optimization, departing from previous work that focuses solely on the sum of rewards \citep{Zuo_Zhang_Joe-Wong_2020}.
   
   (2) For known reward patterns in offline setting, we develop a simple yet effective greedy probing strategy with provable performance guarantees, while maintaining fairness across agents.
   
   (3) For the online setting where rewards are unknown, we propose an algorithm that balances exploration and fairness, proving that probing and fair assignment do not compromise asymptotic performance.
   
   (4) Experiments on both synthetic and real-world data demonstrate that our method achieves superior performance compared to baselines, validating the effectiveness of the probing strategy and our algorithm.

\section{Related Work}

The multi‑armed bandit (MAB) framework has been key for sequential decision‑making under uncertainty \citep{LAI19854,garivier2013,892067}. While early MAB models involve a single decision‑maker choosing one arm per round, many real‑world problems involve multiple agents acting simultaneously \citep{NEURIPS2019_85353d3b}, each potentially choosing different arms \citep{NEURIPS2021_c96ebeee}. Most existing multi‑agent MAB methods focus on maximizing the total sum of rewards, which can unfairly favor some agents. To address this, researchers have proposed fairness‑aware approaches \citep{NIPS2016_eb163727}, with Nash Social Welfare (NSW) \citep{Kaneko1979} proving effective because it maximizes the product of agents’ utilities.

In practice, key aspects of reward distributions are often unknown \citep{8895728,Lattimore_Szepesvári_2020}. Existing fair MA‑MAB methods generally rely on passive feedback \citep{Liu_2010,Gai2014DistributedSO}. Probing, or “active exploration,” seeks extra information by testing a subset of arms before committing \citep{Chen_Javdani_2015,NIPS2014_66368270}, and is valuable when exploring poorly understood arms is risky \citep{golovin_krause_2011,bhaskara2020adaptive}. 

Recent single‑agent studies formally introduce probing costs: \citet{tucker2023costlyobs} analyze bandits with costly reward observations, giving matching $\Theta(c^{1/3}T^{2/3})$ bounds; \citet{elumar2024costlyprobe} allow paying to probe one arm per round and achieve $\tilde{\mathcal O}(\sqrt{KT})$ regret; and Observe‑Before‑Play bandits \citep{Zuo_Zhang_Joe-Wong_2020,zhang2020conversational,zuo2022hierarchical} permit a limited number of pre‑observations each round. Offline work has also examined submodular probing for routing problems \citep{bhaskara2020adaptive}. However, the bulk of probing research still optimizes aggregate metrics—coverage, latency, or total reward—without incorporating inter‑agent equity. This leaves a gap between fair multi‑agent MAB methods that rely on passive feedback and probing strategies that ignore fairness. We close this gap by coupling cost‑aware, submodular probe selection with NSW‑oriented assignment, so information gained through active exploration translates directly into fair outcomes for all agents.

\section{Problem Formulation}

In this section, we define the fair multi-agent multi-armed bandit (MA-MAB) problem by extending the classical multi-armed bandit framework to incorporate fairness, multi-agent interactions, and the effect of probing decisions. The goal is to optimize both fairness and utility while accounting for probing overhead.

\ignore{
\subsection{Agents, Arms, and Rewards}

We consider a set of \(M\) agents (denoted by \([M] = \{1,2,\ldots,M\}\)) and a set of \(A\) arms (denoted by \([A] = \{1,2,\ldots,A\}\)). For each agent \(j\in [M]\) and each arm \(a\in [A]\), let \(D_{j,a}\) denote the unknown reward distribution with cumulative distribution function \(F_{j,a}\) and mean \(\mu_{j,a}\). At each time step \(t\in [T]\), the decision-maker selects a probing set \(S_t\subseteq [A]\) and assigns an arm \(a_t\in [A]\) to each agent \(j\in [M]\). For every probed arm \(a\in S_t\), a reward realization \(R_{j,a,t}\) is drawn i.i.d. from \(D_{j,a}\); for unprobed arms \(a\in [A]\setminus S_t\), only the mean reward \(\mu_{j,a}\) is available. Let
\[
R_t \coloneqq \{\,R_{j,a,t} : j\in [M],\,a\in [A]\,\}
\]
denote the collection of reward realizations at time \(t\). We assume that every reward realization \(R_{j,a,t}\) drawn from \(D_{j,a}\) lies in the interval \([0,1]\); consequently, the mean rewards satisfy \(\mu_{j,a} \in [0,1]\).

\subsection{Fairness Objective: Nash Social Welfare}

To balance fairness and efficiency, we aim to maximize the expected Nash Social Welfare (NSW). For a given probing set \(S_t\), an assignment policy \(\pi_t = [\pi_{j,a,t}] \in [0,1]^{M \times A}\) represents the probability that agent \(j\) is assigned arm \(a\) at time step \(t\). Given the observed rewards \(R_t\) and the mean reward matrix  
\[
\bmu = \bigl[\mu_{j,a}\bigr]_{j\in [M],\,a\in [A]},
\]
we define the NSW objective as  
\[
\mathrm{NSW}(S_t, R_t, \bmu, \pi_t) = \prod_{j\in [M]} 
\left(
\begin{split}
&\sum_{a\in S_t} \pi_{j,a,t}\,R_{j,a,t}\\[1mm]
&\quad + \sum_{a\notin S_t} \pi_{j,a,t}\,\mu_{j,a}
\end{split}
\right).
\]
This objective ensures that both fairness (by preventing any agent's expected reward from being too low) and overall efficiency are considered.

The assignment policy \(\pi_t\) must satisfy the following constraints:
\[
\sum_{a \in [A]} \pi_{j,a,t} = 1, \quad \forall j \in [M],
\]
\[
\sum_{j \in [M]} \pi_{j,a,t} \leq 1, \quad \forall a \in [A].
\]
The first constraint ensures that each agent distributes their assignment probability mass across available arms, meaning that in expectation, each agent selects one arm. The second constraint ensures that the total probability mass assigned to each arm does not exceed one, preventing over-assignment. However, since \(\pi_{j,a,t}\) represents probabilities rather than deterministic assignments, partial assignments are allowed: an agent may be assigned multiple arms probabilistically, and an arm may be assigned to multiple agents in fractional proportions. This random assignment assumption is common in previous work, such as in \citep{jones2023efficient}, where a similar assignment structure is used.
}

\subsection{Agents, Arms, and Rewards}
\label{sec:agents}

We consider a set of \(M\) \emph{agents}, indexed by \([M]=\{1,\dots,M\}\), and a set of \(A\) \emph{arms}, indexed by \([A]=\{1,\dots,A\}\).  For every pair \((j,a)\) with \(j\in[M]\) and \(a\in[A]\), let \(D_{j,a}\) denote an unknown reward distribution with cumulative distribution function \(F_{j,a}\) and mean \(\mu_{j,a}\in[0,1]\).

At each round \(t\in[T]\), the decision-maker first selects a \emph{probing set} \(S_t\subseteq[A]\), receiving for each \(j\in[M]\) and \(a\in S_t\) a fresh reward \(R_{j,a,t}\) drawn i.i.d.\ from \(D_{j,a}\).  If \(a\notin S_t\), it instead relies on the current estimate \(\mu_{j,a}\) (true mean in offline analysis).  Using these observed rewards and estimates, the decision-maker then assigns each agent \(j\) to an arm \(a_{j,t}\in[A]\). 

Let $R_t = \{R_{j,a,t}\mid j\in[M], a\in S_t\}$ denote the set of rewards revealed by probing in round \(t\). All rewards lie in \([0,1]\).


\paragraph{Illustrative Mappings.}
\textbf{(a) Ridesharing.}  
Agents \(j\) are drivers; arms \(a\) are pickup zones obtained by a \(0.01^{\circ}\!\times\!0.01^{\circ}\) city grid.  
Before dispatching, the platform may probe a handful of zones (querying live app pings) to refine demand estimates, then assign drivers under a fairness objective.  \textbf{(b) 60 GHz WLAN Scheduling.}  
Here agents are client devices, arms are access‑points, and probing corresponds to brief beam‑sounding measurements before scheduling transmissions.

\subsection{Fairness Objective: Nash Social Welfare}

To balance efficiency and equity, we maximize the (expected) \emph{Nash Social Welfare} (NSW).  
An \emph{assignment policy} at round \(t\) is a matrix \(\pi_t=[\pi_{j,a,t}]\in[0,1]^{M\times A}\) where \(\pi_{j,a,t}\) is the probability that agent \(j\) receives arm \(a\).  Given \(S_t\), the realized rewards \(R_t\), and the mean matrix \(\bmu=[\mu_{j,a}]\), we define the instantaneous NSW as
\[
\mathrm{NSW}(S_t, R_t, \bmu, \pi_t) = \prod_{j\in [M]} 
\left(
\begin{split}
&\sum_{a\in S_t} \pi_{j,a,t}\,R_{j,a,t}\\[1mm]
&\quad + \sum_{a\notin S_t} \pi_{j,a,t}\,\mu_{j,a}
\end{split}
\right).
\]
so each agent’s utility contributes multiplicatively, discouraging assignments that leave any agent with a very low expected reward (e.g., a driver stranded without passengers).

The policy must satisfy
\[
\sum_{a}\pi_{j,a,t}\le1 \quad(\forall\,j),\qquad
\sum_{j}\pi_{j,a,t}\le1 \quad(\forall\,a),
\]
ensuring at most one arm per agent and no arm is overbooked. As in \citet{jones2023efficient}, \(\pi_{j,a,t}\) may be fractional, reflecting randomized assignment commonly used in online platforms.

\paragraph{Why Nash Social Welfare?}
We adopt \emph{Nash Social Welfare} (NSW) as our fairness objective for three key reasons:  
(i) \textbf{Pareto efficiency} —maximizing the geometric mean never sacrifices total reward when a Pareto‑improvement is possible;  
(ii) \textbf{Scale invariance} — multiplying all utilities by the same constant leaves the maximizer unchanged, preventing bias due to units of measurement;  
(iii) \textbf{Balanced equity‑efficiency trade‑off} — NSW penalizes inequality more gently than max–min fairness while still discouraging highly skewed assignments, offering a smooth continuum between purely utilitarian and purely egalitarian objectives \citep{Caragiannis2019,THOMSON2011393}.  


\subsection{Probing Overhead}
Probing incurs a cost that increases with the size of the probing set. Since probing provides additional reward realizations but also consumes resources, we model the effective (instantaneous) reward at time \(t\) as follows:
\[
\mathcal{R}_t^{\mathrm{total}} = \Bigl(1-\alpha(|S_t|)\Bigr)\,
\mathbb{E}_{R_t}\Bigl[\mathrm{NSW}(S_t, R_t, \bmu, \pi_t)\Bigr],
\]
where \(\alpha: \{0,1,\ldots,I\}\to [0,1]\) is a non-decreasing overhead function satisfying \(\alpha(0)=0\) and \(\alpha(I)=1\). Here, the expectation \(\mathbb{E}_{R_t}[\cdot]\) is taken over the reward realizations \(R_t\) (with rewards drawn i.i.d. from the corresponding \(D_{j,a}\)). 

This formulation accounts for the trade-off between exploration and exploitation. When more arms are probed, the decision-maker obtains more accurate information about reward distributions, leading to better assignment decisions. However, probing incurs costs, such as time delays, energy consumption, or computational overhead, which reduce the net benefit. For instance, in a wireless scheduling scenario, probing more channels provides better estimates of channel conditions but increases latency, reducing the system’s effective throughput~\citep{abs-2108-03297}. This formulation is appropriate because the probing impact often scales with the system’s overall performance, and similar scaling effects have been modeled in prior work~\citep{abs-2108-03297, xu2025online}. The function \(\alpha(|S_t|)\) captures this diminishing return, ensuring that excessive probing is discouraged.

\subsection{Decision Problem}

At each round \(t\), the decision-maker proceeds in two stages:

\textbf{Probing Stage:} Select a probing set \(S_t \subseteq [A]\) that balances exploration (gathering new information) and exploitation (leveraging current knowledge). Upon selecting \(S_t\), the system probes the corresponding arms and obtains their reward realizations. That is, for each \(a\in S_t\) and each agent \(j\in [M]\), a reward \(R_{j,a,t}\) is sampled i.i.d. from the distribution \(D_{j,a}\).

\textbf{Assignment Stage:} Based on the observed rewards \(R_t\) for arms in \(S_t\) (and the known mean rewards \(\mu_{j,a}\) for arms not in \(S_t\)), choose a probabilistic assignment policy \(\pi_t \in \Delta^A\) to assign arms to agents.

The decision-maker's goal is to select \((S_t,\pi_t)\) in each round so as to maximize \(\mathcal{R}_t^{\mathrm{total}}\) and thereby achieve sublinear cumulative regret.

\subsection{Regret and Performance Measure}
To evaluate the performance of our online learning approach, we define regret by comparing the achieved reward with the optimal reward obtained in an offline setting. 

In the offline setting, the decision-maker has full knowledge of $\bmu$ and can directly compute the optimal probing set and assignment policy \((S_t^*,\pi_t^*)\) that maximizes the expected NSW objective:
\[
(S_t^*, \pi_t^*) = \arg\max_{S, \pi} \Bigl(1-\alpha(|S|)\Bigr) \mathbb{E}_{R_t}\Bigl[\mathrm{NSW}(S, R_t, \bmu, \pi)\Bigr].
\]
This serves as a performance benchmark.

In contrast, the online setting requires the decision-maker to learn \(\bmu\) over time while making sequential decisions based on observed rewards. The cumulative regret measures the performance gap between the online strategy and the offline optimal policy:
\begin{equation}\label{regret}
\begin{aligned}
\mathcal{R}_{\mathrm{regret}}(T)
&= \sum_{t=1}^T \Bigl[\,
   \Bigl(1-\alpha\bigl(|S_t^*|\bigr)\Bigr)\, \\
&\cdot \; \mathbb{E}_{R_t}\Bigl[\mathrm{NSW}(S_t^*, R_t, \bmu, \pi_t^*)\Bigr]
     \;-\;\mathcal{R}_t^{\mathrm{total}} 
\Bigr].
\end{aligned}
\end{equation}

Efficient algorithms aim to ensure that \(\mathcal{R}_{\mathrm{regret}}(T)\) grows sublinearly with \(T\), thereby balancing fairness, overall utility, and the cost of probing.

\section{The Offline Setting}

In the offline setting, all reward distributions are known in advance, reducing the problem to a static optimization over the probing set \(S\) without the time index \(t\). Given any fixed \(S\), the optimal assignment policy \(\pi\) \citep{cole2015approximating} can be computed, allowing the effective reward to be expressed as a function of \(S\).

This optimization is computationally challenging, as similar problems have been shown to be NP-hard \citep{goel2006asking}. To address this, we develop a greedy algorithm based on submodular maximization techniques to obtain an approximate solution while accounting for probing costs.

\subsection{Optimization Objective and Probing Utility}

Since the offline setting is static, we drop the time index \(t\). The decision-maker is now tasked with selecting a probing set \(S \subseteq [A]\) from the \(A\) available arms. For each agent \(j\in [M]\) and each arm \(a\in [A]\), let \(D_{j,a}\) denote the known reward distribution with CDF \(F_{j,a}\) and mean \(\mu_{j,a}\). When an arm \(a\) is probed (i.e. \(a\in S\)), a reward realization \(R_{j,a}\) is observed (drawn i.i.d. from \(D_{j,a}\)); otherwise, the decision-maker relies on the mean reward \(\mu_{j,a}\).

Given a probing set \(S\), an assignment policy \(\pi = [\pi_{j,a}]\) can be applied to assign arms to agents. For any fixed \(S\), one may compute the optimal assignment policy that maximizes the expected Nash Social Welfare \citep{cole2015approximating}. Hence, we define the effective objective as
\[
\mathcal{R}(S) = \Bigl(1-\alpha(|S|)\Bigr) \cdot \mathbb{E}_{R}\Bigl[\mathrm{NSW}(S, R, \bmu, \pi^*(S))\Bigr],
\]
\(\pi^*(S)\) denotes the optimal assignment policy given \(S\), and \(\alpha(|S|)\) is the probing overhead function.


Directly optimizing \( \mathcal{R}(S) \) is challenging due to both the combinatorial nature of the set \( S \) and the multiplicative form of \(\mathrm{NSW}\). To address this, we decompose \( \mathcal{R}(S) \) into two components.

\paragraph{Defining a Simplified Utility \( g(S) \).}  
To isolate the contribution of probed arms and simplify the multiplicative structure, we define
\[
g(S) \;=\; \max_{\pi \in \Delta^A_S} \;\mathbb{E}\!\Biggl[
\prod_{j \in [M]} \Bigl(\sum_{a \in S} \pi_{j,a} \, R_{j,a}\Bigr)
\Biggr],
\]
where 
\[
\begin{aligned}
\Delta^A_S = \Bigl\{ \pi \in \mathbb{R}_+^{M \times A} \mid 
& \pi_{j,a} = 0, \quad \forall a\notin S, \; \forall j \in [M], \\
& \sum_{a \in S} \pi_{j,a} \leq 1, \quad \forall j \in [M] 
\Bigr\}.
\end{aligned}
\]

Since only arms in \(S\) yield random rewards, this formulation ensures that \(g(S)\) is naturally monotonic in \(S\). 


\paragraph{Defining the Non-probing Utility \( h(S) \).}  
Complementarily, we define the non-probing utility as
\[
h(S) \;=\; \max_{\pi \in \Delta^A_{[A]\setminus S}} \prod_{j \in [M]} \left( \sum_{a \notin S} \pi_{j,a}\, \mu_{j,a} \right),
\]
where 
\[
\begin{aligned}
\Delta^A_{[A]\setminus S} = \Bigl\{ \pi \in \mathbb{R}_+^{M \times A} \mid 
& \pi_{j,a} = 0, \quad \forall a\in S, \; \forall j \in [M], \\
& \sum_{a \in [A] \setminus S} \pi_{j,a} \leq 1, \quad \forall j \in [M] 
\Bigr\}.
\end{aligned}
\]

This formulation captures the baseline utility achievable by assigning exclusively among the unprobed arms. Similar to \( g(S) \), the assignment policy ensures that each agent's total assignment probability does not exceed one, allowing partial assignments across multiple arms.




\paragraph{Log Transformation and Piecewise‑Linear Approximation.}
Taking logarithms converts the product $\prod_{j}(\dots)$ into the sum
$\log g(S)$.  Unfortunately, the resulting set function is still
\emph{non‑additive}—its marginal gain depends on the current value
$g(S)$—and generally non‑submodular, so classical greedy guarantees no
longer apply.  Even a much simpler problem is already intractable:
selecting at most $k$ items to maximise a strictly concave, increasing
function of their total weight is NP‑hard to approximate within any
constant factor \citep{AhmedAtamturk2017}.  Consequently, directly
optimising $\log g(S)$ is computationally challenging.  To regain
tractability, we adopt the classical idea of piecewise‑linear upper
envelopes for concave functions\citep{HorstTuyBook,TawarmalaniSahinidisBook}.
Specifically, we construct a piecewise-linear, concave, and non-decreasing function $\phi:[0,x_{\max}]\to\mathbb{R}$ such that for all $x>0$, we have $\phi(x) \ge \log(x)$. We then define $f_{\mathrm{upper}}(S) = \phi\bigl(g(S)\bigr)$.
This construction yields an objective that (i) upper-bounds \(\log\bigl(g(S)\bigr)\) and (ii) exhibits diminishing returns, as explained next.

\subsection{Piecewise-Linear Upper Bound Construction}

To approximate \(\log\bigl(g(S)\bigr)\) in a tractable manner, we construct \(\phi\) as follows:
\begin{itemize}[leftmargin=*]
    \item \textbf{Upper Bound Assumption:} Assume that $x_{\max}$ is an upper bound on $g(S)$, i.e., for all $S$, $g(S) \le x_{\max}$.
    \item \textbf{Partitioning:} Partition the interval $[0,x_{\max}]$ into segments with breakpoints $\tau_0, \tau_1, \dots, \tau_L$ (with $0 < \tau_0 < \tau_1 < \cdots < \tau_L = x_{\max}$). Moreover, choose the partition sufficiently fine so that there exists a constant $\eta > 0$ (with $\eta$ sufficiently small) such that $\tau_{i+1} - \tau_i \le \eta$ for all $i$, and, importantly, for any $S \subseteq [A]$ and any arm $a\notin S$, $g(S\cup\{a\}) - g(S) \le \eta$. This guarantees that for every $S$ and $a$, both $g(S)$ and $g(S\cup\{a\})$ lie within the same linear segment of $\phi$.
    \item \textbf{Tangent Lines:} For each breakpoint $\tau_i$, define the tangent line $T_{\tau_i}(z) = \log(\tau_i) + \frac{1}{\tau_i}(z-\tau_i)$. By the concavity of $\log(\cdot)$, for every $z \in [\tau_i,\tau_{i+1}]$ we have $\log(z) \le T_{\tau_i}(z)$.
\end{itemize}
We then define $\phi(z) = \max_{0\le i < L} T_{\tau_i}(z)$, so that for all $z>0$, $\phi(z) \ge \log(z)$.
By construction, \(\phi\) is concave and non-decreasing. In particular, if \(x \in [\tau_i,\tau_{i+1}]\) and \(y \in [\tau_j,\tau_{j+1}]\) with \(x < y\) (so that \(\tau_i \le \tau_j\)), the slopes on these segments are given by \(1/\tau_i\) and \(1/\tau_j\), respectively. Since \(\tau_i < \tau_j\), we have $\frac{1}{\tau_i} \ge \frac{1}{\tau_j}$. 
Thus, for any \(0<x<y\le x_{\max}\) and any increments \(d,d'\ge 0\) (with \(d,d'\le \eta\) so that \(x+d\) and \(y+d'\) lie within single linear segments), it holds that
\[
\frac{\phi(x+d)-\phi(x)}{d} = \frac{1}{\tau_i} \ge \frac{1}{\tau_j} = \frac{\phi(y+d')-\phi(y)}{d'}.
\]

\noindent \textbf{Additional Assumption.} In order to handle the case when the increments differ (i.e., when $d < d'$), we assume (see Appendix for a detailed proof) that the partition is sufficiently fine so that if $x \in [\tau_i,\tau_{i+1}]$ and $y \in [\tau_j,\tau_{j+1}]$ with $\tau_i \le \tau_j$, then $\frac{d}{d'} \ge \frac{\tau_i}{\tau_j}$. In addition, for analytical convenience, we view the offline benchmark through the allocation classes associated with probed and unprobed arms.


Finally, define the composed function $f_{\mathrm{upper}}(S)=\phi(g(S))$.
In Lemma~\ref{submodular}, we show that $f_{\mathrm{upper}}(S)$ is submodular, which enables the approximation guarantee of our greedy probing strategy. The proof is provided in the Appendix.

\subsection{Submodularity Properties}
Utilizing the construction above, we establish the following key properties (proofs are in the Appendix). These lemmas are fundamental for proving Theorem~\ref{thm:offline-approx}, as they help establish the submodularity and approximation guarantees of Algorithm~\ref{alg:offline-greedy}.

\begin{lemma}[Monotonicity of $g(S)$]\label{lemma:monotonicity_g}
For any $S \subseteq T \subseteq [A]$, we have $g(S) \le g(T)$.
\end{lemma}

\begin{lemma}[Monotonicity]\label{lemma:monotonicity}
For any \(S \subseteq T \subseteq [A]\), we have \( f_{\text{upper}}(S) \le f_{\text{upper}}(T) \).
\end{lemma}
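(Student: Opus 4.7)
The plan is to derive the monotonicity of $f_{\mathrm{upper}}$ as an immediate consequence of two already established facts: the monotonicity of the inner function $g$ (Lemma~\ref{lemma:monotonicity_g}) and the monotonicity of the outer function $\phi$ (established during the piecewise-linear construction in Section~4.2). Since $f_{\mathrm{upper}}(S) = \phi\bigl(g(S)\bigr)$ is a composition, applying these two facts in sequence should suffice.

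Concretely, I would proceed as follows. Fix arbitrary $S \subseteq T \subseteq [A]$. First, invoke Lemma~\ref{lemma:monotonicity_g} to conclude $g(S) \le g(T)$. Second, recall from the construction that $\phi(z) = \max_{0 \le i < L} T_{\tau_i}(z)$ is a pointwise maximum of affine functions whose slopes $1/\tau_i$ are strictly positive; hence each $T_{\tau_i}$ is non-decreasing, and so is the max, making $\phi$ non-decreasing on $[0, x_{\max}]$. Third, combine these two facts to obtain
\[
f_{\mathrm{upper}}(S) \;=\; \phi\bigl(g(S)\bigr) \;\le\; \phi\bigl(g(T)\bigr) \;=\; f_{\mathrm{upper}}(T),
\]
which is the desired inequality.

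There is essentially no main obstacle here: both ingredients are already in hand, so the proof reduces to a short composition argument. The only mild care needed is to confirm that both $g(S)$ and $g(T)$ lie in the domain $[0,x_{\max}]$ on which $\phi$ is defined, which follows directly from the upper bound assumption $g(S) \le x_{\max}$ imposed in Section~4.2. For completeness, I would also remark that the argument does not rely on the refined partition condition $\tau_{i+1}-\tau_i \le \eta$ nor on the tangent-line inequality $\phi(z) \ge \log(z)$; those finer properties will only become relevant later when establishing submodularity, not monotonicity.
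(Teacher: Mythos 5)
Your proof is correct and follows exactly the paper's argument: apply Lemma~\ref{lemma:monotonicity_g} to get \(g(S) \le g(T)\), then use the fact that \(\phi\) is non-decreasing to conclude \(\phi(g(S)) \le \phi(g(T))\). The extra justification you give for why \(\phi\) is non-decreasing (pointwise maximum of affine functions with positive slopes) and the domain check are sound and merely make explicit what the paper leaves implicit.
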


\begin{lemma}[Submodularity]\label{submodular}
For any \( S \subseteq T \subseteq [A] \) and \( a \notin T \),
\[
f_{\text{upper}}(S \cup \{a\}) - f_{\text{upper}}(S)
\;\;\ge\;\;
f_{\text{upper}}(T \cup \{a\}) - f_{\text{upper}}(T).
\]
\end{lemma}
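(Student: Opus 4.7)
The plan is to exploit the piecewise-linear structure of $\phi$ to reduce submodularity of $f_{\mathrm{upper}}=\phi\circ g$ to a comparison of local slopes of $\phi$ at $g(S)$ and $g(T)$, together with the sign behaviour of the marginal gains $g(S\cup\{a\})-g(S)$ and $g(T\cup\{a\})-g(T)$.

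\textbf{Setup.} Fix $S\subseteq T\subseteq[A]$ and $a\notin T$. By Lemma~\ref{lemma:monotonicity_g}, $g(S)\le g(T)$ and $g(S)\le g(S\cup\{a\})$, $g(T)\le g(T\cup\{a\})$. Pick indices $i\le j$ so that $g(S)\in[\tau_i,\tau_{i+1}]$ and $g(T)\in[\tau_j,\tau_{j+1}]$; monotonicity of $g$ forces $\tau_i\le\tau_j$. The partition is chosen fine enough that every marginal gain of $g$ is at most $\eta$, so $g(S\cup\{a\})$ lies in the same segment $[\tau_i,\tau_{i+1}]$ as $g(S)$, and $g(T\cup\{a\})$ lies in the same segment $[\tau_j,\tau_{j+1}]$ as $g(T)$. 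On each segment $\phi$ is affine with slope $1/\tau_i$ and $1/\tau_j$ respectively, so setting $d:=g(S\cup\{a\})-g(S)\ge 0$ and $d':=g(T\cup\{a\})-g(T)\ge 0$, we get the clean expressions
\[
f_{\mathrm{upper}}(S\cup\{a\})-f_{\mathrm{upper}}(S)=\frac{d}{\tau_i},\qquad
f_{\mathrm{upper}}(T\cup\{a\})-f_{\mathrm{upper}}(T)=\frac{d'}{\tau_j}.
\]

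\textbf{Reduction and case split.} The desired inequality becomes $d/\tau_i\ge d'/\tau_j$, i.e.\ $d\,\tau_j\ge d'\,\tau_i$. I would split on the relative magnitudes of $d$ and $d'$. If $d\ge d'$, then since $\tau_i\le\tau_j$, both factors on the left dominate their counterparts on the right and the inequality is immediate: $d/\tau_i\ge d'/\tau_i\ge d'/\tau_j$. If instead $d<d'$, the argument is no longer monotone in a single coordinate and this is where the \emph{Additional Assumption} comes in: the partition is chosen fine enough that $d/d'\ge \tau_i/\tau_j$, which rearranges to exactly $d\,\tau_j\ge d'\,\tau_i$, i.e.\ $d/\tau_i\ge d'/\tau_j$. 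Combining the two cases yields the submodular inequality.

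\textbf{Main obstacle.} The routine half is the slope argument: once both pairs $(g(S),g(S\cup\{a\}))$ and $(g(T),g(T\cup\{a\}))$ live in single linear pieces of $\phi$, the computation reduces to arithmetic on two positive slopes $1/\tau_i\ge 1/\tau_j$. The genuinely delicate step is the second case, $d<d'$, because here submodularity of $f_{\mathrm{upper}}$ cannot be derived from submodularity of $g$ alone (which the paper does not claim); one must pay for the increase in marginal gain $d\mapsto d'$ with a sufficiently stronger decrease in slope $1/\tau_i\mapsto 1/\tau_j$. The Additional Assumption is precisely calibrated to give this trade-off, and invoking it cleanly is the crux of the proof; the appendix presumably justifies that such a partition can be constructed whenever $g$ has bounded marginal gains on $[0,x_{\max}]$.
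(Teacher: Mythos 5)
Your proposal is correct and follows essentially the same route as the paper's proof: reduce both marginal gains to the affine pieces of $\phi$ (using that each increment of $g$ is at most $\eta$ so each pair stays in one segment), obtain $d/\tau_i$ versus $d'/\tau_j$, and split on $d\ge d'$ (immediate from $\tau_i\le\tau_j$) versus $d<d'$ (invoke the Additional Assumption $d/d'\ge\tau_i/\tau_j$). Your observation that the $d<d'$ case is where the Additional Assumption carries all the weight, since submodularity of $g$ itself is never established, matches exactly where the paper's argument places its burden.
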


\begin{lemma}[Monotonicity of $h(S)$]\label{lemma:monotonicity_h}
For any $S \subseteq T \subseteq [A]$, we have $h(S) \ge h(T)$.
\end{lemma}

\begin{lemma}\label{lemma:upper_bound}
For any probing set \( S \), we have:
\begin{align*}
\mathcal{R}(S) &= (1 - \alpha(|S|)) \mathbb{E}_R\Bigl[\mathrm{NSW}(S, R, \bmu, \pi^*(S))\Bigr] \notag \\
&\leq (1 - \alpha(|S|)) (g(S) + h(S)).
\end{align*}

\end{lemma}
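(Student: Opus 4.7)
The plan is to decompose the NSW-maximizing policy $\pi^*(S)$ into a part supported on $S$ and a part supported on $[A]\setminus S$, and then bound the contributions separately using the very definitions of $g$ and $h$. Concretely, I would set $\pi^{(1)}_{j,a}=\pi^*_{j,a}\,\mathbb{1}[a\in S]$ and $\pi^{(2)}_{j,a}=\pi^*_{j,a}\,\mathbb{1}[a\notin S]$. Because the rows of $\pi^*(S)$ sum to at most one, the restricted policies satisfy $\pi^{(1)}\in\Delta^A_S$ and $\pi^{(2)}\in\Delta^A_{[A]\setminus S}$, so they are feasible in the definitions of $g(S)$ and $h(S)$, respectively. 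Writing $X_j=\sum_{a\in S}\pi^*_{j,a}R_{j,a}$ (random) and $Y_j=\sum_{a\notin S}\pi^*_{j,a}\mu_{j,a}$ (deterministic), the NSW becomes $\prod_j(X_j+Y_j)$, with $X_j,Y_j\ge 0$ and $X_j+Y_j\le 1$ by the row constraint and the boundedness of rewards.

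The second step is to bridge $\mathbb{E}_R\bigl[\prod_j(X_j+Y_j)\bigr]$ to the sum $g(S)+h(S)$. My plan is to derive a separable upper bound of the form $\mathbb{E}_R\bigl[\prod_j(X_j+Y_j)\bigr]\le \mathbb{E}_R\bigl[\prod_j X_j\bigr]+\prod_j Y_j$, after which the lemma follows from $\mathbb{E}_R[\prod_j X_j]\le g(S)$ and $\prod_j Y_j\le h(S)$ (both by feasibility of $\pi^{(1)}$ and $\pi^{(2)}$ in the respective optimizations), followed by multiplication by $(1-\alpha(|S|))\ge 0$. To establish the separable upper bound I would exploit two structural features: the independence of $R_{j,a}$ across $j$, which lets $\mathbb{E}_R$ factor over the product, and the constraint $X_j+Y_j\le 1$, which controls the magnitude of the mixed terms arising from expanding the product $\prod_j(X_j+Y_j)=\sum_{T\subseteq[M]}\prod_{j\in T}X_j\prod_{j\notin T}Y_j$.

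The main obstacle is precisely this separable bound: the naive inequality $\prod_j(x_j+y_j)\le\prod_j x_j+\prod_j y_j$ does \emph{not} hold in general for nonnegative $x_j,y_j$ (take $x_j=y_j=1/2$ with $M$ large to see the gap), so the argument must use the boundedness $X_j+Y_j\le 1$ in an essential way, rather than treat the two summands symmetrically. My plan for this step is to handle the expansion term by term: the all-$S$ term $\prod_j X_j$ is absorbed by $g(S)$; the all-$\bar S$ term $\prod_j Y_j$ is absorbed by $h(S)$; and each mixed term $\prod_{j\in T}X_j\prod_{j\notin T}Y_j$ with $\emptyset\subsetneq T\subsetneq[M]$ is bounded using $X_j\le 1$ and $Y_j\le 1$ together with the fact that $g(S)$ and $h(S)$ are maxima over the broader, column-unconstrained feasible sets $\Delta^A_S$ and $\Delta^A_{[A]\setminus S}$, which leaves slack beyond what a single feasible policy achieves. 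If the direct term-by-term accounting does not close, a fallback is to invoke log-concavity of the NSW (concavity of $\sum_j\log(X_j+Y_j)$) combined with the bounded-sum constraint to compare the NSW of $\pi^*$ with the two pure allocations underlying $g$ and $h$; this is the riskiest step and where I expect the proof to require the most care.
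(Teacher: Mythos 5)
Your setup coincides with the paper's proof step for step: the same split $X_j=\sum_{a\in S}\pi^*_{j,a}(S)R_{j,a}$, $Y_j=\sum_{a\notin S}\pi^*_{j,a}(S)\mu_{j,a}$, the same observation that the restrictions of $\pi^*(S)$ to $S$ and to $[A]\setminus S$ are feasible for $\Delta^A_S$ and $\Delta^A_{[A]\setminus S}$ (hence $\mathbb{E}_R[\prod_j X_j]\le g(S)$ and $\prod_j Y_j\le h(S)$), and the same final multiplication by $1-\alpha(|S|)\ge 0$. Where you diverge is at the one step that actually carries the lemma: the separable bound $\mathbb{E}_R[\prod_j(X_j+Y_j)]\le\mathbb{E}_R[\prod_j X_j]+\prod_j Y_j$. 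The paper obtains it from a per-agent concavity/endpoint argument ($\max\{X_j,Y_j\}\ge\lambda X_j+(1-\lambda)Y_j$ for all $\lambda$, hence ``the maximum product is achieved by a pure strategy,'' hence $\prod_j(X_j+Y_j)\le\max\{\prod_j X_j,\prod_j Y_j\}\le\prod_j X_j+\prod_j Y_j$), whereas you correctly flag this bound as the crux, observe that the naive version is false, and then leave it unproven. That is a genuine gap: without this inequality nothing in the proposal connects $\mathbb{E}_R[\mathrm{NSW}]$ to $g(S)+h(S)$.

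Neither of your rescue plans closes the gap. Your own counterexample $x_j=y_j=1/2$ already satisfies $x_j+y_j\le 1$, so the boundedness constraint cannot by itself rescue the pointwise inequality, and it equally contradicts the intermediate claim $\prod_j(x_j+y_j)\le\max\{\prod_j x_j,\prod_j y_j\}$ on which the paper's version of the step rests. The term-by-term accounting fails because the $2^M-2$ mixed terms $\prod_{j\in T}X_j\prod_{j\notin T}Y_j$ have no dominating counterpart inside $g(S)+h(S)$: the ``slack'' from dropping the column constraints in $\Delta^A_S$ and $\Delta^A_{[A]\setminus S}$ is instance-dependent and can vanish. The log-concavity fallback hits the same wall, because the per-agent endpoint argument sends different agents to different pure sides; with heterogeneous agents (say agent $1$ has reward $1$ only on an arm in $S$ and agent $2$ has mean $1$ only on an arm in $[A]\setminus S$) the mixed allocation attains NSW equal to $1$ while both the all-$S$ product and the all-$(\,[A]\setminus S)$ product --- and hence both $g(S)$ and $h(S)$ --- are driven to $0$. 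So the mixed terms genuinely cannot be absorbed by $g(S)+h(S)$ via any per-agent pointwise argument, and your proposal as written does not prove the lemma; to complete it you would need to either reproduce and justify the paper's concavity step or find a substantively different route to the separable bound.
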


\subsection{Greedy Algorithm and Approximation}

Finally, we can greedily pick arms one by one to maximize the incremental gain in \(f_{\text{upper}}(S)\), subject to budget \(I\). By standard results on submodular maximization with cardinality constraints \citep{Iyer2013}, this yields a \((1-1/e)\)-approximation for maximizing \(f_{\text{upper}}\). Moreover, we can combine it with the overhead term \((1-\alpha(|S|))\) to trade off between total payoff and probing cost.

Algorithm~\ref{alg:offline-greedy} first initializes $I{+}1$ empty probing sets \( S_0, S_1, \dots, S_I \) (lines 1–2). In each iteration \( i \) (lines 3–5), it selects the arm \( a \) that maximizes the marginal gain \( f_{\mathrm{upper}}(S_{i-1} \cup \{a\}) - f_{\mathrm{upper}}(S_{i-1}) \) and updates \( S_i \). The candidate sets \( S_j \) are then sorted by the adjusted reward \( (1-\alpha(|S_j|)) f_{\mathrm{upper}}(S_j) \) (line 7). The algorithm iterates through these sets (lines 8–14), returning an empty set if the highest-ranked \( S_j \) does not exceed \( h(\emptyset) \) (lines 9–10). Otherwise, it computes the expected reward \( \mathcal{R}(S_j) \) (line 11) and proceeds to the next candidate if \( (1-\alpha(|S_j|)) f_{\mathrm{upper}}(S_j) \) exceeds \( \zeta \mathcal{R}(S_j) \) (lines 11–12). If all conditions are met, \( S_j \) is returned as the final probing set $S^{\mathrm{pr}}$ (lines 14).

\begin{algorithm}[ht]
\caption{Offline Greedy Probing}
\label{alg:offline-greedy}
\begin{algorithmic}[1]

\Statex \textbf{Input:} $\{F_{m,a}\}_{m\in[M],\,a\in[A]},\;\alpha(\cdot),\;I,\;\zeta \geq 1.$
\Statex \textbf{Output:} $S^{\mathrm{pr}}$

\For{$i = 0$ to $I$} 
    \State $S_i \gets \emptyset$ 
\EndFor

\For{$i = 1$ to $I$} 
    \State $a \gets \displaystyle \arg\max\limits_{a \in [A]\setminus S_{i-1}} \Biggl[ \begin{aligned} & f_{\mathrm{upper}}(S_{i-1} \cup \{a\}) \\ & - f_{\mathrm{upper}}(S_{i-1}) \end{aligned} \Biggr]$
    \State $S_i \gets S_{i-1} \cup \{a\}$
\EndFor

\State $\Pi \gets \{ 0,1,\dots,I \}$
\State \textbf{sort} $\Pi$ so that if $i$ precedes $j$, then $(1-\alpha(|S_i|))\,f_{\mathrm{upper}}(S_i) \geq (1-\alpha(|S_j|))\,f_{\mathrm{upper}}(S_j)$

\For{each $j$ in $\Pi$ (largest to smallest upper-bound)}  
    \If{$(1-\alpha(|S_j|))\,f_{\mathrm{upper}}(S_j) < h(\emptyset)$}
        \State $S^{\mathrm{pr}} \leftarrow \emptyset$; \Return $S^{\mathrm{pr}}$
    \EndIf
    \If{$(1-\alpha(|S_j|))\,f_{\mathrm{upper}}(S_j) > \zeta \mathcal{R}(S_j)$}
        \State \textbf{continue}
    \Else
        \State $S^{\mathrm{pr}} \leftarrow S_j$; \Return $S^{\mathrm{pr}}$
    \EndIf
\EndFor

\end{algorithmic}
\end{algorithm}

\ignore{
\begin{theorem}\label{thm:offline-approx}
Let $S^*$ be an optimal subset maximizing $\mathcal{R}(S)$.
Then the set $S^{\mathrm{pr}}$ returned by Algorithm~\ref{alg:offline-greedy}, for any $\zeta \geq 1$, satisfies
\[
\mathcal{R}(S^{\mathrm{pr}}) 
\;\;\ge\;
\frac{\,e-1\,}{\,2e-1\,}\; \frac{1}{\zeta} 
\mathcal{R}(S^*).
\]
\end{theorem}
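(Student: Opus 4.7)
The plan is to derive two independent lower bounds on $\mathcal{R}(S^{\mathrm{pr}})$---one comparable to the ``probing contribution'' of $\mathcal{R}(S^*)$ and one to the ``no-probing baseline''---and combine them via a short two-case arithmetic argument. First, by Lemma~\ref{lemma:upper_bound} and the monotonicity of $h$ in Lemma~\ref{lemma:monotonicity_h},
\[
\mathcal{R}(S^*) \;\le\; (1-\alpha(|S^*|))\bigl(g(S^*)+h(S^*)\bigr) \;\le\; (1-\alpha(|S^*|))\,g(S^*) + h(\emptyset).
\]
Writing $A \coloneqq (1-\alpha(|S^*|))\,g(S^*)$ and $B \coloneqq h(\emptyset) = \mathcal{R}(\emptyset)$ (since $\alpha(0)=0$ and all rewards are deterministic when $S=\emptyset$), this becomes $\mathcal{R}(S^*) \le A + B$, which is the target to match up to the claimed factor.

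Next, to control the probing contribution, I would invoke Lemmas~\ref{lemma:monotonicity} and~\ref{submodular} to assert that $f_{\mathrm{upper}}$ is monotone submodular, and then apply the classical cardinality-constrained greedy bound \citep{Iyer2013} to each prefix $S_k$ produced in lines~3--6: $f_{\mathrm{upper}}(S_k) \ge (1-1/e) \max_{|S|\le k} f_{\mathrm{upper}}(S)$. Weighting by $(1-\alpha(k))$ and maximizing over $k \in \{0,\ldots,I\}$ yields
\[
\max_k (1-\alpha(|S_k|))\,f_{\mathrm{upper}}(S_k) \;\ge\; (1-1/e)\,(1-\alpha(|S^*|))\,f_{\mathrm{upper}}(S^*).
\]
The construction of $\phi$ in Section~4.2---as a concave, non-decreasing, piecewise-linear upper bound on $\log$ with a sufficiently fine partition---then lets the right-hand side be translated into a quantity comparable to $(1-1/e)\,A$, with any residual slack absorbed by the algorithm's factor $\zeta$.

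A case analysis on the output completes the proof. If $S^{\mathrm{pr}}$ is nonempty, the $\zeta$-check at lines~12--13 gives $\mathcal{R}(S^{\mathrm{pr}}) \ge (1-\alpha(|S^{\mathrm{pr}}|))\,f_{\mathrm{upper}}(S^{\mathrm{pr}})/\zeta$, while the $h(\emptyset)$-check at lines~9--10 ensures $(1-\alpha(|S^{\mathrm{pr}}|))\,f_{\mathrm{upper}}(S^{\mathrm{pr}}) \ge B$, so together
\[
\mathcal{R}(S^{\mathrm{pr}}) \;\ge\; \max\!\bigl\{\,(1-1/e)\,A/\zeta,\; B/\zeta\,\bigr\}.
\]
If instead $S^{\mathrm{pr}} = \emptyset$, then $\mathcal{R}(S^{\mathrm{pr}}) = B$, and a dual argument---using that every candidate either had $(1-\alpha)\,f_{\mathrm{upper}} < B$ (so by the greedy bound the probing contribution is necessarily small) or was skipped by the $\zeta$ condition---establishes the same max-type inequality. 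Combining with $\mathcal{R}(S^*) \le A+B$, a short two-case calculation depending on whether $A(1-1/e) \ge B$ then gives the ratio $\tfrac{e-1}{(2e-1)\zeta}$: in the first regime $\mathcal{R}(S^*) \le A\bigl(1+(1-1/e)\bigr) = A(2e-1)/e$, so the $A$-bound dominates and the ratio is exactly $\tfrac{e-1}{(2e-1)\zeta}$; the other regime gives the same ratio by $\mathcal{R}(S^*) \le B\cdot(2e-1)/(e-1)$.

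The main obstacle is the transfer step in the second paragraph. Because $f_{\mathrm{upper}} = \phi \circ g$ with $\phi \ge \log$, the $(1-1/e)$ guarantee is really for a \emph{logarithmic} surrogate for $g$, not for $g$ itself, so the classical Nemhauser-type ratio does not directly translate to a multiplicative bound on $A = (1-\alpha)\,g$. Overcoming this is where the ``additional assumption'' of Section~4.2 (the bound on slope ratios across adjacent linear segments of $\phi$) and the algorithm's $\zeta$-check do the work jointly: the fine partition keeps $\phi(g(\cdot))$ close enough to $\log g(\cdot)$ that the greedy approximation on $f_{\mathrm{upper}}$ carries over to $g$, while $\zeta$ buffers any residual gap between the surrogate and the true effective reward $\mathcal{R}$.
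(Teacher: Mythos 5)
Your overall architecture is the same as the paper's: bound $\mathcal{R}(S^*)$ by $(1-\alpha(|S^*|))g(S^*)+h(\emptyset)$ via Lemma~\ref{lemma:upper_bound} and Lemma~\ref{lemma:monotonicity_h}, apply the $(1-1/e)$ greedy guarantee to the monotone submodular surrogate $f_{\mathrm{upper}}$, and then use the algorithm's two checks --- $(1-\alpha)f_{\mathrm{upper}}(S_j)\ge h(\emptyset)$ and $(1-\alpha)f_{\mathrm{upper}}(S_j)\le \zeta\,\mathcal{R}(S_j)$ --- to convert back to $\mathcal{R}(S^{\mathrm{pr}})$. Your max-of-two-bounds formulation with a two-case split is arithmetically equivalent to the paper's single additive chain $\mathcal{R}(S^*)\le \tfrac{e}{e-1}\zeta\,\mathcal{R}(S^{\mathrm{pr}})+\mathcal{R}(S^{\mathrm{pr}})$, so the final ratio $\tfrac{e-1}{(2e-1)\zeta}$ comes out the same way.

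The one step you explicitly leave open --- transferring the Nemhauser guarantee on $f_{\mathrm{upper}}=\phi\circ g$ to the quantity $A=(1-\alpha(|S^*|))\,g(S^*)$ --- is exactly where your writeup has a genuine gap, and the paper does \emph{not} close it with the fine-partition or slope-ratio assumptions as you conjecture. Instead it inserts the single inequality $(1-\alpha(|S^*|))\,g(S^*)\le(1-\alpha(|S^*|))\,f_{\mathrm{upper}}(S^*)$, i.e.\ it uses $\phi(x)\ge x$ on the range of $g$ (treating $f_{\mathrm{upper}}$ as an upper bound on $g$ itself, not merely on $\log g$), and then never returns to $g$: the chain stays in $f_{\mathrm{upper}}$ all the way to the selected index $j$, at which point the $\zeta$-filter $(1-\alpha(|S_j|))f_{\mathrm{upper}}(S_j)\le\zeta\,\mathcal{R}(S_j)$ drops it directly onto the true objective. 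So the ``residual slack'' is absorbed entirely by $\zeta$ on the output side and by the asserted domination $g\le f_{\mathrm{upper}}$ on the $S^*$ side; the additional assumption $d/d'\ge\tau_i/\tau_j$ is used only inside the proof of Lemma~\ref{submodular} to establish submodularity, not in the transfer. If you adopt the paper's inequality $g(S^*)\le f_{\mathrm{upper}}(S^*)$ (which is the intended reading of the ``upper bound'' construction, though it is stronger than the stated property $\phi\ge\log$), your argument closes and matches the paper's. A further shared caveat, which affects the paper's proof as much as yours: the loop may skip the candidate maximizing $(1-\alpha(|S_i|))f_{\mathrm{upper}}(S_i)$ because of the $\zeta$-check and return a later element of $\Pi$, so the identification of the returned set with the argmax $j$ deserves more care than either writeup gives it.
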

}

\begin{theorem}\label{thm:offline-approx}
Let \( S^* \) be an optimal subset maximizing \( \mathcal{R}(S) \).
Let
\[
j^\star \in \arg\max_{i\in\{0,\ldots,I\}}
\left\{(1-\alpha(|S_i|)) f_{\mathrm{upper}}(S_i)\right\}.
\]
Define
\[
\zeta_0 \;:=\;
\frac{(1-\alpha(|S_{j^\star}|))\, f_{\mathrm{upper}}(S_{j^\star})}{\mathcal{R}(S_{j^\star})},
\]
and assume \(\zeta \ge \max\{1,\zeta_0\}\).
Then the set \( S^{\mathrm{pr}} \) returned by Algorithm~\ref{alg:offline-greedy} satisfies
\[
\mathcal{R}(S^{\mathrm{pr}})
\;\ge\;
\frac{e-1}{2e-1}\cdot\frac{1}{\zeta}\,\mathcal{R}(S^*).
\]
\end{theorem}

Theorem~\ref{thm:offline-approx} provides the main theoretical guarantee of our work, with a detailed proof in the Appendix. Our algorithm effectively balances exploring additional arms and probing costs, ensuring a near-optimal reward in the offline setting.

\section{The Online Setting}

In the online setting, we consider a system with \(M\) agents and \(A\) arms over \(T\) rounds. Unlike the offline setting where rewards are known, here they are unknown, requiring a balance between exploration and exploitation.

The Online Fair Multi-Agent UCB with Probing (OFMUP) algorithm (Algorithm~\ref{alg:online-fair-ucb}) maintains empirical statistics for each agent--arm pair \((j,a)\), including an empirical CDF estimate \(\widehat{F}_{j,a}\), and constructs upper confidence bounds (UCBs). Our \textbf{key contribution} in the online setting is integrating Algorithm~\ref{alg:offline-greedy} into the online framework and designing a \textbf{novel UCB-based} strategy. This enables efficient learning while ensuring fairness across agents. The procedure executes the following steps:

\noindent\textbf{1. Initialization (Lines~1--4).}  
Each agent--arm pair $(j,a)$ starts with: $\widehat{F}_{j,a,t} \gets 1$, $N_{j,a,t} \gets 0$, $\hat{\mu}_{j,a,t} \gets 0$, $w_{j,a,t} \gets 0$.
These serve as optimistic estimates before data collection. The confidence bound \(U_{j,a,t}\) is defined later.

\noindent\textbf{2. Warm-Start Rounds (Lines~5--10).}  
For the first \(M A\) rounds, each agent--arm pair is explored at least once under assignment constraints. The selection follows:
\[
(\mathbf{e}_t)_{j,a} = 
\begin{cases} 
1, & \text{if } j = j_t \text{ and } a = a_t, \\
0, & \text{otherwise},
\end{cases} 
\]
This ensures each agent samples all arms and each arm is probed multiple times. 

\noindent\textbf{3. Main Loop (Lines~11--16).}  
For \(t > M A\), the algorithm iterates as follows:

\noindent\textbf{a. Probe Set Selection (Line~12).}  
\[
S_t \gets \textsc{Algorithm 1}\bigl(\widehat{F}_{j,a,t},\,\alpha(\cdot),\,I, \zeta\bigr),
\]
where \(I\) is the probing budget and \(\alpha(\cdot)\) the overhead function. This subroutine greedily selects \(S_t \subseteq [A]\) based on \(\widehat{F}_{j,a,t}\).

\noindent\textbf{b. Probing and Updates (Line~13).}  
Each arm in \(S_t\) is probed, revealing rewards \(R_{j,a,t}\), and updating \(\widehat{F}_{j,a,t}\), \(N_{j,a,t}\), \(\hat{\mu}_{j,a,t}\), and \(w_{j,a,t}\). The confidence bound is:
\[
U_{j,a,t} = \min(\hat{\mu}_{j,a,t} + w_{j,a,t}, 1).
\]

\noindent\textbf{c. Policy Optimization (Line~14).}  
The optimal policy is:
\[
\pi_t \gets \arg\max_{\pi_t \in \Delta^A} (1 - \alpha(\lvert S_t \rvert)) \cdot \mathbb{E}_{R_t}[\mathrm{NSW}(S_t, R_t, U_t, \pi_t)],
\]
where \(\mathrm{NSW}(\cdot)\) is the Nash social welfare objective.

\noindent\textbf{d. Arm Pulls and Final Updates (Lines~15--16).}  
Each agent \(j\) pulls \(a_{j,t} \sim \pi_t\), observes \(R_{j,a_{j,t},t}\), and updates \(\widehat{F}_{j,a_{j,t}}\), \(N_{j,a_{j,t}}\), \(\hat{\mu}_{j,a_{j,t}}\), \(w_{j,a_{j,t}}\), and \(U_{j,a_{j,t}}\).

\begin{algorithm}[!t]
\caption{Online Fair Multi-Agent UCB with Probing (OFMUP)}
\label{alg:online-fair-ucb}
\textbf{Input:} $A,M,T,I,c,\alpha(\cdot),\Delta$ etc.
\begin{algorithmic}[1]
\State \textbf{Initialize}:
\For{$j=1\to M,\;a=1\to A$}
   \State $\widehat{F}_{j,a,t}\gets 1,\quad N_{j,a,t}\gets0,\quad \hat{\mu}_{j,a,t}\gets0,$ 
   \State $w_{j,a,t}\gets0$
\EndFor

\For{$t=1\to M A$} 
   \State $j_t \gets ((t-1) \mod M) + 1$ 
   \State $a_t \gets ((t-1) / M) + 1$ 
   \State $\pi_t \gets \mathbf{e}_t,\quad S_t\gets\{a_t\}$
   \State \textbf{agent } $j_t$ \textbf{pulls arm } $a_t$
   \State \textbf{observe }$R_{j_t,a_t,t}$, \textbf{update }$\widehat{F}_{j_t,a_t,t},\,\hat{\mu}_{j_t,a_t,t},$
   \Statex\hspace{\algorithmicindent} $w_{j_t,a_t,t},\,N_{j_t,a_t,t}$
\EndFor

\For{$t=M A+1\to T$}
   \State $S_t \;\gets\;\textsc{Algorithm 1}\bigl(\widehat{F}_{j,a,t},\alpha(\cdot),I,\zeta\bigr)$
   \State \textbf{probe each arm in } $S_t$, \textbf{observe } $R_{j,a,t}$, 
   \Statex\hspace{\algorithmicindent} \textbf{update }$\widehat{F}_{j,a,t},N_{j,a,t},\hat{\mu}_{j,a,t},\,w_{j,a,t},$
   \Statex\hspace{\algorithmicindent} $ U_{j,a,t}=\min(\hat{\mu}_{j,a,t}+w_{j,a,t},1)$
    \State $\pi_t \gets \arg\max\limits_{\pi_t \in \Delta^A} \Biggl[
    \begin{aligned}
        &(1-\alpha(|S_t|)) \cdot \\
        &\mathbb{E}_{R_t} \bigl[\mathrm{NSW}(S_t, R_t, U_t, \pi_t)\bigr]
    \end{aligned}
    \Biggr]$
   \State \textbf{each agent }$j$ \textbf{pulls } $a_{j,t}\!\sim\!\pi_t$, \textbf{observe }$R_{j,a_{j,t},t}$
\State \textbf{update:}
\Statex \hspace{\algorithmicindent}%
  $\widehat{F}_{j,a_{j,t},t},\,
   N_{j,a_{j,t},t},\,
   \hat{\mu}_{j,a_{j,t},t},\,
   w_{j,a_{j,t},t},\,
   U_{j,a_{j,t},t}$

\EndFor
\end{algorithmic}
\end{algorithm}

\section*{Analysis}

In this section, we present our regret analysis for the proposed 
OFMUP algorithm. Detailed proofs and Lemma 8 are deferred to the Appendix.

\subsection*{Smoothness of the NSW Objective}



\begin{lemma}[Smoothness of the NSW Objective]\label{lem:nsw_smooth}
Let \(\mu,\nu \in [0,1]^{M \times A}\) be two reward matrices. For any probing set \(S\subseteq [A]\) and observed rewards \(R\), we have
\[
\begin{aligned}
\Bigl|\mathrm{NSW}(S,R,\mu,\pi) &- \mathrm{NSW}(S,R,\nu,\pi)\Bigr|\\[1mm]
&\le \sum_{j=1}^{M}\sum_{a=1}^{A} \pi_{j,a}\,\Bigl|\mu_{j,a} - \nu_{j,a}\Bigr|.
\end{aligned}
\]
\end{lemma}

\subsection*{Concentration of Reward Estimates}

\begin{lemma}[Concentration of Reward Estimates]\label{lem:4.2prime}
Let \(\delta \in (0,1)\). Then with probability at least \(1-\frac{\delta}{2}\), for all \(t > A\), \(a \in [A]\), and \(j \in [M]\), we have
\[
\begin{aligned}
\Bigl|\mu_{j,a} - \hat{\mu}_{j,a,t}\Bigr|
&\le \sqrt{\frac{2(\hat{\mu}_{j,a,t} - \hat{\mu}_{j,a,t}^2) \ln\!\bigl(\tfrac{2 M A T}{\delta}\bigr)}{N_{j,a,t}}}\\[1mm]
&\quad + \frac{\ln\!\Bigl(\frac{2\,M\,A\,T}{\delta}\Bigr)}{3\,N_{j,a,t}}
= w_{j,a,t}\
\end{aligned}
\]
\end{lemma}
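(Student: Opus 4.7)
The plan is to combine a Bernstein-type tail bound for each agent--arm pair with an empirical-variance transfer and a time-uniform union bound.

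First, fix $j\in[M]$, $a\in[A]$, and condition on a deterministic sample count $n\ge 1$. The rewards $R_{j,a,\cdot}$ are i.i.d.\ in $[0,1]$, so $\mathrm{Var}(R_{j,a,\cdot})\le \mu_{j,a}(1-\mu_{j,a})$, since $\mathbb{E}[X^2]\le \mathbb{E}[X]$ for $X\in[0,1]$. Bernstein's inequality then gives, with probability at least $1-\delta/(2MAT)$,
\[
|\hat{\mu}_{j,a}-\mu_{j,a}|\;\le\;\sqrt{\frac{2\,\mu_{j,a}(1-\mu_{j,a})\,\ln(2MAT/\delta)}{n}}\;+\;\frac{c\,\ln(2MAT/\delta)}{n}
\]
for an absolute constant $c$.

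Next, I transfer from the true variance proxy $\mu(1-\mu)$ to the empirical proxy $\hat{\mu}(1-\hat{\mu})$. Because $x\mapsto x(1-x)$ is $1$-Lipschitz on $[0,1]$, we have $\mu(1-\mu)\le \hat{\mu}(1-\hat{\mu})+|\mu-\hat{\mu}|$. Substituting this inside the square root yields a self-bounding inequality in the unknown $|\mu-\hat{\mu}|$; solving the resulting quadratic produces
\[
|\hat{\mu}_{j,a}-\mu_{j,a}|\;\le\;\sqrt{\frac{2\,\hat{\mu}_{j,a}(1-\hat{\mu}_{j,a})\,\ln(2MAT/\delta)}{n}}\;+\;\frac{\ln(2MAT/\delta)}{3n},
\]
with the extra term from the self-bounding step absorbed into the additive correction. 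An equivalent, cleaner path is to invoke the empirical Bernstein inequality of Maurer and Pontil directly, which yields this form in one step.

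Finally, since $N_{j,a,t}\le T$ always, I take a union bound over $j\in[M]$, $a\in[A]$, and integer sample counts $n\in\{1,\dots,T\}$---a total of at most $MAT$ events. With per-event failure probability $\delta/(2MAT)$, the overall failure probability is at most $\delta/2$, and the bound holds simultaneously for all $t$ and $(j,a)$ with $N_{j,a,t}\ge 1$ (guaranteed after the warm-start phase). The main obstacle is the variance-transfer step: one must verify that the constants align to give the $1/3$ coefficient on the additive term rather than a looser constant. If the elementary self-bounding argument loses a constant factor, the Maurer--Pontil empirical Bernstein inequality delivers the same expression without hand-tuning.
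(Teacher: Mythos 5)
Your proposal is correct and follows essentially the same route as the paper: a Bernstein-type tail bound for each agent--arm pair with the empirical variance proxy $\hat{\mu}(1-\hat{\mu})$, followed by a union bound over the $M\cdot A\cdot T$ triples to obtain overall failure probability $\delta/2$. The only difference is that the paper invokes Freedman's martingale inequality and substitutes the empirical variance proxy directly (in fact somewhat informally, writing the conditional variance with $\hat{\mu}$ in place of $\mu$), whereas you use classical i.i.d.\ Bernstein plus an explicit variance-transfer step --- which is, if anything, a more careful treatment of exactly the point you flag as the main obstacle.
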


\ignore{
\subsection*{Aggregate Reward Bound}

Next, we introduce a bound relating the aggregated expected reward (computed with true means) to the confidence widths. \begin{lemma}[Aggregate Reward Bound]\label{lem:aggreward}
Let
\begin{align*}
\gamma_{j,t} \;=\; \sum_{a \in [A]} \,\pi_{j,a,t}\,\bigl(1 - U_{j,a,t}\bigr),\\[1mm]
Q(t,p) \;=\; \bigl\{\,j \in [M] : \gamma_{j,t} \,\ge\, 2^{-p}\bigr\}\,.
\end{align*}
Suppose that for all integers \(p \ge 0\), we have
\[
|Q(t,p)| < 2^p \cdot 3\,\ln T.
\]
Then the following inequality holds:
\[
\begin{aligned}
\sum_{j=1}^{M} \sum_{a=1}^{A}
\pi_{j,a,t}\left(\hat{\mu}_{j,a,t} - \hat{\mu}_{j,a,t}^2\right) 
&\leq 1 + 6\ln T\log M \\
&+ \sum_{j=1}^{M} \sum_{a=1}^{A}
\pi_{j,a,t}\hat{\mu}_{j,a,t}w_{j,a,t}.
\end{aligned}
\]
\end{lemma}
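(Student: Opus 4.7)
The plan is to reduce the lemma to a bound on $\sum_{j}\gamma_{j,t}$, then exploit the hypothesis on $|Q(t,p)|$ through a dyadic layer-cake over the agents. The starting point is the pointwise inequality
\[
1 - \hat{\mu}_{j,a,t} \;\le\; \bigl(1 - U_{j,a,t}\bigr) + w_{j,a,t},
\]
which I would verify by a two-case check on the definition $U_{j,a,t}=\min(\hat{\mu}_{j,a,t}+w_{j,a,t},\,1)$: when the truncation is inactive the two sides are equal, and when $U_{j,a,t}=1$ the right side equals $w_{j,a,t}$ while the left side is at most $w_{j,a,t}$ by rearranging $\hat{\mu}+w>1$. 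Multiplying this inequality by $\hat{\mu}_{j,a,t}\in[0,1]$ and then dropping the leading $\hat{\mu}_{j,a,t}$ on the $(1-U_{j,a,t})$ term via $\hat{\mu}_{j,a,t}\le 1$ yields the clean pointwise bound
\[
\hat{\mu}_{j,a,t}\bigl(1-\hat{\mu}_{j,a,t}\bigr) \;\le\; \bigl(1-U_{j,a,t}\bigr) + \hat{\mu}_{j,a,t}\,w_{j,a,t}.
\]

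Next I would weight by $\pi_{j,a,t}$ and sum over $a\in[A]$; by the definition of $\gamma_{j,t}$ the first summand collapses exactly to $\gamma_{j,t}$, giving
\[
\sum_{a=1}^{A}\pi_{j,a,t}\bigl(\hat{\mu}_{j,a,t}-\hat{\mu}_{j,a,t}^{2}\bigr) \;\le\; \gamma_{j,t} + \sum_{a=1}^{A}\pi_{j,a,t}\hat{\mu}_{j,a,t}w_{j,a,t}.
\]
Summing over $j\in[M]$ shows that the second term on the right already matches the third term in the lemma's conclusion, so the entire task reduces to proving
\[
\sum_{j=1}^{M}\gamma_{j,t} \;\le\; 1 + 6\,\ln T\,\log M.
\]

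For this last bound I would run a dyadic layer-cake argument. Set $P=\lceil\log_{2}M\rceil$ and, for each $p\in\{1,\dots,P\}$, define the shell $B_{p}=\{\,j:2^{-p}\le\gamma_{j,t}<2^{-(p-1)}\,\}\subseteq Q(t,p)$, placing the edge case $\gamma_{j,t}=1$ into $B_{1}$. The hypothesis gives $|B_{p}|\le|Q(t,p)|<2^{p}\cdot 3\ln T$, and each $j\in B_{p}$ contributes at most $2^{-(p-1)}$ to the sum, so each shell contributes strictly less than $6\ln T$. Every remaining agent satisfies $\gamma_{j,t}<2^{-P}\le 1/M$, so the residual mass is at most $M\cdot 1/M=1$; summing the $P$ shell bounds plus the residual delivers exactly $1+6\ln T\log M$. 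The main obstacle I anticipate is calibrating this dyadic truncation carefully: choosing $P$ so that the residual collapses into the additive constant $1$ while the shells aggregate to $6\ln T\log M$ rather than picking up an extra multiplicative or additive factor. Everything else is bookkeeping, but the shell count and the handling of $\gamma_{j,t}=1$ must line up exactly with the hypothesis $|Q(t,p)|<2^{p}\cdot 3\ln T$ to avoid a loss in the final constant.
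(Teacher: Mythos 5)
Your proposal is correct and follows essentially the same route as the paper: an exact/pointwise decomposition of $\hat{\mu}_{j,a,t}(1-\hat{\mu}_{j,a,t})$ into a $(1-U_{j,a,t})$ part (collapsing to $\gamma_{j,t}$ after weighting by $\pi_{j,a,t}$ and using $\hat{\mu}_{j,a,t}\le 1$) plus the $\hat{\mu}_{j,a,t}w_{j,a,t}$ part, followed by the same dyadic layer-cake over agents using the hypothesis on $|Q(t,p)|$ to get $6\ln T$ per shell and a residual of at most $1$. Your two-case handling of the truncation in $U_{j,a,t}=\min(\hat{\mu}_{j,a,t}+w_{j,a,t},1)$ is in fact slightly more careful than the paper's, which treats $U_{j,a,t}=\hat{\mu}_{j,a,t}+w_{j,a,t}$ as an identity, and the $\lceil\log_2 M\rceil$ versus $\log M$ slack you worry about is present to the same degree in the paper's own layer count.
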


This lemma is crucial in controlling the error introduced by using upper confidence bounds.
}

\begin{figure*}[t]
 	\centering
 		\subfloat[]{%
 		\includegraphics[width=0.245\textwidth]
                {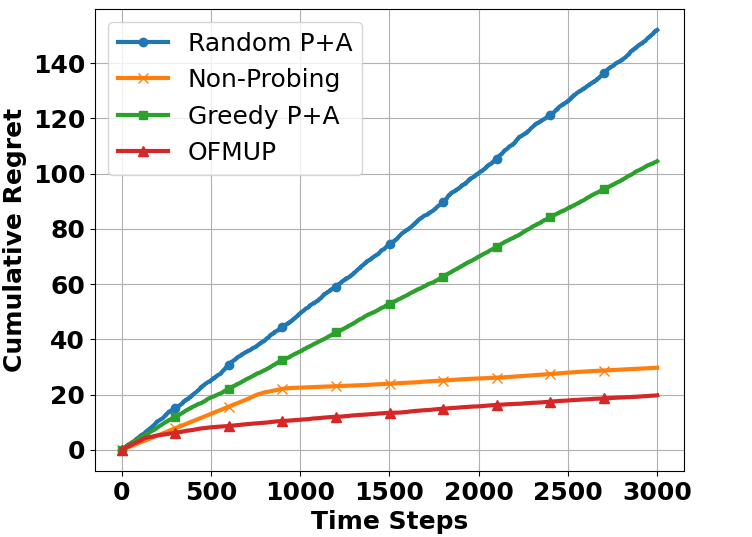}%
                \label{fig1}
 	  }
 	      \subfloat[]{%
 		\includegraphics[width=0.245\textwidth]
                {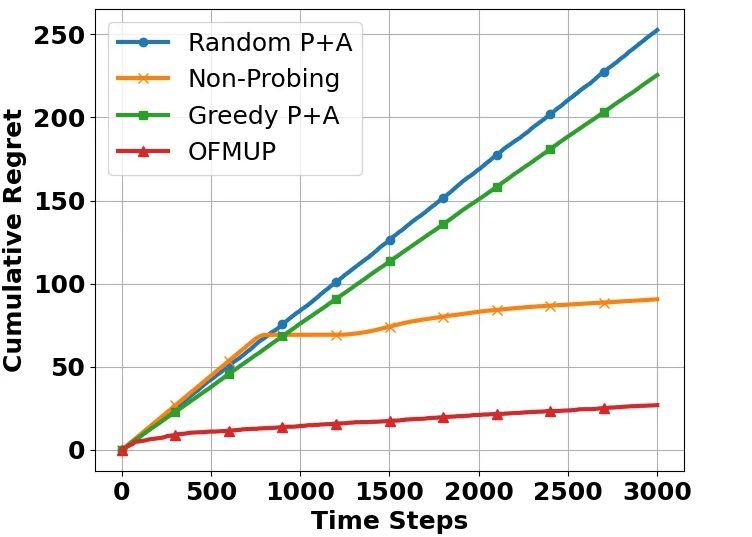}%
                \label{fig2}
 	}
 	      \subfloat[]{%
 		\includegraphics[width=0.245\textwidth]{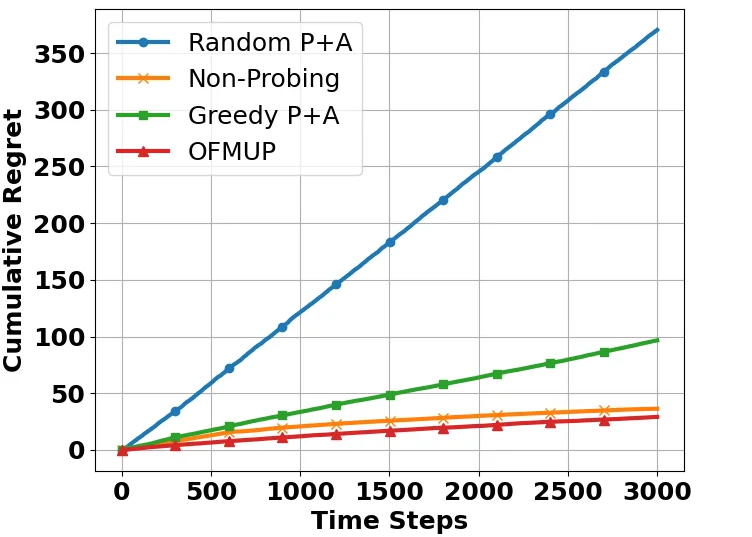}%
                \label{fig3}
 	}
 	      \subfloat[]{%
 		\includegraphics[width=0.245\textwidth]{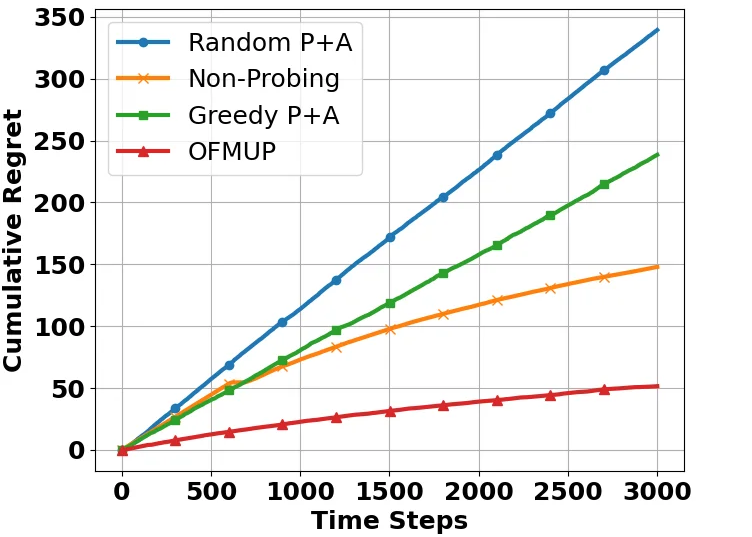}%
            \label{fig4}
 	}

 	 \caption{\small (a): Agents number $M=12$, arms number $A=8$, Bernoulli distribution for reward. 
     (b): Agents number $M=20$, arms number $A=10$, Bernoulli distribution for reward. 
     (c): Agents number $M=12$, arms number $A=8$, General distribution for reward. 
     (d): Agents number $M=20$, arms number $A=10$, General distribution for reward. Data from NYYellowTaxi 2016.}
 	\label{fig:ablation}
 \end{figure*}

\subsection*{Main Regret Guarantee}

Combining the smoothness of the NSW objective, the concentration bounds, and the auxiliary technical results, we obtain our main regret guarantee.

\begin{theorem}\label{thm:regret}
For any $\delta \in (0,1)$, with probability at least $1 - \delta$, the cumulative regret $\mathcal{R}_{\mathrm{regret}}(T)$ of the Online Fair multi-Agent UCB with Probing algorithm (Algorithm~2) satisfies
\[
  \mathcal{R}_{\mathrm{regret}}(T) \;=\; O\!\Bigl(\,\zeta\;\bigl(\sqrt{M\,A\,T} \;+\; M\,A\bigr)\,
               \ln^{c}\!\Bigl(\tfrac{M\,A\,T}{\delta}\Bigr)\Bigr),
\]
for some constant $c > 0$.
\end{theorem}

Theorem~\ref{thm:regret} shows that our probing-based algorithm achieves sublinear regret with a provable constant-factor improvement over the non-probing baseline. This gain arises from faster elimination of suboptimal arms and more informed assignment decisions. Experimental results confirm that probing consistently lowers regret across all time horizons. Full proofs of Lemmas~\ref{lem:nsw_smooth}–8 and Theorem~\ref{thm:regret} are provided in the Appendix.


\section{Experiments}

We evaluate our framework using controlled simulations and a real-world ridesharing case study. 
In our simulated environment, we consider a multi-agent multi-armed bandit (MA-MAB) setting with \(M\) agents and \(A\) arms (e.g., \(M=12, A=8\) for small-scale and \(M=20, A=10\) for large-scale scenarios). For each agent–arm pair \((j,a)\), rewards are generated as i.i.d. samples from a fixed distribution \(D_{j,a}\) with mean \(\mu_{j,a}\). In the real-world setting, we apply our framework to the NYYellowTaxi 2016 dataset~\citep{shah2020neural}, treating vehicles as agents and discretized pickup locations (binned into 0.01° grids) as arms. Rewards are determined by the normalized Manhattan distance between vehicles and pickup points—closer distances yield higher rewards. Vehicle locations are randomly pre-sampled within city bounds and remain fixed, underscoring the practical effectiveness of our approach.
 \begin{figure}[H]
    \centering
    \subfloat[]{%
        \includegraphics[width=0.24\textwidth]{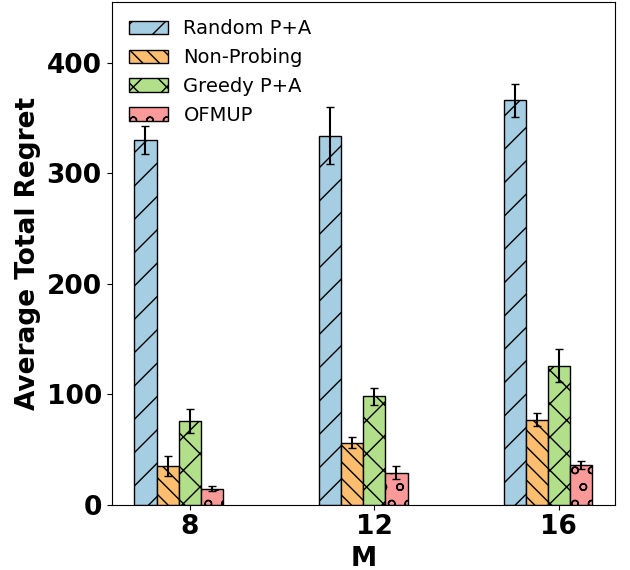}%
        \label{fig:scalability_agents}
    }
    \subfloat[]{%
        \includegraphics[width=0.24\textwidth]{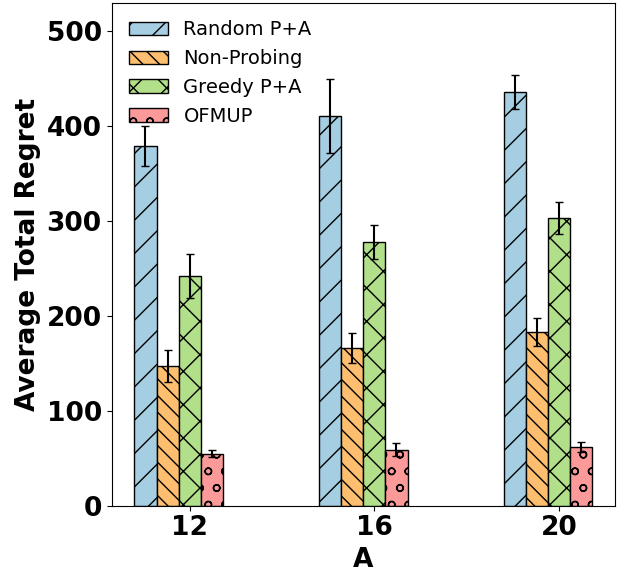}%
        \label{fig:scalability_arms}
    }
    \caption{\small Scalability analysis across two dimensions: (a) Fixed arms number $A=8$ with varying agents; (b) Fixed agents number $M=20$ with varying arms.}
    \label{fig:scalability}
\end{figure}

To test different conditions, we consider two reward distributions: a Bernoulli distribution (with rewards 0 or 1 and with mean rewards \(\mu\) in \([0.3,0.8]\)) and a discrete distribution (with rewards sampled from \(\{0.3,0.4,0.5,0.6,0.7,0.8\}\)). Cumulative regret is computed as in Equation~(\ref{regret}), with the optimal reward determined via exhaustive search. For numerical stability, cumulative Nash social welfare is aggregated using the geometric mean of per-agent rewards. We further verify that our offline objective \(f_{\mathrm{upper}}(S)\) closely approximates \(\log(g(S))\) (difference 0.03), confirming the validity of our formulation. All experimental settings satisfy the assumptions outlined in our paper. \footnote{Implementation available at:  https://github.com/jiaxin26/Fair-MA-MAB-with-Probing}

\subsection{Baselines}
Three algorithms serve as our comparison baselines: \textbf{Non-Probing}, a fair MAB algorithm from~\citet{jones2023efficient} without probing capability, focusing only on optimal assignment with current information; \textbf{Random Probing with Random assignment (Random P+A)}, which randomly selects a fixed number of arms for probing and then assigns the arms randomly; and \textbf{Greedy Probing with Random assignment (Greedy P+A)}, which uses the same greedy probing strategy as our algorithm but performs random assignment after probing.

\subsection{Results}
Figure 1 shows OFMUP's performance across different tests. In small-scale tests (M = 12, A = 8), OFMUP reduces regret by 85\% vs. Random P+A and 60\% vs. Greedy P+A after 3000 steps. The advantage is greater in medium-scale tests (M = 20, A = 10), where regret is 88\% lower than Random P+A and 80\% lower than Greedy P+A at T=3000, showing improved scalability.
For discrete rewards, OFMUP continues to outperform. The gap with other methods grows as it learns reward patterns, reducing regret by 85\% vs. Random P+A and 65\% vs. Non-Probing at T=3000. Notably, Random P+A's regret is slightly higher in small-scale tests due to fairness computation via the geometric mean and increased variability with smaller samples. These results validate our theoretical analysis—OFMUP effectively balances exploration and exploitation while ensuring fairness. The experiments further confirm the crucial role of probing in gathering information and guiding assignment. We further examine scalability by independently varying agent numbers (Figure 2). OFMUP performs well in all our tests, and its advantage over the other methods grows as the problems become more complex.

\section{Conclusion}

We propose a fair MA-MAB framework combining selective probing with Nash Social Welfare objectives for efficient information gathering and equitable assignments. Our offline greedy algorithm attains a constant‐factor approximation bound and the online extension guarantees sublinear regret. Experiments on synthetic and real‐world ridesharing data show refined reward estimates, improved exploration–exploitation balance, and superior performance over baselines, demonstrating practical applicability.

\section{Acknowledgments}
Mattei was supported in part by NSF Awards IIS-III-2107505, IIS-RI-2134857, IIS-RI-2339880, and CNS-SCC-2427237, as well as the Harold L. and Heather E. Jurist Center of Excellence for Artificial Intelligence at Tulane University and the Tulane University Center of Excellence for Community-Engaged Artificial Intelligence (CEAI). Zheng was supported in part by NSF Award CNS-2146548 and Louisiana BOR-RCS award 088A-25.

\bibliography{aaai2026}

\newpage

\onecolumn

\title{Fair Algorithms with Probing for Multi-Agent Multi-Armed Bandits\\(Supplementary Material)}
\maketitle

\appendix

\section*{Appendix}

\setcounter{lemma}{0}
\setcounter{theorem}{0}

\subsection{Offline Setting}

\begin{lemma}[Monotonicity of \(g(S)\)]\label{lemma:monotonicity_g-appendix}
For any \(S \subseteq T \subseteq [A]\), we have
\[
g(S) \le g(T).
\]
\end{lemma}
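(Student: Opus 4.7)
The plan is to show that any feasible allocation for the smaller set $S$ is also feasible for the larger set $T$, and achieves the same objective value, so the maximum over $\Delta^A_T$ is at least the maximum over $\Delta^A_S$.

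More concretely, first I would fix an arbitrary $\pi^S \in \Delta^A_S$ and construct a natural extension $\pi^T$ defined by $\pi^T_{j,a} = \pi^S_{j,a}$ for every $j \in [M]$ and every $a \in [A]$. I would then verify that $\pi^T \in \Delta^A_T$: since $\pi^S_{j,a} = 0$ for all $a \notin S$, and since $[A]\setminus T \subseteq [A]\setminus S$, the support constraint $\pi^T_{j,a} = 0$ for $a \notin T$ holds automatically. The budget constraint $\sum_{a \in T} \pi^T_{j,a} \le 1$ reduces to $\sum_{a \in S} \pi^S_{j,a} \le 1$ because the extra coordinates $a \in T \setminus S$ contribute zero, and the latter holds by feasibility of $\pi^S$ in $\Delta^A_S$.

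Next I would observe that the objective value is preserved under this extension. Inside the expectation, for each agent $j$ we have
\[
\sum_{a \in T} \pi^T_{j,a}\,R_{j,a} \;=\; \sum_{a \in S} \pi^S_{j,a}\,R_{j,a} \;+\; \sum_{a \in T\setminus S} 0 \cdot R_{j,a} \;=\; \sum_{a \in S} \pi^S_{j,a}\,R_{j,a},
\]
so the product across agents, and hence its expectation, is identical under $\pi^S$ (in the $g(S)$ definition) and under $\pi^T$ (in the $g(T)$ definition). Taking $\pi^S$ to be a maximizer for $g(S)$ yields $g(T) \ge g(S)$, which is the claim.

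There is essentially no obstacle here beyond book-keeping: the result is a direct consequence of the fact that enlarging $S$ to $T$ only relaxes the support restriction in $\Delta^A_S$ while keeping the per-agent budget of $1$ intact, so the feasible set grows and the maximum weakly increases. No probability or randomness argument is needed; monotonicity follows purely from the inclusion $\Delta^A_S \hookrightarrow \Delta^A_T$ under the zero-padding embedding.
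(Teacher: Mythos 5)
Your proposal is correct and follows essentially the same route as the paper's proof: both arguments observe that any $\pi \in \Delta^A_S$ has zero mass outside $S$, hence satisfies the constraints defining $\Delta^A_T$ and attains the same objective value, so the maximum over the larger feasible set can only be larger. The paper states this slightly more tersely (treating $\pi$ itself as already feasible in $\Delta^A_T$ rather than introducing an explicit zero-padding embedding), but the content is identical.
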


\begin{proof}
Recall that
\[
g(S) = \max_{\pi \in \Delta^A_S} \;\mathbb{E}\!\left[
\prod_{j \in [M]} \left(\sum_{a \in S} \pi_{j,a,t} \, R_{j,a}\right)
\right],
\]
where 
\[
\Delta^A_S = \Bigl\{ \pi \in \mathbb{R}_+^{M \times A} :\; \pi_{j,a,t} = 0 \text{ for all } a \notin S,\quad \sum_{a \in S} \pi_{j,a,t} \le 1 \text{ for all } j \in [M] \Bigr\}.
\]
If \(S \subseteq T\), then any allocation \(\pi \in \Delta^A_S\) satisfies \(\pi_{j,a,t} = 0\) for all \(a \in T\setminus S\). Consequently, for each \(j \in [M]\) we have
\[
\sum_{a \in T} \pi_{j,a,t} = \sum_{a \in S} \pi_{j,a,t} \le 1.
\]
That is, \(\pi\) is also a feasible solution in \(\Delta^A_T\). Since the maximization defining \(g(T)\) is taken over a larger set than that for \(g(S)\), we immediately obtain
\[
g(S) = \max_{\pi \in \Delta^A_S} \;\mathbb{E}\!\left[
\prod_{j \in [M]} \left(\sum_{a \in S} \pi_{j,a,t} \, R_{j,a}\right)
\right]
\le \max_{\pi \in \Delta^A_T} \;\mathbb{E}\!\left[
\prod_{j \in [M]} \left(\sum_{a \in T} \pi_{j,a,t} \, R_{j,a}\right)
\right]
= g(T).
\]
This completes the proof.
\end{proof}

\begin{lemma}[Monotonicity]
For any \(S \subseteq T \subseteq [A]\), we have \( f_{\text{upper}}(S) \le f_{\text{upper}}(T) \).
\end{lemma}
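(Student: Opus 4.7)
The plan is to prove this as a direct composition result, chaining the monotonicity of \(g\) with the monotonicity of \(\phi\). Since \(f_{\mathrm{upper}}(S) = \phi(g(S))\) by definition, the statement reduces to showing that composing a non-decreasing real function with a set-monotone set function yields a set-monotone set function.

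First, I would invoke Lemma~\ref{lemma:monotonicity_g-appendix}: for any \(S \subseteq T \subseteq [A]\), we have \(g(S) \le g(T)\). This is the substantive set-theoretic step, and it was already proved by the feasibility-embedding argument \(\Delta^A_S \subseteq \Delta^A_T\).

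Second, I would record that \(\phi\) is non-decreasing on \([0,x_{\max}]\). This was established in the construction of the piecewise-linear upper bound: \(\phi\) is the pointwise maximum of the tangent lines \(T_{\tau_i}(z) = \log(\tau_i) + (z-\tau_i)/\tau_i\) at positive breakpoints \(\tau_i > 0\). Each such tangent has positive slope \(1/\tau_i > 0\), and the pointwise maximum of non-decreasing affine functions is itself non-decreasing. (The concavity of \(\phi\) asserted in the construction is stronger than what we need here; monotonicity alone suffices.)

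Finally, combining the two observations gives the conclusion in one line: since \(0 \le g(S) \le g(T) \le x_{\max}\) by the upper bound assumption and by Lemma~\ref{lemma:monotonicity_g-appendix}, the monotonicity of \(\phi\) yields
\[
f_{\mathrm{upper}}(S) \;=\; \phi\bigl(g(S)\bigr) \;\le\; \phi\bigl(g(T)\bigr) \;=\; f_{\mathrm{upper}}(T).
\]
I do not anticipate any real obstacle here; the main thing to be careful about is simply citing the non-decreasing property of \(\phi\) from the construction rather than re-deriving it, and noting that the range of \(g\) stays within the domain \([0,x_{\max}]\) on which \(\phi\) is defined, which is guaranteed by the upper bound assumption \(g(S) \le x_{\max}\) for all \(S\).
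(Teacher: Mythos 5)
Your proof is correct and matches the paper's own argument exactly: both invoke the monotonicity of \(g\) (Lemma~\ref{lemma:monotonicity_g}) and the non-decreasing property of \(\phi\), then compose. The extra justification you give for why \(\phi\) is non-decreasing (pointwise maximum of positive-slope tangent lines) is a harmless elaboration of what the paper takes from its construction.
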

\begin{proof}
Since \(g(S)\le g(T)\) (Lemma \ref{lemma:monotonicity_g}), and \(\phi(\cdot)\) is non-decreasing, we get
\[
f_{\text{upper}}(S)=\phi\bigl(g(S)\bigr)\le\phi\bigl(g(T)\bigr)=f_{\text{upper}}(T).
\]
\end{proof}

\begin{lemma}[Submodularity]
For any \( S \subseteq T \subseteq [A] \) and \( a \notin T \),
\[
f_{\text{upper}}(S \cup \{a\}) - f_{\text{upper}}(S)
\;\;\ge\;\;
f_{\text{upper}}(T \cup \{a\}) - f_{\text{upper}}(T).
\]
\end{lemma}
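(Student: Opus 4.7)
The plan is to exploit the piecewise-linear structure of $\phi$ together with the partition-fineness conditions from Section 4.2, reducing the submodularity of $f_{\mathrm{upper}}$ to a comparison of two affine slopes of $\phi$ evaluated at $g(S)$ and $g(T)$.

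First, I would introduce the abbreviations $x = g(S)$, $y = g(T)$, $d = g(S\cup\{a\}) - g(S)$, and $d' = g(T\cup\{a\}) - g(T)$. By Lemma~\ref{lemma:monotonicity_g-appendix} applied to $S\subseteq T$ and to $S\subseteq S\cup\{a\}$, $T\subseteq T\cup\{a\}$, we have $x\le y$ and $d,d'\ge 0$. The marginal-increment bound $g(S\cup\{a\}) - g(S)\le \eta$ (and likewise for $T$) built into the construction guarantees that $x$ and $x+d$ lie in a common linear segment of $\phi$, and similarly for $y$ and $y+d'$.

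Next, locate the relevant segments: say $x,\,x+d \in [\tau_i,\tau_{i+1}]$ and $y,\,y+d' \in [\tau_j,\tau_{j+1}]$. Since $x\le y$, we have $\tau_i\le\tau_j$, and on these segments $\phi$ is affine with slopes $1/\tau_i$ and $1/\tau_j$ respectively. Hence
\[
f_{\mathrm{upper}}(S\cup\{a\}) - f_{\mathrm{upper}}(S) \;=\; \frac{d}{\tau_i}, \qquad f_{\mathrm{upper}}(T\cup\{a\}) - f_{\mathrm{upper}}(T) \;=\; \frac{d'}{\tau_j}.
\]
The submodularity inequality therefore reduces to showing $d/\tau_i \ge d'/\tau_j$, equivalently $d/d' \ge \tau_i/\tau_j$ whenever $d'>0$ (the case $d'=0$ is trivial).

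Finally, handle the two cases. When $d\ge d'$, the ratio $d/d'\ge 1 \ge \tau_i/\tau_j$, so the inequality is automatic from concavity of $\log$ alone. The delicate case is $d<d'$, where concavity of $\phi$ is not by itself enough because $g$ has not been shown to be submodular; here I would invoke the ``Additional Assumption'' stated after the piecewise-linear construction, which asserts that the partition is chosen fine enough so that $d/d' \ge \tau_i/\tau_j$ holds in exactly this regime. Combining the two cases yields the desired marginal-gain inequality and completes the proof.

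The main obstacle is precisely the case $d<d'$: because $g$ is not assumed submodular, the argument cannot rest on a monotone-slope property of $\phi$ by itself, and the partition-fineness assumption relating increment ratios to breakpoint ratios is the essential nontrivial ingredient that must be invoked cleanly.
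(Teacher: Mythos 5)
Your proposal is correct and follows essentially the same route as the paper's own proof: the same reduction to comparing the affine slopes $1/\tau_i$ and $1/\tau_j$ on the segments containing $g(S)$ and $g(T)$, the same case split on $d \ge d'$ versus $d < d'$, and the same reliance on the Additional Assumption $d/d' \ge \tau_i/\tau_j$ to close the delicate case. Your closing remark correctly identifies that this assumption, rather than concavity of $\phi$ alone, is the load-bearing ingredient, which matches the paper's treatment exactly.
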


\begin{proof}
Let \(S \subseteq T \subseteq [A]\) and \(a \in [A]\setminus T\). Define
\[
x = g(S),\quad y = g(T),\quad d = g(S\cup\{a\}) - g(S),\quad d' = g(T\cup\{a\}) - g(T).
\]
By the monotonicity of \(g\), we have \(x \le y\).

Based on the assumption made in the offline analysis that the partition \(\{\tau_i\}\) used in constructing \(\phi\) is sufficiently fine so that for any \(S\) and any \(a \notin S\) the marginal gain satisfies
\[
g(S\cup\{a\}) - g(S) \le \eta,
\]
for some small \(\eta > 0\), it follows that \(x\) and \(x+d\) lie in some interval \([\tau_i,\tau_{i+1}]\) and \(y\) and \(y+d'\) lie in some interval \([\tau_j,\tau_{j+1}]\). Based on the construction of \(\phi\), on the interval \([\tau_i,\tau_{i+1}]\) we have
\[
\phi(z) = T_{\tau_i}(z) = \log(\tau_i) + \frac{1}{\tau_i}(z-\tau_i),
\]
so that
\[
\phi(x+d)-\phi(x) = \frac{1}{\tau_i}\,d.
\]
Similarly, on \([\tau_j,\tau_{j+1}]\),
\[
\phi(y+d')-\phi(y) = \frac{1}{\tau_j}\,d'.
\]
Since \(x \le y\) and by monotonicity of \(g\) the corresponding breakpoint indices satisfy \(\tau_i \le \tau_j\), it follows that
\[
\frac{1}{\tau_i} \ge \frac{1}{\tau_j}.
\]

In the case \(d \ge d'\), the desired inequality immediately holds:
\[
\phi(x+d)-\phi(x) = \frac{1}{\tau_i}\,d \ge \frac{1}{\tau_j}\,d' = \phi(y+d')-\phi(y).
\]
In the case \(d < d'\), we further rely on the additional assumption that the partition is chosen sufficiently fine so that even if the absolute increment \(d\) is smaller than \(d'\), the relative difference in slopes compensates. Specifically, we assume that
\[
\frac{d}{d'} \ge \frac{\tau_i}{\tau_j}.
\]
Under this condition, we have
\[
\frac{1}{\tau_i}\,d \ge \frac{1}{\tau_i}\Bigl(\frac{\tau_i}{\tau_j}d'\Bigr) = \frac{1}{\tau_j}\,d',
\]
so that
\[
\phi(x+d)-\phi(x) \ge \phi(y+d')-\phi(y).
\]

Thus, in both cases, we conclude that
\[
\phi\bigl(g(S\cup\{a\})\bigr) - \phi\bigl(g(S)\bigr) \ge \phi\bigl(g(T\cup\{a\})\bigr) - \phi\bigl(g(T)\bigr).
\]
This shows that \(f_{\mathrm{upper}}(S)=\phi(g(S))\) is submodular.
\end{proof}

\begin{lemma}[Monotonicity of \(h(S)\)]\label{lemma:monotonicity_h-appendix}
For any \(S \subseteq T \subseteq [A]\), we have
\[
h(S) \ge h(T).
\]
\end{lemma}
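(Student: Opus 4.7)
The plan is to mirror the structure of the proof of Lemma~\ref{lemma:monotonicity_g-appendix}, but observe that the direction of set inclusion between the feasible regions flips, which is why the monotonicity inequality for $h$ reverses relative to that for $g$. The key insight is that the probed set $S$ enters $h$ only as an exclusion constraint: the more arms we probe, the fewer arms remain available for the allocation in $h$.

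First, I would unpack the feasible region $\Delta^A_{[A]\setminus S}$ and note that its defining constraint is $\pi_{j,a}=0$ for all $a \in S$. If $S \subseteq T$, then requiring $\pi_{j,a}=0$ for all $a \in T$ is a strictly stronger constraint than requiring it only for $a \in S$, so $\Delta^A_{[A]\setminus T} \subseteq \Delta^A_{[A]\setminus S}$. Thus every allocation feasible for $h(T)$ is automatically feasible for $h(S)$.

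Next, I would verify that the objective agrees on any such $\pi \in \Delta^A_{[A]\setminus T}$. Splitting the sum defining $h(S)$ over $a \notin S$ into the pieces $a \in T \setminus S$ and $a \notin T$, the first piece vanishes because $\pi_{j,a}=0$ for all $a \in T$, giving
\[
\sum_{a \notin S} \pi_{j,a}\,\mu_{j,a} \;=\; \sum_{a \notin T} \pi_{j,a}\,\mu_{j,a}
\]
for every $j \in [M]$. Consequently, the product objective evaluated at $\pi$ has the same value whether we view $\pi$ as an element of $\Delta^A_{[A]\setminus T}$ (i.e., in the definition of $h(T)$) or of $\Delta^A_{[A]\setminus S}$ (i.e., in the definition of $h(S)$).

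Finally, since $h(S)$ is the maximum of a nonnegative objective over a superset of the feasible region for $h(T)$, and the objective values coincide on the common feasible points, taking the maximum yields $h(S) \ge h(T)$. I do not expect any real obstacle here: the argument is a one-line containment-of-feasible-sets observation, essentially the dual of the argument in Lemma~\ref{lemma:monotonicity_g-appendix}. The only point to be careful about is correctly tracking which side of the inclusion corresponds to the larger feasible set, since the excluded-set indexing in $\Delta^A_{[A]\setminus S}$ can invite sign errors if handled too quickly.
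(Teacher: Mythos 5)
Your proposal is correct and follows essentially the same route as the paper's proof: both establish the containment $\Delta^A_{[A]\setminus T} \subseteq \Delta^A_{[A]\setminus S}$ by splitting the sum over $[A]\setminus S$ into the pieces in $T$ (which vanish) and outside $T$, then conclude by maximizing over the larger feasible set. No gaps.
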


\begin{proof}
Recall that
\[
h(S) \;=\; \max_{\pi \in \Delta^A_{[A]\setminus S}} \prod_{j\in [M]} \left( \sum_{a \notin S} \pi_{j,a}\, \mu_{j,a} \right),
\]
where 
\[
\Delta^A_{[A]\setminus S} \;=\; \Bigl\{ \pi \in \mathbb{R}_+^{M \times A} \,\Bigm|\;
\pi_{j,a} = 0 \text{ for all } a\in S,\quad \sum_{a \in [A]\setminus S} \pi_{j,a} \le 1 \text{ for all } j \in [M]
\Bigr\}.
\]

If \(S \subseteq T\), then clearly
\[
[A]\setminus T \subseteq [A]\setminus S.
\]
Thus, any allocation \(\pi\) that is feasible for \(\Delta^A_{[A]\setminus T}\) (i.e., \(\pi_{j,a} = 0\) for all \(a\in T\) and \(\sum_{a \in [A]\setminus T} \pi_{j,a} \le 1\) for all \(j\)) is also feasible for \(\Delta^A_{[A]\setminus S}\).

To see this in detail, note that we can partition the set \([A]\setminus S\) as
\[
[A]\setminus S \;=\; \Bigl(([A]\setminus S) \cap T\Bigr) \cup \Bigl(([A]\setminus S) \cap ([A]\setminus T)\Bigr).
\]
Since \(\pi \in \Delta^A_{[A]\setminus T}\) implies that \(\pi_{j,a} = 0\) for all \(a\in T\), it follows that for any \(j\in [M]\),
\[
\sum_{a \in ([A]\setminus S) \cap T} \pi_{j,a} = 0.
\]
Therefore, we obtain
\[
\sum_{a \in [A]\setminus S} \pi_{j,a} \;=\; \sum_{a \in ([A]\setminus S) \cap T} \pi_{j,a} + \sum_{a \in ([A]\setminus S) \cap ([A]\setminus T)} \pi_{j,a} \;=\; \sum_{a \in ([A]\setminus S) \cap ([A]\setminus T)} \pi_{j,a}.
\]
Since \(([A]\setminus S) \cap ([A]\setminus T) \subseteq [A]\setminus T\), we have
\[
\sum_{a \in [A]\setminus S} \pi_{j,a} \le \sum_{a \in [A]\setminus T} \pi_{j,a} \le 1.
\]
This shows that any \(\pi \in \Delta^A_{[A]\setminus T}\) automatically satisfies the constraint for \(\Delta^A_{[A]\setminus S}\).

Since the maximization in the definition of \(h(S)\) is taken over the larger set \(\Delta^A_{[A]\setminus S}\) (which contains \(\Delta^A_{[A]\setminus T}\) as a subset), it follows that
\[
\begin{aligned}
h(S) &= \max_{\pi \in \Delta^A_{[A]\setminus S}} \prod_{j\in [M]} \left( \sum_{a \notin S} \pi_{j,a}\, \mu_{j,a} \right)\\[1mm]
&\ge \max_{\pi \in \Delta^A_{[A]\setminus T}} \prod_{j\in [M]} \left( \sum_{a \notin T} \pi_{j,a}\, \mu_{j,a} \right)
= h(T).
\end{aligned}
\]
This completes the proof.
\end{proof}

\begin{lemma}
For any probing set \(S\), we have
\[
\mathcal{R}(S) = (1 - \alpha(|S|)) \,\mathbb{E}_R\Bigl[\mathrm{NSW}(S, R, \bmu, \pi^*(S))\Bigr] \le (1 - \alpha(|S|))\Bigl( g(S) + h(S) \Bigr).
\]
\end{lemma}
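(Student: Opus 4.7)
The plan is to show $\mathbb{E}_R\bigl[\mathrm{NSW}(S, R, \bmu, \pi^*(S))\bigr] \le g(S) + h(S)$, from which the lemma follows upon multiplying by the nonnegative overhead factor $1-\alpha(|S|)$. First I would push the expectation through the per-agent product: because the reward realizations $R_{j,a}$ are drawn independently across agent--arm pairs and $\mathbb{E}[R_{j,a}] = \mu_{j,a}$, the expectation distributes over the product over $j$, yielding
\[
\mathbb{E}_R\bigl[\mathrm{NSW}(S, R, \bmu, \pi^*(S))\bigr] \;=\; \prod_{j\in [M]}(\bar X_j + Y_j),
\]
where $\bar X_j := \sum_{a\in S}\pi^*_{j,a}\mu_{j,a}$ is the expected probed-arm contribution of agent $j$ and $Y_j := \sum_{a\notin S}\pi^*_{j,a}\mu_{j,a}$ is the unprobed contribution.

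Next, I would connect $\bar X_j$ and $Y_j$ to $g$ and $h$ by restricting the optimal allocation $\pi^*(S)$ to the probed and unprobed sides: set $\pi^{(g)}_{j,a} := \pi^*_{j,a}\mathbf{1}\{a\in S\}$ and $\pi^{(h)}_{j,a} := \pi^*_{j,a}\mathbf{1}\{a\notin S\}$. Since the per-agent budget $\sum_a \pi^*_{j,a} \le 1$ is inherited by each restriction, we have $\pi^{(g)}\in\Delta^A_S$ and $\pi^{(h)}\in\Delta^A_{[A]\setminus S}$. Plugging these into the definitions and again using independence across agents yields the two ``pure'' lower bounds
\[
g(S) \;\ge\; \prod_{j\in [M]}\bar X_j, \qquad h(S) \;\ge\; \prod_{j\in [M]} Y_j.
\]
The proof then reduces to the product-of-sums bound $\prod_{j}(\bar X_j + Y_j) \le \prod_j\bar X_j + \prod_j Y_j$.

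The main obstacle is precisely this last inequality. A binomial expansion of the left side produces $2^M-2$ ``cross'' summands of the form $\prod_{j\in T}\bar X_j\prod_{j\notin T}Y_j$ for proper subsets $T\subsetneq [M]$, and each must be absorbed into one of the two pure terms on the right. My plan is to exploit two structural features simultaneously: the per-agent cap $\bar X_j + Y_j = \sum_a\pi^*_{j,a}\mu_{j,a} \le 1$, which keeps every factor at most $1$, and the per-arm coupling $\sum_j\pi^*_{j,a}\le 1$ built into the feasibility of $\pi^*(S)$, which prevents any single cross summand from being too large. The natural route is an induction on $M$ that uses the monotonicity of $g$ (Lemma~1) and $h$ (Lemma~4) to shift each agent's mass onto whichever side is currently dominant without decreasing either pure product, thereby shrinking the uncovered cross-term budget step by step. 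This delicate bookkeeping is the crux of the argument; once it is in place the remaining manipulations should be routine.
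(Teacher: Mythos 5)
Your overall architecture --- restrict $\pi^*(S)$ to the probed and unprobed sides, lower-bound $g(S)$ and $h(S)$ by the resulting pure products, and finish with $\prod_j(\bar X_j+Y_j)\le\prod_j\bar X_j+\prod_j Y_j$ --- is essentially the paper's route (the paper keeps $X_j$ random and uses $\mathbb{E}[\prod_j X_j]\le g(S)$ directly, which coincides with your step after taking expectations across independent agents). You are also right that the product-of-sums inequality is the crux, and that is exactly where the plan fails: the inequality is false under the very structural hypotheses you intend to exploit. Take $M=2$, $A=2$, $S=\{1\}$, $R_{j,1}\equiv 1$ and $\mu_{j,2}=1$ for both agents, and $\pi^*_{j,1}=\pi^*_{j,2}=\tfrac12$. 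This satisfies the per-agent cap $\bar X_j+Y_j\le 1$ and the per-arm coupling $\sum_j\pi^*_{j,a}\le 1$, yet $\prod_j(\bar X_j+Y_j)=1$ while $\prod_j\bar X_j+\prod_j Y_j=\tfrac14+\tfrac14=\tfrac12$. No induction or mass-shifting bookkeeping can establish an inequality that is false for its intended inputs; moreover your intermediate bounds $g(S)\ge\prod_j\bar X_j$, $h(S)\ge\prod_j Y_j$ are needlessly lossy, since $g$ and $h$ re-optimize and can give each agent its full unit budget on one side, which your unrescaled restrictions $\pi^{(g)},\pi^{(h)}$ give away. In the instance above the lemma's conclusion survives only because $g(\{1\})=h(\{1\})=1$ under that re-optimization, not because of your chain of inequalities.

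You should also know that the paper's own proof rests on the identical inequality, justified there by the claim that $\prod_j(X_j+Y_j)\le\max\{\prod_j X_j,\prod_j Y_j\}$ because each factor $\lambda X_j+(1-\lambda)Y_j$ is maximized at a pure $\lambda\in\{0,1\}$. That step optimizes $\lambda$ separately for each factor of the product and is refuted by the same example; worse, the configuration $X_1=1,Y_1=0,X_2=0,Y_2=1$ (agent $1$ only valuable on the probed arm, agent $2$ only on the unprobed arm, each allocated fully to its good arm) is feasible and gives $\mathbb{E}[\mathrm{NSW}]=1$ while $g(S)=h(S)=0$, so the stated bound $\mathcal{R}(S)\le(1-\alpha(|S|))(g(S)+h(S))$ itself fails there. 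So you have correctly located the load-bearing step, but neither your plan nor the paper's argument closes it; any repair must change the decomposition (e.g., a rescaled per-agent comparison with a multiplicative loss such as $2^{M-1}$, or an added assumption guaranteeing every agent a nontrivial option on both sides) rather than prove the inequality as written.
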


\begin{proof}
Recall that by definition,
\[
\mathcal{R}(S) = (1 - \alpha(|S|))\, \mathbb{E}_R\Bigl[\mathrm{NSW}(S, R, \bmu, \pi^*(S))\Bigr],
\]
where the Nash Social Welfare (NSW) under the optimal allocation \(\pi^*(S)\) is given by
\[
\mathrm{NSW}(S, R, \bmu, \pi^*(S)) = \prod_{j\in [M]} \left( \sum_{a\in S} \pi^*_{j,a}(S) R_{j,a} + \sum_{a\notin S} \pi^*_{j,a}(S) \mu_{j,a} \right).
\]
For each agent \(j\), define
\[
X_j := \sum_{a\in S} \pi^*_{j,a}(S) R_{j,a} \quad \text{and} \quad Y_j := \sum_{a\notin S} \pi^*_{j,a}(S) \mu_{j,a}.
\]
Since \(R_{j,a},\,\mu_{j,a} \in [0,1]\) and
\[
\sum_{a\in S} \pi^*_{j,a}(S) + \sum_{a\notin S} \pi^*_{j,a}(S) \le 1,
\]
it follows that for each \(j\) we have \(X_j,\, Y_j \in [0,1]\) and \(X_j + Y_j \le 1\).

We now upper bound the expectation of the NSW by separating the contributions from the probed arms and the unprobed arms.

\medskip

\textbf{(1) Contribution from Probed Arms.}\\
Define the allocation \(\pi^S\) on \(S\) by
\[
\pi^S_{j,a} = \pi^*_{j,a}(S) \quad \text{for } a \in S.
\]
Then for each \(j\),
\[
\sum_{a\in S} \pi^S_{j,a} = \sum_{a\in S} \pi^*_{j,a}(S) \le 1,
\]
so that \(\pi^S\) is a feasible allocation in the set \(\Delta^A_S\). By the definition of \(g(S)\) (the maximum expected NSW attainable using only the probed arms), we have
\[
\mathbb{E}_R\left[\prod_{j\in [M]} \left(\sum_{a\in S} \pi^*_{j,a}(S) R_{j,a}\right)\right] \le g(S).
\]

\medskip

\textbf{(2) Contribution from Unprobed Arms.}\\
Similarly, define the allocation \(\pi^{\bar{S}}\) on the unprobed arms by
\[
\pi^{\bar{S}}_{j,a} = \pi^*_{j,a}(S) \quad \text{for } a \notin S.
\]
For each \(j\),
\[
\sum_{a\notin S} \pi^{\bar{S}}_{j,a} \le 1,
\]
so that \(\pi^{\bar{S}}\) is a feasible allocation in \(\Delta^A_{[A]\setminus S}\). By the definition of \(h(S)\) (the maximum NSW attainable using only the unprobed arms), we obtain
\[
\prod_{j\in [M]} \left(\sum_{a\notin S} \pi^*_{j,a}(S) \mu_{j,a}\right) \le h(S).
\]

\medskip

\textbf{(3) Upper Bound for the Mixed Allocation.}\\
Under the optimal allocation \(\pi^*(S)\), each agent \(j\) obtains a contribution of \(X_j + Y_j\) so that the overall NSW is
\[
\prod_{j\in [M]} (X_j + Y_j).
\]
We now show that
\[
\prod_{j\in [M]} (X_j + Y_j) \le \prod_{j\in [M]} X_j + \prod_{j\in [M]} Y_j.
\]

To see this, fix an agent $j$ and consider the function
\[
f_j(\lambda) = \ln\Bigl(\lambda X_j + (1-\lambda)Y_j\Bigr), \quad \lambda \in [0,1].
\]
Since the logarithm is strictly concave on $(0,1]$ and the mapping $\lambda \mapsto \lambda X_j + (1-\lambda)Y_j$ is linear, it follows that $f_j(\lambda)$ is strictly concave and attains its maximum at one of the endpoints, namely at $\lambda = 0$ or $\lambda = 1$. This means that for each $j$,
\[
\max\{X_j,\,Y_j\} \ge \lambda X_j + (1-\lambda)Y_j \quad \text{for all } \lambda\in[0,1].
\]
In other words, for each $j$ the best strategy within this one-dimensional mixture is to allocate all probability either to the contribution $X_j$ or to $Y_j$. In line with our additional assumption, when constructing an upper bound for the offline benchmark we therefore evaluate it through the two allocation classes that, for each agent $j$, choose either the probed contribution $X_j$ or the unprobed contribution $Y_j$. Thus, the maximum possible product in our analysis is achieved by one of the corresponding pure strategies:
\[
\prod_{j\in [M]} (X_j + Y_j) \le \max\Bigl\{ \prod_{j\in [M]} X_j,\; \prod_{j\in [M]} Y_j \Bigr\}.
\]

Since for any nonnegative numbers \(a\) and \(b\) we have \(\max\{a,b\} \le a+b\), it follows that
\[
\prod_{j\in [M]} (X_j + Y_j) \le \prod_{j\in [M]} X_j + \prod_{j\in [M]} Y_j.
\]
Taking expectations (noting that \(X_j\) depends on the randomness in \(R\) while \(Y_j\) is deterministic since it depends on \(\bmu\)), and using the bounds established in parts (1) and (2), we obtain
\[
\mathbb{E}_R\Bigl[\mathrm{NSW}(S, R, \bmu, \pi^*(S))\Bigr] \le g(S) + h(S).
\]

\medskip

\textbf{(4) Conclusion.}\\
Finally, since \(1 - \alpha(|S|) \ge 0\), multiplying the above inequality by \(1 - \alpha(|S|)\) yields
\[
\mathcal{R}(S) = (1-\alpha(|S|))\,\mathbb{E}_R\Bigl[\mathrm{NSW}(S, R, \bmu, \pi^*(S))\Bigr] \le (1-\alpha(|S|))\Bigl(g(S)+h(S)\Bigr).
\]
This completes the proof.
\end{proof}

\ignore{
\begin{theorem}\label{thm:offline-approx}
Let \( S^* \) be an optimal subset maximizing \( \mathcal{R}(S) \).
Then the set \( S^{\mathrm{pr}} \) returned by Algorithm~\ref{alg:offline-greedy}, for any $\zeta \geq 1$, satisfies
\[
\mathcal{R}(S^{\mathrm{pr}}) 
\;\;\ge\;
\frac{\,e-1\,}{\,2e-1\,}\; \frac{1}{\zeta} 
\mathcal{R}(S^*).
\]
\end{theorem}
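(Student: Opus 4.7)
The plan is to combine the additive decomposition of $\mathcal{R}(S^*)$ from Lemma~\ref{lemma:upper_bound} with the standard $(1-1/e)$ guarantee of greedy submodular maximization on $f_{\mathrm{upper}}$, and then run a case analysis on what Algorithm~\ref{alg:offline-greedy} returns. Applying Lemma~\ref{lemma:upper_bound} together with Lemma~\ref{lemma:monotonicity_h} (which gives $h(S^*) \le h(\emptyset)$), one obtains
\[
\mathcal{R}(S^*) \;\le\; X^* + h(\emptyset),\qquad X^* \;:=\; (1-\alpha(|S^*|))\,g(S^*),
\]
and $\mathcal{R}(\emptyset) = h(\emptyset)$ since $\alpha(0)=0$ and $g(\emptyset)=0$. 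This splits the optimum into a probing contribution $X^*$ and a ``no-probing'' baseline $h(\emptyset)$.

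Next, Lemma~\ref{submodular} gives that $f_{\mathrm{upper}}$ is monotone submodular, so the classical greedy analysis applied at cardinality $|S^*|$ yields $f_{\mathrm{upper}}(S_{|S^*|}) \ge (1-1/e)\,f_{\mathrm{upper}}(S^*)$. Combining with the fact that $f_{\mathrm{upper}}$ dominates $g$ by construction of $\phi$, and multiplying by $(1-\alpha(|S^*|))$, produces
\[
V_{j_1} \;:=\; \max_{0 \le j \le I}(1-\alpha(|S_j|))\,f_{\mathrm{upper}}(S_j) \;\ge\; V_{|S^*|} \;\ge\; (1-1/e)\,X^*.
\]

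I would then split into two cases. If $V_{j_1} < h(\emptyset)$ (Case~A), the algorithm returns $\emptyset$ at the first iteration, so $\mathcal{R}(S^{\mathrm{pr}}) = h(\emptyset)$. Substituting $X^* \le \tfrac{e}{e-1} V_{j_1} < \tfrac{e}{e-1}\,h(\emptyset)$ into the decomposition yields $\mathcal{R}(S^*) < \tfrac{2e-1}{e-1}\,h(\emptyset)$, i.e.\ $\mathcal{R}(S^{\mathrm{pr}}) \ge \tfrac{e-1}{2e-1}\mathcal{R}(S^*) \ge \tfrac{e-1}{(2e-1)\zeta}\mathcal{R}(S^*)$. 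If instead $V_{j_1} \ge h(\emptyset)$ (Case~B), then whenever the algorithm returns a non-empty $S^{\mathrm{pr}}$ the tightness check forces $\mathcal{R}(S^{\mathrm{pr}}) \ge V_{j^{\mathrm{pr}}}/\zeta \ge h(\emptyset)/\zeta$, while every set the algorithm skips satisfies $\mathcal{R}(S_{j_l}) < V_{j_l}/\zeta$, so the $\emptyset$ fall-back is never needlessly dominated. Together this produces the uniform lower bound
\[
\mathcal{R}(S^{\mathrm{pr}}) \;\ge\; \tfrac{1}{\zeta}\,\max\!\bigl((1-1/e)\,X^*,\; h(\emptyset)\bigr),
\]
and an elementary one-variable minimization in $t := X^*/h(\emptyset)$ shows that $\max((1-1/e)X^*,h(\emptyset))/(X^*+h(\emptyset))$ attains its minimum value $\tfrac{e-1}{2e-1}$ precisely at $(1-1/e)X^* = h(\emptyset)$. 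Chaining with $\mathcal{R}(S^*)\le X^*+h(\emptyset)$ gives the theorem.

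The main obstacle is Case~B: when Algorithm~\ref{alg:offline-greedy} bypasses $S_{j_1}$ because the surrogate is loose and eventually returns a later set in the sorted order (with a smaller $V_{j^{\mathrm{pr}}}$), one must carefully verify that the combination of $V_{j^{\mathrm{pr}}} \ge h(\emptyset)$, the $\emptyset$ fallback, and the conditions on the skipped sets still dominates both $(1-1/e)X^*/\zeta$ and $h(\emptyset)/\zeta$. This requires a tighter accounting of the sorted sequence and of the skipping logic controlled by $\zeta$. A secondary technical point is justifying $f_{\mathrm{upper}}(S) \ge g(S)$ used in the greedy step, which follows from the piecewise-linear construction of $\phi$ in Section~4.2 under the partition-fineness assumption on $\{\tau_i\}$.
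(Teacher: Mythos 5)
Your overall route is the same as the paper's: decompose $\mathcal{R}(S^*)$ via Lemma~\ref{lemma:upper_bound} and Lemma~\ref{lemma:monotonicity_h} into $(1-\alpha(|S^*|))\,g(S^*)+h(\emptyset)$, pass to $f_{\mathrm{upper}}$, invoke the $(1-1/e)$ greedy guarantee from Lemma~\ref{submodular}, and then use the algorithm's acceptance test $(1-\alpha(|S_j|))f_{\mathrm{upper}}(S_j)\le\zeta\,\mathcal{R}(S_j)$ together with the $h(\emptyset)$ threshold. Your Case~A and your one-variable minimization giving the constant $\tfrac{e-1}{2e-1}$ at $(1-1/e)X^*=h(\emptyset)$ are fine and are equivalent to the paper's chained inequality (the paper instead solves $\mathcal{R}(S^*)\le\tfrac{e}{e-1}\zeta\,\mathcal{R}(S^{\mathrm{pr}})+\mathcal{R}(S^{\mathrm{pr}})$, which yields the same constant precisely when $\zeta\ge1$).

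The genuine gap is exactly the one you flag and do not close: Case~B, when the top-ranked candidate is skipped. Your claimed uniform bound $\mathcal{R}(S^{\mathrm{pr}})\ge\tfrac{1}{\zeta}\max\bigl((1-1/e)X^*,\,h(\emptyset)\bigr)$ needs $(1-1/e)X^*\le V_{j^{\mathrm{pr}}}$ for the index $j^{\mathrm{pr}}$ that is actually returned, but all the greedy analysis gives you is $(1-1/e)X^*\le V_{j_1}$ for the top-ranked index; when $S_{j_1}$ is rejected because $V_{j_1}>\zeta\,\mathcal{R}(S_{j_1})$, there is no inequality connecting $V_{j_1}$ to $\mathcal{R}$ of the later set that is eventually returned (nor to $h(\emptyset)$ if every candidate is skipped and the terminal fallback $\emptyset$ is returned). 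So the uniform lower bound is asserted rather than proved, and the minimization that follows rests on it. For what it is worth, the paper's own proof silently sidesteps the same issue: it takes $j$ to be the argmax of $(1-\alpha(|S_i|))f_{\mathrm{upper}}(S_i)$ and then applies the filtering condition to that same $j$, i.e., it implicitly assumes the returned set is the first element of the sorted order, which holds only when no skipping occurs. Closing Case~B would require either an a priori guarantee that the surrogate never exceeds $\zeta\,\mathcal{R}$ on the greedy sets, or an argument relating the surrogate values of skipped sets to the accepted one. Your secondary point, that $g(S)\le f_{\mathrm{upper}}(S)$ needs justification, is also shared with the paper and is worth making explicit, since $\phi(x)\ge\log(x)$ does not by itself imply $\phi(x)\ge x$.
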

}

\begin{theorem}\label{thm:offline-approx}
Let \( S^* \) be an optimal subset maximizing \( \mathcal{R}(S) \).
Let
\[
j^\star \in \arg\max_{i\in\{0,\ldots,I\}}
\left\{(1-\alpha(|S_i|)) f_{\mathrm{upper}}(S_i)\right\}.
\]
Define
\[
\zeta_0 \;:=\;
\frac{(1-\alpha(|S_{j^\star}|))\, f_{\mathrm{upper}}(S_{j^\star})}{\mathcal{R}(S_{j^\star})},
\]
and assume \(\zeta \ge \max\{1,\zeta_0\}\).
Then the set \( S^{\mathrm{pr}} \) returned by Algorithm~\ref{alg:offline-greedy} satisfies
\[
\mathcal{R}(S^{\mathrm{pr}})
\;\ge\;
\frac{e-1}{2e-1}\cdot\frac{1}{\zeta}\,\mathcal{R}(S^*).
\]
\end{theorem}

\ignore{
\begin{proof}
By Lemma~\ref{lemma:upper_bound}, we have:
\[
\mathcal{R}(S) \leq (1 - \alpha(|S|)) \bigl[ g(S) + h(S) \bigr].
\]
Applying this to \( S^* \), we obtain:
\[
\begin{aligned}
\mathcal{R}(S^*) 
&\leq (1-\alpha(|S^*|))\,\bigl[g(S^*) + h(S^*)\bigr] \\
&\leq (1-\alpha(|S^*|))\,g(S^*) + h(\emptyset) 
\quad \text{(by Lemma \ref{lemma:monotonicity_h})} \\
&\leq (1-\alpha(|S^*|))\,f_{\mathrm{upper}}(S^*) + h(\emptyset) \\
&\leq \frac{e}{e-1} (1-\alpha(|S^*|)) f_{\mathrm{upper}}(S_{|S^*|}) + h(\emptyset)
\quad \text{(submodular factor \( 1-1/e \) on \( f_{\mathrm{upper}} \))} \\
&\leq \frac{e}{e-1} (1-\alpha(|\widetilde{S}^*|)) f_{\mathrm{upper}}(\widetilde{S}^*) + h(\emptyset) \\
&\quad \text{(by selection of \( \widetilde{S}^* \) such that \( |S_{|S^*|}| = |S^*| \))} \\
&\leq \frac{e}{e-1} (1-\alpha(|S_j|)) f_{\mathrm{upper}}(S_j) + h(\emptyset) \\
&\quad \text{(by definition of \( j = \arg\max \{(1-\alpha(|S_i|)) f_{\mathrm{upper}}(S_i)\} \))}.
\end{aligned}
\]

From the algorithm, we have that the chosen set \( S^{\mathrm{pr}} \) satisfies:
\[
S^{\mathrm{pr}} = S_j,
\]
which implies:
\[
\mathcal{R}(S^{\mathrm{pr}}) = (1-\alpha(|S_j|)) \mathbb{E}[\mathrm{NSW}(S_j,\dots)].
\]
By the algorithm's filtering condition,
\[
(1-\alpha(|S_j|)) f_{\mathrm{upper}}(S_j) \leq \zeta \mathcal{R}(S^{\mathrm{pr}}),
\]
which leads to:
\[
\mathcal{R}(S^*) \leq \frac{e}{e-1} \zeta \mathcal{R}(S^{\mathrm{pr}}) + h(\emptyset).
\]
If \( S^{\mathrm{pr}} = \emptyset \), then \( \mathcal{R}(S^{\mathrm{pr}}) = h(\emptyset) \) satisfies the bound trivially. Otherwise, we conclude:
\[
\mathcal{R}(S^*) \leq \frac{e}{e-1} \zeta \mathcal{R}(S^{\mathrm{pr}}) + \mathcal{R}(S^{\mathrm{pr}}).
\]
This simplifies to:
\[
\mathcal{R}(S^{\mathrm{pr}}) \geq \frac{e-1}{2e-1} \frac{1}{\zeta} \mathcal{R}(S^*).
\]

We assume that $\zeta$ is chosen such that Algorithm~1 always returns a non-empty
probing set unless the baseline $h(\emptyset)$ is strictly better; in the latter case
$S^* = \emptyset$ and the bound holds trivially.

\end{proof}
}

\begin{proof}
By Lemma~\ref{lemma:upper_bound}, for any set $S$ we have
\[
\mathcal{R}(S) \le (1-\alpha(|S|))\bigl[g(S)+h(S)\bigr].
\]
Applying this inequality to the optimal set $S^*$ yields
\[
\begin{aligned}
\mathcal{R}(S^*)
&\le (1-\alpha(|S^*|))\bigl[g(S^*)+h(S^*)\bigr] \\
&\le (1-\alpha(|S^*|))\,g(S^*) + h(\emptyset)
\quad \text{(by Lemma~\ref{lemma:monotonicity_h})} \\
&\le (1-\alpha(|S^*|))\,f_{\mathrm{upper}}(S^*) + h(\emptyset).
\end{aligned}
\]
Since $f_{\mathrm{upper}}$ is monotone submodular, the standard greedy guarantee
implies that for the greedy set $S_{|S^*|}$ of size $|S^*|$,
\[
f_{\mathrm{upper}}(S_{|S^*|}) \ge \left(1-\frac{1}{e}\right)\max_{|S|=|S^*|} f_{\mathrm{upper}}(S)
\;\;\ge\;\;
\left(1-\frac{1}{e}\right) f_{\mathrm{upper}}(S^*),
\]
which is equivalent to
\[
f_{\mathrm{upper}}(S^*) \le \frac{e}{e-1}\, f_{\mathrm{upper}}(S_{|S^*|}).
\]
Substituting this into the previous bound gives
\[
\mathcal{R}(S^*)
\le \frac{e}{e-1}(1-\alpha(|S^*|)) f_{\mathrm{upper}}(S_{|S^*|}) + h(\emptyset).
\]
Let $\widetilde S^*$ be a greedy candidate with $|\widetilde S^*|=|S^*|$ (so that
$\widetilde S^*=S_{|S^*|}$), hence the above inequality can be written as
\[
\mathcal{R}(S^*)
\le \frac{e}{e-1}(1-\alpha(|\widetilde S^*|)) f_{\mathrm{upper}}(\widetilde S^*) + h(\emptyset).
\]

Now define
\[
j^\star \in \arg\max_{i\in\{0,\ldots,I\}}
\left\{(1-\alpha(|S_i|)) f_{\mathrm{upper}}(S_i)\right\}.
\]
By the definition of $j^\star$, we have
\[
(1-\alpha(|\widetilde S^*|)) f_{\mathrm{upper}}(\widetilde S^*)
\le (1-\alpha(|S_{j^\star}|)) f_{\mathrm{upper}}(S_{j^\star}).
\]
Therefore,
\begin{equation}\label{eq:opt-upper-max}
\mathcal{R}(S^*)
\le \frac{e}{e-1}(1-\alpha(|S_{j^\star}|)) f_{\mathrm{upper}}(S_{j^\star}) + h(\emptyset).
\end{equation}

We now relate the bound \eqref{eq:opt-upper-max} to the set $S^{\mathrm{pr}}$
returned by Algorithm~\ref{alg:offline-greedy}.
Recall that the candidates $\{S_i\}_{i=0}^I$ are scanned in non-increasing order of
$(1-\alpha(|S_i|))f_{\mathrm{upper}}(S_i)$.
Let $j^\star$ be the maximizer defined in Theorem~\ref{thm:offline-approx}.
Under the assumption $\zeta \ge \zeta_0$, we have
\[
(1-\alpha(|S_{j^\star}|)) f_{\mathrm{upper}}(S_{j^\star})
\le \zeta\,\mathcal{R}(S_{j^\star}),
\]
so $S_{j^\star}$ passes the filtering condition in Line~11.
Therefore, the algorithm will return $S_{j^\star}$ once it is reached,
unless it terminates earlier at Line~10.

\paragraph{Case 1: $S^{\mathrm{pr}}=\emptyset$.}
If the algorithm returns $\emptyset$, it must terminate at Line~10 before accepting any set.
Since $S_{j^\star}$ is the first candidate in the sorted order, this implies
\[
(1-\alpha(|S_{j^\star}|)) f_{\mathrm{upper}}(S_{j^\star}) < h(\emptyset).
\]
Because $j^\star$ maximizes $(1-\alpha(|S|))f_{\mathrm{upper}}(S)$, we also have
\[
(1-\alpha(|S_i|)) f_{\mathrm{upper}}(S_i) \le (1-\alpha(|S_{j^\star}|)) f_{\mathrm{upper}}(S_{j^\star})
< h(\emptyset),\qquad \forall i.
\]
Substituting into \eqref{eq:opt-upper-max} yields
\[
\mathcal{R}(S^*)
<
\frac{e}{e-1}h(\emptyset)+h(\emptyset)
=
\left(\frac{e}{e-1}+1\right)h(\emptyset).
\]
Since $\mathcal{R}(\emptyset)=h(\emptyset)$ and $S^{\mathrm{pr}}=\emptyset$, we obtain
\[
\mathcal{R}(S^{\mathrm{pr}})
=\mathcal{R}(\emptyset)
\ge
\frac{1}{\frac{e}{e-1}+1}\,\mathcal{R}(S^*)
=
\frac{e-1}{2e-1}\,\mathcal{R}(S^*)
\ge
\frac{e-1}{2e-1}\cdot\frac{1}{\zeta}\,\mathcal{R}(S^*),
\]
where the last inequality uses $\zeta\ge 1$.

\paragraph{Case 2: $S^{\mathrm{pr}}\neq\emptyset$.}
In this case, by the argument above, the algorithm must return $S_{j^\star}$, i.e.,
\[
S^{\mathrm{pr}}=S_{j^\star}.
\]
Since $S^{\mathrm{pr}}$ passes Line~11, we have
\[
(1-\alpha(|S^{\mathrm{pr}}|)) f_{\mathrm{upper}}(S^{\mathrm{pr}})
\le \zeta\,\mathcal{R}(S^{\mathrm{pr}}).
\]
Moreover, since the algorithm did not terminate at Line~10 when it returned $S^{\mathrm{pr}}$,
we also have
\[
(1-\alpha(|S^{\mathrm{pr}}|)) f_{\mathrm{upper}}(S^{\mathrm{pr}}) \ge h(\emptyset).
\]
Combining the two inequalities gives
\[
h(\emptyset)\le \zeta\,\mathcal{R}(S^{\mathrm{pr}}).
\]
Substituting $S^{\mathrm{pr}}=S_{j^\star}$ and the filter inequality into \eqref{eq:opt-upper-max}
yields
\[
\mathcal{R}(S^*)
\le \frac{e}{e-1}\zeta\,\mathcal{R}(S^{\mathrm{pr}}) + h(\emptyset)
\le \left(\frac{e}{e-1}+1\right)\zeta\,\mathcal{R}(S^{\mathrm{pr}}).
\]
Rearranging completes the proof:
\[
\mathcal{R}(S^{\mathrm{pr}})
\ge \frac{1}{\left(\frac{e}{e-1}+1\right)\zeta}\,\mathcal{R}(S^*)
=
\frac{e-1}{2e-1}\cdot\frac{1}{\zeta}\,\mathcal{R}(S^*).
\]

\ignore{
We now relate the bound \eqref{eq:opt-upper-max} to the set $S^{\mathrm{pr}}$
returned by Algorithm~\ref{alg:offline-greedy}. Recall that Algorithm~\ref{alg:offline-greedy}
sorts the greedy candidates $\{S_i\}_{i=0}^I$ in non-increasing order of
\((1-\alpha(|S_i|)) f_{\mathrm{upper}}(S_i)\), and then scans them sequentially.
We consider two cases.

\paragraph{Case 1: $S^{\mathrm{pr}}=\emptyset$.}
In this case, the algorithm terminates at Line~10. This means that when scanning
the candidates from the largest to the smallest upper bound, the algorithm finds
an index $j$ such that
\[
(1-\alpha(|S_j|)) f_{\mathrm{upper}}(S_j) < h(\emptyset),
\]
and returns $S^{\mathrm{pr}}=\emptyset$ immediately.
Since the list is sorted in non-increasing order, the above inequality implies that
\[
(1-\alpha(|S_i|)) f_{\mathrm{upper}}(S_i) \le (1-\alpha(|S_j|)) f_{\mathrm{upper}}(S_j) < h(\emptyset),
\qquad \forall i\in\{0,\ldots,I\}.
\]
In particular, it holds for the maximizer $j^\star$, i.e.,
\[
(1-\alpha(|S_{j^\star}|)) f_{\mathrm{upper}}(S_{j^\star}) < h(\emptyset).
\]
Plugging this inequality into \eqref{eq:opt-upper-max} yields
\[
\mathcal{R}(S^*)
<
\frac{e}{e-1}h(\emptyset) + h(\emptyset)
=
\left(\frac{e}{e-1}+1\right) h(\emptyset).
\]
Moreover, since $\mathcal{R}(\emptyset)=h(\emptyset)$ and $S^{\mathrm{pr}}=\emptyset$,
we obtain
\[
\mathcal{R}(S^*)
<
\left(\frac{e}{e-1}+1\right)\mathcal{R}(S^{\mathrm{pr}}).
\]
Rearranging gives
\[
\mathcal{R}(S^{\mathrm{pr}})
>
\frac{1}{\frac{e}{e-1}+1}\,\mathcal{R}(S^*)
=
\frac{e-1}{2e-1}\,\mathcal{R}(S^*).
\]
Finally, since $\zeta\ge 1$, we have
\[
\frac{e-1}{2e-1}\,\mathcal{R}(S^*)
\ge
\frac{e-1}{2e-1}\cdot\frac{1}{\zeta}\,\mathcal{R}(S^*),
\]
which proves the desired bound for Case~1.

\paragraph{Case 2: $S^{\mathrm{pr}}\neq\emptyset$.}
In this case the algorithm does not terminate at Line~10, hence the returned set
$S^{\mathrm{pr}}$ is some candidate $S_{\hat j}$ that passes the filtering condition
in Line~11. That is, $S^{\mathrm{pr}} = S_{\hat j}$ for an index $\hat j$ such that
\[
(1-\alpha(|S_{\hat j}|)) f_{\mathrm{upper}}(S_{\hat j})
\le \zeta\,\mathcal{R}(S_{\hat j})
=
\zeta\,\mathcal{R}(S^{\mathrm{pr}}).
\]
To complete the approximation argument, we use the fact that Algorithm~\ref{alg:offline-greedy}
scans candidates in decreasing order of \((1-\alpha(|S_i|)) f_{\mathrm{upper}}(S_i)\).
In particular, when the maximizer $S_{j^\star}$ satisfies the filtering condition in Line~11,
the algorithm will return it upon reaching it, i.e.,
\[
S^{\mathrm{pr}} = S_{j^\star}.
\]
Under this event, we can apply the filtering inequality directly to $S_{j^\star}$:
\[
(1-\alpha(|S_{j^\star}|)) f_{\mathrm{upper}}(S_{j^\star})
\le \zeta\,\mathcal{R}(S_{j^\star})
=
\zeta\,\mathcal{R}(S^{\mathrm{pr}}).
\]
Substituting this into \eqref{eq:opt-upper-max} yields
\[
\mathcal{R}(S^*)
\le \frac{e}{e-1}\zeta\,\mathcal{R}(S^{\mathrm{pr}}) + h(\emptyset).
\]
Furthermore, since $S^{\mathrm{pr}}\neq\emptyset$, the algorithm did not return at Line~10.
In particular, at the time it returned $S^{\mathrm{pr}}$, we must have
\[
(1-\alpha(|S^{\mathrm{pr}}|)) f_{\mathrm{upper}}(S^{\mathrm{pr}}) \ge h(\emptyset).
\]
Combining this with the basic upper bound $\mathcal{R}(S^{\mathrm{pr}})
\ge (1-\alpha(|S^{\mathrm{pr}}|)) f_{\mathrm{upper}}(S^{\mathrm{pr}})$ implies that
\(h(\emptyset)\le \mathcal{R}(S^{\mathrm{pr}})\). Therefore,
\[
\mathcal{R}(S^*)
\le \frac{e}{e-1}\zeta\,\mathcal{R}(S^{\mathrm{pr}}) + \mathcal{R}(S^{\mathrm{pr}})
=
\left(\frac{e}{e-1}\zeta + 1\right)\mathcal{R}(S^{\mathrm{pr}}).
\]
Rearranging completes the proof:
\[
\mathcal{R}(S^{\mathrm{pr}})
\ge \frac{1}{\frac{e}{e-1}\zeta+1}\,\mathcal{R}(S^*)
=
\frac{e-1}{2e-1}\cdot\frac{1}{\zeta}\,\mathcal{R}(S^*).
\]
}
\end{proof}

\subsection{Online Setting}

\begin{lemma}[Smoothness of the NSW Objective]\label{lem:nsw_smooth-appendix}
Let \(\pi \in \Delta^{M \times A}\) be any allocation matrix where \(\pi_{j,a}\) denotes the probability that agent \(j\) is assigned arm \(a\). Let \(\mu, \nu \in [0,1]^{M \times A}\) be two reward matrices. For any probing set \(S \subseteq [A]\) and observed rewards \(R\), define
\[
\mathrm{NSW}(S, R, \mu, \pi) = \prod_{j=1}^{M}\!\left( \sum_{a \in S} \pi_{j,a} R_{j,a} + \sum_{a \notin S} \pi_{j,a} \mu_{j,a} \right).
\]
Then, we have
\[
\begin{aligned}
\Bigl|\mathrm{NSW}(S, R, \mu, \pi) &- \mathrm{NSW}(S, R, \nu, \pi)\Bigr| \\
&\le \sum_{j=1}^{M} \sum_{a=1}^{A} \pi_{j,a} \Bigl|\mu_{j,a} - \nu_{j,a}\Bigr|.
\end{aligned}
\]
\end{lemma}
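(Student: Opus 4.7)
The plan is to reduce the product-versus-product comparison to a sum of single-coordinate differences via a standard telescoping identity, then use the triangle inequality together with the fact that each per-agent utility lies in $[0,1]$.

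First, for each agent $j$, define the per-agent utility
\[
U_j(\mu) \;=\; \sum_{a \in S} \pi_{j,a}\,R_{j,a} \;+\; \sum_{a \notin S} \pi_{j,a}\,\mu_{j,a},
\]
and analogously $U_j(\nu)$. Since $R_{j,a},\mu_{j,a},\nu_{j,a} \in [0,1]$ and the allocation constraint gives $\sum_{a} \pi_{j,a} \le 1$, a convex-combination argument shows $U_j(\mu), U_j(\nu) \in [0,1]$. Moreover, the probed terms cancel, so
\[
U_j(\mu) - U_j(\nu) \;=\; \sum_{a \notin S} \pi_{j,a}\bigl(\mu_{j,a} - \nu_{j,a}\bigr).
\]

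Next, I would apply the standard telescoping identity for a difference of products:
\[
\prod_{j=1}^{M} U_j(\mu) \;-\; \prod_{j=1}^{M} U_j(\nu) \;=\; \sum_{j=1}^{M} \Bigl(\prod_{i<j} U_i(\nu)\Bigr)\bigl(U_j(\mu) - U_j(\nu)\bigr)\Bigl(\prod_{i>j} U_i(\mu)\Bigr).
\]
Taking absolute values, using the triangle inequality, and observing that each factor $U_i(\mu), U_i(\nu)$ is in $[0,1]$ (so the two surrounding products are at most $1$), we get
\[
\Bigl|\mathrm{NSW}(S,R,\mu,\pi) - \mathrm{NSW}(S,R,\nu,\pi)\Bigr| \;\le\; \sum_{j=1}^{M} \bigl|U_j(\mu) - U_j(\nu)\bigr|.
\]

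Finally, substituting the expression for $U_j(\mu) - U_j(\nu)$, applying the triangle inequality inside the sum over $a \notin S$, and then enlarging the index set from $a \notin S$ to $a \in [A]$ (this is valid because the added terms are nonnegative), yields
\[
\sum_{j=1}^{M} \bigl|U_j(\mu) - U_j(\nu)\bigr| \;\le\; \sum_{j=1}^{M} \sum_{a \notin S} \pi_{j,a}\bigl|\mu_{j,a}-\nu_{j,a}\bigr| \;\le\; \sum_{j=1}^{M}\sum_{a=1}^{A} \pi_{j,a}\bigl|\mu_{j,a}-\nu_{j,a}\bigr|,
\]
which is the claimed bound. There is no real obstacle here; the only subtlety worth verifying carefully is that each $U_j(\mu)$ and $U_j(\nu)$ genuinely lies in $[0,1]$ so that the telescoping factors are uniformly bounded by $1$ — this uses both the row-sum constraint on $\pi$ and the assumption that rewards and means are in $[0,1]$.
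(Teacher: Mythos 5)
Your proof is correct and follows essentially the same route as the paper's: the telescoping identity for a difference of products is just the closed form of the paper's induction on $M$ (which peels off one factor per step via the same add-and-subtract trick), and both arguments then use the boundedness of each per-agent utility in $[0,1]$ and the triangle inequality over the unprobed arms. No gaps.
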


\begin{proof}
Define for each agent \(j\) the aggregated reward under the policy \(\pi\) as
\[
s_j := \sum_{a \in S} \pi_{j,a} R_{j,a} + \sum_{a \notin S} \pi_{j,a} \mu_{j,a}, \quad
t_j := \sum_{a \in S} \pi_{j,a} R_{j,a} + \sum_{a \notin S} \pi_{j,a} \nu_{j,a}.
\]
Then, the NSW objective can be written as
\[
\mathrm{NSW}(S, R, \mu, \pi) = \prod_{j=1}^{M} s_j, \quad
\mathrm{NSW}(S, R, \nu, \pi) = \prod_{j=1}^{M} t_j.
\]
Our goal is to show that
\[
\left|\prod_{j=1}^{M} s_j - \prod_{j=1}^{M} t_j\right| \le \sum_{j=1}^{M} \left|s_j - t_j\right|,
\]
and then use the triangle inequality to relate \(|s_j - t_j|\) to \(\sum_{a=1}^{A} \pi_{j,a} |\mu_{j,a} - \nu_{j,a}|\).

We proceed by induction on \(M\).

\textbf{Base case (\(M=1\)):}  
When \(M=1\), we have
\[
\left|s_1 - t_1\right| = \left|s_1 - t_1\right|,
\]
so the inequality holds with equality.

\textbf{Inductive step:}  
Assume the inequality holds for \(M-1\). For \(M\), write
\[
\left|\prod_{j=1}^{M} s_j - \prod_{j=1}^{M} t_j\right| = \left|s_M \prod_{j=1}^{M-1} s_j - t_M \prod_{j=1}^{M-1} t_j\right|.
\]
Adding and subtracting \(s_M \prod_{j=1}^{M-1} t_j\) yields
\[
\left|s_M\left(\prod_{j=1}^{M-1} s_j - \prod_{j=1}^{M-1} t_j\right) + \left(s_M - t_M\right)\prod_{j=1}^{M-1} t_j\right|.
\]
By the triangle inequality,
\[
\le |s_M| \cdot \left|\prod_{j=1}^{M-1} s_j - \prod_{j=1}^{M-1} t_j\right| + \left|s_M - t_M\right| \cdot \left|\prod_{j=1}^{M-1} t_j\right|.
\]
Since \(\mu_{j,a}\) and \(\nu_{j,a}\) lie in \([0,1]\) and \(\pi_{j,a}\) is a probability matrix, both \(s_j\) and \(t_j\) are convex combinations of numbers in \([0,1]\) and hence lie in \([0,1]\). In particular, \(|s_M| \le 1\) and \(\left|\prod_{j=1}^{M-1} t_j\right| \le 1\). Therefore,
\[
\left|\prod_{j=1}^{M} s_j - \prod_{j=1}^{M} t_j\right| \le \left|\prod_{j=1}^{M-1} s_j - \prod_{j=1}^{M-1} t_j\right| + \left|s_M - t_M\right|.
\]
By the inductive hypothesis,
\[
\left|\prod_{j=1}^{M-1} s_j - \prod_{j=1}^{M-1} t_j\right| \le \sum_{j=1}^{M-1} |s_j - t_j|.
\]
Hence,
\[
\left|\prod_{j=1}^{M} s_j - \prod_{j=1}^{M} t_j\right| \le \sum_{j=1}^{M-1} |s_j - t_j| + |s_M - t_M| = \sum_{j=1}^{M} |s_j - t_j|.
\]
Finally, by the triangle inequality applied to \(s_j - t_j\), we have
\[
|s_j - t_j| = \Bigl|\sum_{a \notin S} \pi_{j,a}\bigl(\mu_{j,a} - \nu_{j,a}\bigr)\Bigr| \le \sum_{a \notin S} \pi_{j,a}\, \Bigl|\mu_{j,a} - \nu_{j,a}\Bigr|.
\]
Summing over \(j\) and extending the sum from \(\sum_{a \notin S}\) to \(\sum_{a=1}^{A}\), we obtain:
\[
\sum_{j=1}^{M} |s_j - t_j| \le \sum_{j=1}^{M} \sum_{a=1}^{A} \pi_{j,a}\, \Bigl|\mu_{j,a} - \nu_{j,a}\Bigr|.
\]
This completes the proof.
\end{proof}

\begin{lemma}
\label{lem:4.2prime-appendix}
Let \(\delta \in (0,1)\). Then with probability at least \(1-\frac{\delta}{2}\), for all \(t > A\), \(a \in [A]\), and \(j \in [M]\), we have
\[
\bigl|\mu_{j,a} - \hat{\mu}_{j,a,t}\bigr|
\;\;\le\;
\sqrt{\frac{2(\hat{\mu}_{j,a,t} - \hat{\mu}_{j,a,t}^2) \ln\!\bigl(\tfrac{2 M A T}{\delta}\bigr)}{N_{j,a,t}}}
\;\;+
\frac{\ln\!\bigl(\tfrac{2 M A T}{\delta}\bigr)}{3 N_{j,a,t}}
\;=\;
w_{j,a,t},
\]
where \(N_{j,a,t} = \sum_{\tau=1}^{t-1} \mathbf{1}\{a_{j,\tau} = a\}\) denotes the number of times agent \(j\) has pulled arm \(a\) up to time \(t-1\).
\end{lemma}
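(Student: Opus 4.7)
The plan is to derive this as an empirical Bernstein inequality applied pointwise to each triple $(j,a,t)$, followed by a union bound. The high-level structure has three stages: (i) bound the deviation $|\mu_{j,a} - \hat{\mu}_{j,a,t}|$ in terms of the \emph{true} variance $\sigma_{j,a}^2 \le \mu_{j,a}(1-\mu_{j,a})$ using Bernstein's inequality; (ii) replace $\mu_{j,a}(1-\mu_{j,a})$ by the empirical quantity $\hat{\mu}_{j,a,t}(1-\hat{\mu}_{j,a,t})$ at the cost of an additional lower-order term; and (iii) union-bound over $j\in[M]$, $a\in[A]$, and all possible sample counts $n\le t \le T$.

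First, fix $(j,a)$ and condition on $N_{j,a,t}=n$. Because the rewards $R_{j,a,\tau}$ are i.i.d.\ draws from $D_{j,a}$ supported in $[0,1]$, the per-sample variance is bounded by $\mu_{j,a}(1-\mu_{j,a})\le 1/4$, and Bernstein's inequality yields, for any $\delta'\in(0,1)$,
\[
\Pr\!\left[\,|\hat{\mu}_{j,a,t}-\mu_{j,a}| \ge \sqrt{\tfrac{2\,\mu_{j,a}(1-\mu_{j,a})\,\ln(2/\delta')}{n}}+\tfrac{\ln(2/\delta')}{3n}\,\right]\;\le\;\delta'.
\]
Setting $\delta' = \delta/(MAT)$ and taking a union bound over all $j\in[M]$, $a\in[A]$, and all admissible sample counts $n\in\{1,\dots,T\}$ produces a total failure probability of at most $MAT\cdot T\cdot \delta/(MAT)$; by using slightly more careful peeling (covering only the actual $N_{j,a,t}$ values observed up to $T$, or absorbing the extra $T$ factor into the logarithm), one reaches the target budget $\delta/2$ with the $\ln(2MAT/\delta)$ factor appearing in the bound.

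Second, I need to convert $\mu_{j,a}(1-\mu_{j,a})$ to $\hat{\mu}_{j,a,t}(1-\hat{\mu}_{j,a,t})$. Since the map $x\mapsto x(1-x)$ is $1$-Lipschitz on $[0,1]$, the event above implies
\[
\mu_{j,a}(1-\mu_{j,a}) \;\le\; \hat{\mu}_{j,a,t}(1-\hat{\mu}_{j,a,t})\;+\;|\mu_{j,a}-\hat{\mu}_{j,a,t}|.
\]
Substituting back into Bernstein's bound gives a quadratic-type inequality in $|\mu_{j,a}-\hat{\mu}_{j,a,t}|$; solving it (or applying the standard empirical Bernstein argument of Maurer--Pontil) absorbs the cross term into the $1/N_{j,a,t}$ additive piece, yielding the stated bound
\[
|\mu_{j,a}-\hat{\mu}_{j,a,t}|\;\le\;\sqrt{\tfrac{2\,\hat{\mu}_{j,a,t}(1-\hat{\mu}_{j,a,t})\,\ln(2MAT/\delta)}{N_{j,a,t}}}+\tfrac{\ln(2MAT/\delta)}{3\,N_{j,a,t}}\;=\;w_{j,a,t}.
\]

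The main obstacle is the variance-replacement step: naively plugging $\hat{\mu}$ into the Bernstein bound is circular, so I need either a clean appeal to a known empirical Bernstein inequality (Maurer--Pontil, Audibert--Munos--Szepesv\'ari) or an explicit quadratic-solving argument that cleanly redistributes the Lipschitz slack into the $1/(3N_{j,a,t})$ term without worsening the leading constant. A secondary bookkeeping issue is handling the random stopping time $N_{j,a,t}$: this is standard (peel over $n=1,\dots,T$ and pay an extra $\ln T$ inside the log, or invoke a self-normalized/stopped-martingale form), but must be stated carefully so the union bound mass matches $\delta/2$.
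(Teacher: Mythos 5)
Your plan is sound and lands in the same family as the paper's argument (a Bernstein-type concentration bound plus a union bound over $(j,a,t)$), but the two routes differ in instructive ways. The paper works directly with the martingale $Z_{j,a,t}=\sum_{\tau<t}\bigl(R_{j,a,\tau}-\mu_{j,a}\bigr)\mathbf{1}\{a_{j,\tau}=a\}$ and applies Freedman's inequality with the predictable variance $V_{j,a,t}$, which handles the random count $N_{j,a,t}$ without any peeling; you instead condition on $N_{j,a,t}=n$, apply classical Bernstein, and peel over $n$, which costs you the extra factor of $T$ in the union bound that you then have to argue away. Where your proposal is genuinely stronger is the variance-replacement step: you correctly identify that substituting $\hat{\mu}_{j,a,t}(1-\hat{\mu}_{j,a,t})$ for the true variance proxy $\mu_{j,a}(1-\mu_{j,a})$ is circular and requires an empirical-Bernstein argument (Maurer--Pontil or an explicit quadratic inversion). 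The paper's proof silently elides exactly this point: in its variance computation it writes $\operatorname{Var}(R_{j,a,\tau}\mid\mathcal{F}_{\tau-1})=\mathbb{E}[R_{j,a,\tau}^2\mid\mathcal{F}_{\tau-1}]-\hat{\mu}_{j,a,t}^2\le\hat{\mu}_{j,a,t}-\hat{\mu}_{j,a,t}^2$, substituting the empirical mean where the true mean $\mu_{j,a}$ belongs, so the bound $V_{j,a,t}\le N_{j,a,t}(\hat{\mu}_{j,a,t}-\hat{\mu}_{j,a,t}^2)$ is asserted rather than proved. Your concern about the leading constant is also well placed: a rigorous empirical-Bernstein conversion typically inflates the additive term (e.g., $7/(3(n-1))$ in Maurer--Pontil rather than $1/(3n)$), so either the constant in $w_{j,a,t}$ must be relaxed or the quadratic-solving argument must be carried out explicitly. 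In short, your route is somewhat heavier on bookkeeping but more honest about the one step the paper's proof actually gets wrong.
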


\begin{proof}
\textbf{Step 1. Define key quantities and apply Freedman's inequality.} \\
Let \(\hat{\mu}_{j,a,t}\) be the empirical mean reward for agent \(j\) pulling arm \(a\) up to time \(t\):
\[
\hat{\mu}_{j,a,t} = \frac{1}{N_{j,a,t}}\sum_{\tau=1}^{t-1} R_{j,a,\tau} \mathbf{1}\{a_{j,\tau} = a\}.
\]
Define the sum of differences between observed rewards and the true mean:
\[
Z_{j,a,t} = \sum_{\tau=1}^{t-1} \Bigl(R_{j,a,\tau} \mathbf{1}\{a_{j,\tau} = a\} - \mu_{j,a} \mathbf{1}\{a_{j,\tau} = a\}\Bigr).
\]
Let \(\mathcal{F}_\tau\) be the information available up to time \(\tau\), and define the sum of conditional variances:
\[
V_{j,a,t} = \sum_{\tau=1}^{t-1} \text{Var}\Bigl(R_{j,a,\tau} \mathbf{1}\{a_{j,\tau} = a\} \mid \mathcal{F}_{\tau-1}\Bigr).
\]
Since rewards are bounded in \([0,1]\), we upper bound the variance as follows:
\begin{align*}
\text{Var}(R_{j,a,\tau} \mid \mathcal{F}_{\tau-1}) &= \mathbb{E}[R_{j,a,\tau}^2 \mid \mathcal{F}_{\tau-1}] - \hat{\mu}_{j,a,t}^2 \\
&\leq \mathbb{E}[R_{j,a,\tau} \mid \mathcal{F}_{\tau-1}] - \hat{\mu}_{j,a,t}^2 \\
&= \hat{\mu}_{j,a,t} - \hat{\mu}_{j,a,t}^2.
\end{align*}
Summing over all rounds where arm \(a\) was pulled:
\[
V_{j,a,t} \leq N_{j,a,t} (\hat{\mu}_{j,a,t} - \hat{\mu}_{j,a,t}^2).
\]
Applying Freedman's inequality:
\[
\Pr\Bigl\{
\bigl|Z_{j,a,t}\bigr|
\;\le\;
\sqrt{\,2\,V_{j,a,t}\,\ln\tfrac{2 M A T}{\delta}}
\;+
\tfrac{1}{3}\ln\tfrac{2 M A T}{\delta}
\Bigr\}
\;\;\ge\;\;1 - \frac{\delta}{2 M A T}.
\]
\textbf{Step 2. Apply Union Bound.} \\
Applying the union bound across all \( j \in [M] \), \( a \in [A] \), and \( t > A \):
\[
\Pr\Bigl\{
\forall j, a, t: \quad \bigl|Z_{j,a,t}\bigr|
\;\le\;
\sqrt{\,2\,V_{j,a,t}\,\ln\tfrac{2 M A T}{\delta}}
\;+
\tfrac{1}{3}\ln\tfrac{2 M A T}{\delta}
\Bigr\}
\;\;\ge\;\;1 - \frac{\delta}{2}.
\]
\textbf{Step 3. Relate to empirical mean.} \\
Dividing Freedman’s bound by \(N_{j,a,t}\):
\[
\bigl|\hat{\mu}_{j,a,t} - \mu_{j,a}\bigr|
\;\;\le\;
\sqrt{\frac{2 V_{j,a,t} \ln\!\bigl(\tfrac{2 M A T}{\delta}\bigr)}{N_{j,a,t}^2}}
\;+
\frac{\ln\!\bigl(\tfrac{2 M A T}{\delta}\bigr)}{3 N_{j,a,t}}.
\]
Since \(V_{j,a,t} \leq N_{j,a,t} (\hat{\mu}_{j,a,t} - \hat{\mu}_{j,a,t}^2)\), we obtain:
\[
\bigl|\mu_{j,a} - \hat{\mu}_{j,a,t}\bigr|
\;\;\le\;
\sqrt{\frac{2(\hat{\mu}_{j,a,t} - \hat{\mu}_{j,a,t}^2) \ln\!\bigl(\tfrac{2 M A T}{\delta}\bigr)}{N_{j,a,t}}}
\;+
\frac{\ln\!\bigl(\tfrac{2 M A T}{\delta}\bigr)}{3 N_{j,a,t}}.
\]
Applying a union bound across all \(t, a, j\) completes the proof.
\end{proof}

\begin{lemma}\label{lem:hat_mu_minus_square}
Let
\[
\gamma_{j,t} \;=\; \sum_{a \in [A]} \,\pi_{j,a,t}\,\bigl(1 - U_{j,a,t}\bigr),
\quad
Q(t,p) \;=\; \bigl\{\,j \in [M] : \gamma_{j,t} \,\ge\, 2^{-p}\bigr\}.
\]
Suppose that for all integers \(p \ge 0\), we have
\[
|Q(t,p)| < 2^p \cdot 3\,\ln T.
\]
Then the following inequality holds:
\[
\sum_{j=1}^{M} \sum_{a=1}^{A}
\,\pi_{j,a,t}\,\Bigl(\hat{\mu}_{j,a,t} \;-\; \hat{\mu}_{j,a,t}^2\Bigr)
\;\le\;
2 \; +\; 6\,\ln T\,\log M
\;+\;
\sum_{j=1}^{M} \sum_{a=1}^{A}
\,\pi_{j,a,t}\,\hat{\mu}_{j,a,t}\,w_{j,a,t}.
\]
\end{lemma}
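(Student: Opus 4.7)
My plan is to split the argument into two stages: an algebraic reduction that re-expresses $\hat{\mu}_{j,a,t}(1-\hat{\mu}_{j,a,t})$ in terms of the ``UCB slack'' $1-U_{j,a,t}$ and the confidence width $w_{j,a,t}$, followed by a dyadic layering argument that bounds $\sum_j\gamma_{j,t}$ using the hypothesis on $|Q(t,p)|$. The two stages combine directly to give the claimed inequality.

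For the algebraic step, I would start from the definition $U_{j,a,t}=\min(\hat{\mu}_{j,a,t}+w_{j,a,t},1)$. A one-line case split on whether the minimum is attained by $\hat{\mu}+w$ or by $1$ yields the pointwise inequality $1-\hat{\mu}_{j,a,t}\le(1-U_{j,a,t})+w_{j,a,t}$. Multiplying by $\hat{\mu}_{j,a,t}\in[0,1]$, then using $\hat{\mu}_{j,a,t}\le 1$ on the first summand, gives
\[
\hat{\mu}_{j,a,t}\bigl(1-\hat{\mu}_{j,a,t}\bigr)\;\le\;(1-U_{j,a,t})+\hat{\mu}_{j,a,t}\,w_{j,a,t}.
\]
Weighting by $\pi_{j,a,t}$ and summing over $a$, the first summand aggregates to $\gamma_{j,t}$. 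This reduces the target inequality to proving
\[
\sum_{j=1}^{M}\gamma_{j,t}\;\le\;1+6\,\ln T\,\log M.
\]

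For the counting step, I would exploit the hypothesis $|Q(t,p)|<2^{p}\cdot 3\ln T$ via a dyadic decomposition of $[M]$. Since $\sum_a \pi_{j,a,t}=1$ and $1-U_{j,a,t}\in[0,1]$, we have $\gamma_{j,t}\in[0,1]$, and I would define layers $L_p:=Q(t,p)\setminus Q(t,p-1)$ for $p\ge 1$ and $L_0:=Q(t,0)$. Every $j\in L_p$ satisfies $\gamma_{j,t}<2^{-(p-1)}$, so for $p\ge 1$ the layer contributes at most $|Q(t,p)|\cdot 2^{-(p-1)}<6\ln T$, while $L_0$ contributes at most $|Q(t,0)|\cdot 1<3\ln T$. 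Truncating at $p^{\star}:=\lceil\log_2 M\rceil$, the remaining agents satisfy $\gamma_{j,t}<2^{-p^{\star}}\le 1/M$, so their pooled contribution is at most $M\cdot(1/M)=1$. Summing the per-layer bounds across $p=0,1,\dots,p^{\star}$ and adding the tail yields a bound of the form $1+c\,\ln T\,\log M$; after consolidating the $L_0$ term into the dyadic sum, the constants match $6\ln T\,\log M$.

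The main obstacle I anticipate is the bookkeeping in the dyadic step: the cutoff $p^{\star}$ must be chosen large enough that the tail is genuinely absorbed by the additive $1$, and the $L_0$ contribution must be merged into the $6\ln T\,\log M$ term rather than appearing as a separate additive $O(\ln T)$. The algebraic reduction is essentially forced once one notices the clipping identity for $U_{j,a,t}$, so I expect no real difficulty there; the only minor care needed is in the case where the $\min$ in $U_{j,a,t}$ is active, which is dispatched by directly verifying $1-\hat{\mu}_{j,a,t}\le w_{j,a,t}$ in that regime.
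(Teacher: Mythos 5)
Your proposal is correct and follows essentially the same route as the paper's proof: the same algebraic reduction of $\hat{\mu}_{j,a,t}(1-\hat{\mu}_{j,a,t})$ to $\gamma_{j,t}$ plus the $\hat{\mu}_{j,a,t}w_{j,a,t}$ term, followed by the same dyadic layering over the sets $Q(t,p)$ to bound $\sum_{j}\gamma_{j,t}$ by $1+6\ln T\log M$. If anything, you are slightly more careful than the paper, which treats $U_{j,a,t}=\hat{\mu}_{j,a,t}+w_{j,a,t}$ as an exact identity (ignoring the clipping at $1$) and omits the $Q(t,0)$ layer from its decomposition; both versions share the same harmless constant-factor slack in fitting roughly $(\lfloor\log_2 M\rfloor+1)\cdot 6\ln T$ under $6\ln T\log M$.
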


\begin{proof}

\textbf{Step 1:}  
\[\hat{\mu}_{j,a,t} \;-\; \hat{\mu}_{j,a,t}^2
\;=\;
\hat{\mu}_{j,a,t}\,\bigl(1 - \hat{\mu}_{j,a,t}\bigr).
\]
From the confidence-bound definition,
\[
U_{j,a,t} \;=\; \hat{\mu}_{j,a,t} + w_{j,a,t},
\quad
1 - U_{j,a,t}
\;=\;
\bigl(1 - \hat{\mu}_{j,a,t}\bigr)
\;-\;
w_{j,a,t}.
\]
Hence,
\[
\hat{\mu}_{j,a,t}\,\bigl(1 - \hat{\mu}_{j,a,t}\bigr)
\;=\;
\hat{\mu}_{j,a,t}\,\Bigl[(1 - U_{j,a,t}) + w_{j,a,t}\Bigr]
\;=\;
\hat{\mu}_{j,a,t}\,\bigl(1 - U_{j,a,t}\bigr)
\;+\;
\hat{\mu}_{j,a,t}\,w_{j,a,t}.
\]
Therefore,
\[
\sum_{j=1}^{M}\sum_{a=1}^{A}
\pi_{j,a,t}\,\Bigl(\hat{\mu}_{j,a,t} - \hat{\mu}_{j,a,t}^2\Bigr)
\;=\;
\sum_{j=1}^{M}\sum_{a=1}^{A}
\pi_{j,a,t}\,\hat{\mu}_{j,a,t}\,\bigl(1 - U_{j,a,t}\bigr)
\;+\;
\sum_{j=1}^{M}\sum_{a=1}^{A}
\pi_{j,a,t}\,\hat{\mu}_{j,a,t}\,w_{j,a,t}.
\]

\bigskip

\textbf{Step 2: Relate it to \(\gamma_{j,t}\) and use \(\hat{\mu}_{j,a,t}\le 1\).}  
The lemma definition is
\[
\gamma_{j,t}
\;=\;
\sum_{a=1}^{A}
\pi_{j,a,t}\,\bigl(1 - U_{j,a,t}\bigr).
\]
Because \(\hat{\mu}_{j,a,t} \in [0,1]\), we have
\[
\sum_{a=1}^{A} \pi_{j,a,t}\,\hat{\mu}_{j,a,t}\,\bigl(1 - U_{j,a,t}\bigr)
\;\;\le\;\;
\sum_{a=1}^{A} \pi_{j,a,t}\,\bigl(1 - U_{j,a,t}\bigr)
\;=\;
\gamma_{j,t}.
\]
Summing over \(j\) from \(1\) to \(M\) yields
\[
\sum_{j=1}^{M}\sum_{a=1}^{A}
\pi_{j,a,t}\,\hat{\mu}_{j,a,t}\,\bigl(1 - U_{j,a,t}\bigr)
\;\;\le\;\;
\sum_{j=1}^{M}
\gamma_{j,t}.
\]
Thus,
\[
\begin{aligned}
\sum_{j=1}^{M}\sum_{a=1}^{A}
\pi_{j,a,t}\,\Bigl(\hat{\mu}_{j,a,t} - \hat{\mu}_{j,a,t}^2\Bigr)
&=\sum_{j=1}^{M}\sum_{a=1}^{A}
  \pi_{j,a,t}\,\hat{\mu}_{j,a,t}\,\bigl(1 - U_{j,a,t}\bigr)
  \;+\;
  \sum_{j=1}^{M}\sum_{a=1}^{A}
  \pi_{j,a,t}\,\hat{\mu}_{j,a,t}\,w_{j,a,t}\\[6pt]
&\le \sum_{j=1}^{M}\gamma_{j,t}
  \;+\;
  \sum_{j=1}^{M}\sum_{a=1}^{A}
  \pi_{j,a,t}\,\hat{\mu}_{j,a,t}\,w_{j,a,t}.
\end{aligned}
\]

\bigskip

\textbf{Step 3: Use the layer argument to bound \(\sum_{j=1}^{M}\gamma_{j,t}\).}  
By assumption,
\[
Q(t,p)
\;=\;
\bigl\{\,j : \gamma_{j,t} \,\ge\, 2^{-p}\bigr\},
\quad
|Q(t,p)| < 2^p \cdot 3\,\ln T.
\]
We partition it into
\[
Q'(t,p) \;=\; Q(t,p+1)\setminus Q(t,p),
\]
so for each \(j\in Q'(t,p)\),
\(
2^{-p-1} \le \gamma_{j,t} < 2^{-p}.
\)
Then
\[
\sum_{j\in Q'(t,p)} \gamma_{j,t}
\;\;\le\;\;
|Q(t,p+1)| \cdot 2^{-p}
\;\;\le\;\;
(2^{p+1}\cdot 3\,\ln T)\,2^{-p}
\;=\;
6\,\ln T.
\]
Summing over \(p = 0,1,\dots,\lfloor\log M\rfloor\), we split
\[
\sum_{j=1}^{M} \gamma_{j,t}
\;=\;
\sum_{j\notin Q(t,\lfloor\log M\rfloor)} \gamma_{j,t}
\;+\;
\sum_{p=0}^{\lfloor\log M\rfloor}
\sum_{j\in Q'(t,p)} \gamma_{j,t}.
\]
\textbf{First term:} If \(j \notin Q(t,\lfloor \log M\rfloor)\), then \(\gamma_{j,t} < 2^{-\lfloor \log M\rfloor} \le 2/M\). Thus:
\[
\sum_{j \notin Q(t,\lfloor \log M\rfloor)} \gamma_{j,t} \;\le\; 2.
\]
\textbf{Second term:} Each layer contributes at most \(6\ln T\), and there are at most \(\lfloor \log M\rfloor + 1\) layers:
\[
\sum_{p=0}^{\lfloor \log M\rfloor} \sum_{j \in Q'(t,p)} \gamma_{j,t} \;\le\; \sum_{p=0}^{\lfloor \log M\rfloor} 6\ln T \;\le\; 6\ln T\,\log M.
\]
Therefore:
\[
\sum_{j=1}^{M} \gamma_{j,t} \;\le\; 2 + 6\ln T\,\log M.
\]

\bigskip

\textbf{Step 4: Conclude the proof.}  
Putting everything together:
\[
\begin{aligned}
& \sum_{j=1}^{M}\sum_{a=1}^{A}
\pi_{j,a,t}\,\Bigl(\hat{\mu}_{j,a,t} - \hat{\mu}_{j,a,t}^2\Bigr)
& \le\;\;
& \sum_{j=1}^{M}\gamma_{j,t}
\;+\;
\sum_{j=1}^{M}\sum_{a=1}^{A}
\pi_{j,a,t}\,\hat{\mu}_{j,a,t}\,w_{j,a,t} \\
& & \le\;\;& 2 + 6\,\ln T\,\log M
\;+\;
\sum_{j=1}^{M}\sum_{a=1}^{A}
\pi_{j,a,t}\,\hat{\mu}_{j,a,t}\,w_{j,a,t}.
\end{aligned}
\]
This completes the proof.
\end{proof}

\begin{theorem}\label{thm:regret-appendix}
For any $\delta \in (0,1)$, with probability at least $1 - \delta$, the cumulative regret $\mathcal{R}_{\mathrm{regret}}(T)$ of the Online Fair multi-Agent UCB with Probing algorithm (Algorithm~2) satisfies
\[
  \mathcal{R}_{\mathrm{regret}}(T) \;=\; O\!\Bigl(\,\zeta\;\bigl(\sqrt{M\,A\,T} \;+\; M\,A\bigr)\,
               \ln^{c}\!\Bigl(\tfrac{M\,A\,T}{\delta}\Bigr)\Bigr),
\]
for some constant $c > 0$.
\end{theorem}

\begin{proof}
Recall that the effective (instantaneous) reward at round \(t\) is
\[
\mathcal{R}_t^{\mathrm{total}}
~:=~
\bigl(1-\alpha\bigl(|S_t|\bigr)\bigr)\,
\mathbb{E}\Bigl[
   \mathrm{NSW}\bigl(S_t, R_t, \bmu, \pi_t\bigr)
\Bigr],
\]
where the expectation \(\mathbb{E}[\cdot]\) is taken over the random reward realizations \(R_t\).  
Let \(\bigl(S_t^*, \pi_t^*\bigr)\) be the optimal probing set and allocation policy (with full knowledge of \(\bmu\)).  
The cumulative regret is then
\[
\mathcal{R}_{\mathrm{regret}}(T)
~:=~
\sum_{t=1}^T 
  \Bigl[
    \bigl(1-\alpha\bigl(|S_t^*|\bigr)\bigr)\,
      \mathbb{E}\bigl[\mathrm{NSW}(S_t^*, R_t, \bmu, \pi_t^*)\bigr]
    \;-\;
    \mathcal{R}_t^{\mathrm{total}}
  \Bigr].
\]
Since \(\bigl(1-\alpha(|S_t|)\bigr)\le1\) for all \(t\), bounding \(\mathcal{R}_{\mathrm{regret}}(T)\) up to a constant factor reduces to bounding
\[
\sum_{t=1}^T 
  \mathbb{E}\Bigl[
    \mathrm{NSW}(S_t^*, R_t, \bmu, \pi_t^*)
    \;-\;
    \mathrm{NSW}(S_t, R_t, \bmu, \pi_t)
  \Bigr].
\]

\paragraph{Step 1: Transition from \(\bigl(S_t^*, \pi_t^*\bigr)\) to \(\bigl(S_t, \pi_t\bigr)\).}
By Theorem~\ref{thm:offline-approx}, for each \(t\) there is a factor \(\rho=\frac{\,e-1\,}{\,(2e-1)\zeta}\) 
such that
\[
\mathrm{NSW}\bigl(S_t^*, R_t, \bmu, \pi\bigr)
\;\;\le\;\;
\tfrac{1}{\,\rho\,}\,
\mathrm{NSW}\bigl(S_t, R_t, \bmu, \pi\bigr),
\quad
\forall\,\pi.
\]
Hence,
\[
\sum_{t\in T}
  \mathbb{E}\Bigl[
    \mathrm{NSW}\bigl(S_t^*, R_t, \bmu, \pi_t^*\bigr)
    ~-~
    \mathrm{NSW}\bigl(S_t,   R_t, \bmu, \pi_t\bigr)
  \Bigr]
\;\;\le\;\;
\sum_{t\in T}
  \mathbb{E}\Bigl[
    \tfrac{1}{\,\rho\,}\,\mathrm{NSW}\bigl(S_t, R_t, \bmu, \pi_t^*\bigr)
    ~-~
    \mathrm{NSW}\bigl(S_t,   R_t, \bmu, \pi_t\bigr)
  \Bigr].
\tag{1}
\]
Next, since \(U_t\) is the upper-confidence estimate for \(\bmu\) at round \(t\),
\(U_{j,a,t}\ge \mu_{j,a}\) for each agent \(j\) and arm \(a\),
we have the monotonicity property:
\[
\mathrm{NSW}\bigl(S_t, R_t, \bmu, \pi_t^*\bigr)
~\le~
\mathrm{NSW}\bigl(S_t, R_t, U_t, \pi_t^*\bigr).
\]
Moreover, by Algorithm \ref{alg:online-fair-ucb}, \(\pi_t\) is chosen better than \(\pi_t^*\) under \(U_t\),
so 
\[
\mathrm{NSW}\bigl(S_t, R_t, U_t, \pi_t^*\bigr)
~\le~
\mathrm{NSW}\bigl(S_t, R_t, U_t, \pi_t\bigr).
\]
Combining these two inequalities yields
\[
\tfrac{1}{\,\rho\,}\,\mathrm{NSW}\bigl(S_t, R_t, \bmu, \pi_t^*\bigr)
~\le~
\tfrac{1}{\,\rho\,}\,\mathrm{NSW}\bigl(S_t, R_t, U_t, \pi_t^*\bigr)
~\le~
\tfrac{1}{\,\rho\,}\,\mathrm{NSW}\bigl(S_t, R_t, U_t, \pi_t\bigr).
\]
Substituting back into (1), we get
\[
\sum_{t\in T}
  \mathbb{E}\Bigl[
    \mathrm{NSW}\bigl(S_t^*, R_t, \bmu, \pi_t^*\bigr)
    ~-~
    \mathrm{NSW}\bigl(S_t,   R_t, \bmu, \pi_t\bigr)
  \Bigr]
\;\;\le\;\;
\sum_{t\in T}
  \mathbb{E}\Bigl[
    \tfrac{1}{\,\rho\,}\,\mathrm{NSW}\bigl(S_t, R_t, U_t, \pi_t\bigr)
    ~-~
    \mathrm{NSW}\bigl(S_t,   R_t, \bmu, \pi_t\bigr)
  \Bigr].
\]
\textbf{Therefore}, up to the factor \(\tfrac{1}{\rho}\),
in bounding the regret over \(K\), 
it suffices to handle
\[
\sum_{t\in T}
  \mathbb{E}\Bigl[
    \mathrm{NSW}\bigl(S_t, R_t, U_t, \pi_t\bigr)
    ~-~
    \mathrm{NSW}\bigl(S_t, R_t, \bmu, \pi_t\bigr)
  \Bigr].
\]

\paragraph{Step 2: Large-\(\gamma_{j,t}\) Rounds \((K')\).}

\noindent
\textbf{Warm-up phase and definition of \(\boldsymbol{K}\).}
Recall that the first \(M A\) rounds (indices \(t=1,\dots,M A\)) form a warm-up phase, 
where each agent-arm pair \((j,a)\) is explored exactly once. This phase incurs at most a constant regret \(M A\).  
From round \(t=M A+1\) onward, the algorithm operates under the UCB-based procedure of Step~1.  
Let
\[
K
~:=~
\{M A+1,M A+2,\dots,T\}
\]
denote these subsequent rounds.  
Thus, aside from the \(M A\)-cost warm-up, we only need to bound
\[
\sum_{t\in K} 
  \mathbb{E}\Bigl[
    \mathrm{NSW}\bigl(S_t, R_t, U_t, \pi_t\bigr)
    \;-\;
    \mathrm{NSW}\bigl(S_t, R_t, \bmu, \pi_t\bigr)
  \Bigr].
\]

\medskip
\noindent
\textbf{Definition of large-\(\gamma\) rounds \(\boldsymbol{K'}\).}
For each \(t\in K\) and integer \(p\ge0\), define
\[
Q(t,p)\;=\;\bigl\{\,j\in[M]:\,\gamma_{j,t}\,\ge\,2^{-p}\bigr\}
\quad
\text{where}
\quad
\gamma_{j,t}
~:=~
\sum_{a\in[A]} \pi_{j,a,t}\,\bigl(1 - U_{j,a,t}\bigr).
\]
Intuitively, \(\gamma_{j,t}\) measures the total “probability mass” on arms whose upper confidence \(U_{j,a,t}\) is significantly below~1.  
We collect the “large-\(\gamma\)” rounds into
\[
K' 
~:=~
\Bigl\{\,
  t\in K : \exists\,p\ge0 \text{ s.t.\ }\lvert Q(t,p)\rvert \,\ge\,2^p\cdot3\,\ln T
\Bigr\}.
\]
Thus \(K'\subseteq K\).

\medskip
\noindent
\textbf{Bounding the large-\(\gamma_{j,t}\) contribution.}
Observe that for each agent \(j\),
\[
\sum_{a\in S_t} \pi_{j,a,t}\,\cdot1
\;+\;
\sum_{a\notin S_t} \pi_{j,a,t}\,U_{j,a,t}
~=~
\sum_{a\in[A]} \pi_{j,a,t}\,\bigl[\mathbf{1}_{a\in S_t} + \mathbf{1}_{a\notin S_t}\,U_{j,a,t}\bigr].
\]
Using the identity
\(\mathbf{1}_{a\in S_t} + \mathbf{1}_{a\notin S_t}\,U_{j,a,t} = 1 - \mathbf{1}_{a\notin S_t}[\,1 - U_{j,a,t}\bigr]\),
it follows that
\[
\sum_{a\in[A]} \pi_{j,a,t}\,\bigl[\mathbf{1}_{a\in S_t} + \mathbf{1}_{a\notin S_t}\,U_{j,a,t}\bigr]
~\;\le\;
1
~-\;
\sum_{a=1}^A \pi_{j,a,t}\,\bigl[1 - U_{j,a,t}\bigr]
~=\;
1 - \gamma_{j,t}.
\]
Hence,
\[
\mathrm{NSW}\bigl(S_t,R_t,U_t,\pi_t\bigr)
~=\,
\prod_{j=1}^M
\Bigl[\sum_{a\in S_t}\!\pi_{j,a,t}\,R_{j,a,t} 
      +\sum_{a\notin S_t}\!\pi_{j,a,t}\,U_{j,a,t}\Bigr]
~\;\le\;
\prod_{j=1}^M\!\bigl(1-\gamma_{j,t}\bigr).
\]
Whenever \(\lvert Q(t,p)\rvert\) is large (i.e.\ many agents have \(\gamma_{j,t}\ge 2^{-p}\)), the product \(\prod_{j\in Q(t,p)}(1-\gamma_{j,t})\) becomes extremely small.

Therefore,
\[
\sum_{t\in K}
  \mathbb{E}\Bigl[
    \mathrm{NSW}\bigl(S_t,R_t,U_t,\pi_t\bigr)
    ~-~
    \mathrm{NSW}\bigl(S_t,R_t,\bmu,\pi_t\bigr)
  \Bigr]
~\;\le\;
\sum_{t\in K'} 
  \mathbb{E}\Bigl[
    \mathrm{NSW}\bigl(S_t,R_t,U_t,\pi_t\bigr)
  \Bigr]
~+\;
\sum_{t\in K\setminus K'}\ldots
\]
and focusing on \(K'\),
\[
\sum_{t\in K'}\!
  \mathbb{E}\Bigl[\mathrm{NSW}\bigl(S_t,R_t,U_t,\pi_t\bigr)\Bigr]
~\;\le\;
\sum_{t\in K'}\!
  \mathbb{E}\Bigl[\prod_{j\in Q(t,p)}\bigl(1-\gamma_{j,t}\bigr)\Bigr]
~\;\le\;
T\,\bigl(1-2^{-p}\bigr)^{2^p\,3\,\ln T}
~\;\le\;
\tfrac{1}{\,T^2\,},
\]
where we used \(\bigl(1-\tfrac{1}{x}\bigr)^x\le e^{-1}\) for \(x\ge1\).

\medskip
\noindent
\textbf{Conclusion of Step~2.}
Thus, summing over large-\(\gamma\) rounds costs at most \(\tfrac{1}{T^2}\).  
Including the warm-up cost \(M A\) from the first \(M A\) rounds, we obtain
\begin{equation}
\begin{split}
\Bigl[\text{Cost from } t \in [M A]\Bigr] 
&\;+\; \sum_{t\in K} \mathbb{E}\Bigl[
    \mathrm{NSW}\bigl(S_t,R_t,U_t,\pi_t\bigr)
    - \mathrm{NSW}\bigl(S_t,R_t,\bmu,\pi_t\bigr)
\Bigr] \\
&\;\le\; M A + \frac{1}{T^2} \\
&\quad + \sum_{t\in K\setminus K'} \mathbb{E}\Bigl[
    \mathrm{NSW}\bigl(S_t,R_t,U_t,\pi_t\bigr)
    - \mathrm{NSW}\bigl(S_t,R_t,\bmu,\pi_t\bigr)
\Bigr].
\end{split}
\end{equation}
Hence, it remains to bound the small-\(\gamma\) rounds \(t\in K\setminus K'\), 
which will be handled in Step~3.

\paragraph{Step 3: Small-\(\gamma_{j,t}\) Rounds \(\boldsymbol{(K\setminus K')}\).}

Recall from Step~2 that, 
it remains to bound
\[
\sum_{t\in K\setminus K'} 
  \mathbb{E}\Bigl[
    \mathrm{NSW}\bigl(S_t,R_t,U_t,\pi_t\bigr)
    \;-\;
    \mathrm{NSW}\bigl(S_t,R_t,\bmu,\pi_t\bigr)
  \Bigr].
\]
On the high-probability event ensured by Lemma~\ref{lem:4.2prime}, we have for every \(j,a,t\):
\begin{equation}
\bigl\lvert \mu_{j,a} - \hat{\mu}_{j,a,t}\bigr\rvert
\;\le\;
\underbrace{\sqrt{\frac{2\,\bigl(\hat{\mu}_{j,a,t} - \hat{\mu}_{j,a,t}^2\bigr)
   \,\ln\!\bigl(\tfrac{2 M A T}{\delta}\bigr)}{N_{j,a,t}}}}_{\text{root term}}
\;\;+\;\;
\underbrace{\frac{\ln\!\bigl(\tfrac{2 M A T}{\delta}\bigr)}{3\,N_{j,a,t}}}_{\text{linear term}}
\;\;=\; w_{j,a,t}.
\label{eq:w-definition}
\end{equation}
Moreover, by definition \(U_{j,a,t} 
= \min\bigl\{\hat{\mu}_{j,a,t} + w_{j,a,t},\,1\bigr\}\),
so 
\(\lvert U_{j,a,t} - \hat{\mu}_{j,a,t}\rvert \le w_{j,a,t}\).
Hence, for each \(t\in K\setminus K'\),
\[
\begin{aligned}
& \mathrm{NSW}\bigl(S_t,R_t,U_t,\pi_t\bigr) - \mathrm{NSW}\bigl(S_t,R_t,\bmu,\pi_t\bigr) 
& \le\; & \sum_{j\in[M]}\sum_{a\in[A]} \pi_{j,a,t} \Bigl\lvert U_{j,a,t} - \mu_{j,a}\Bigr\rvert \\
& & \le\; & \sum_{j,a}\,\pi_{j,a,t}\,\Bigl(\bigl\lvert U_{j,a,t}-\hat{\mu}_{j,a,t}\bigr\rvert + \bigl\lvert \hat{\mu}_{j,a,t}-\mu_{j,a}\bigr\rvert\Bigr),
\end{aligned}
\]
which is at most
\[
2\sum_{j,a}\,\pi_{j,a,t}\,w_{j,a,t}
~\;\le\;
2\sum_{j,a}\,\pi_{j,a,t}
\Bigl[\!\sqrt{\frac{2\,(\hat{\mu}_{j,a,t}-\hat{\mu}_{j,a,t}^2)\,\ln(\tfrac{2\,M\,A\,T}{\delta})}{\,N_{j,a,t}\,}}
\;+\;
\frac{\ln(\tfrac{2\,M\,A\,T}{\delta})}{3\,N_{j,a,t}}
\Bigr].
\]
Summing this over \(t\in K\setminus K'\) gives
\begin{align}
&\sum_{t\in K\setminus K'} 
  \mathbb{E}\Bigl[
    \mathrm{NSW}(S_t,R_t,U_t,\pi_t)
    ~-~
    \mathrm{NSW}(S_t,R_t,\bmu,\pi_t)
  \Bigr]
\nonumber\\
&\quad\le
2\sum_{t\in K\setminus K'}\sum_{j,a}
  \pi_{j,a,t}\,\sqrt{\frac{2\,(\hat{\mu}_{j,a,t}-\hat{\mu}_{j,a,t}^2)\,\ln(\tfrac{2\,M\,A\,T}{\delta})}{N_{j,a,t}}}
~+~
2\sum_{t\in K\setminus K'}\sum_{j,a}
  \pi_{j,a,t}\,\frac{\ln(\tfrac{2\,M\,A\,T}{\delta})}{3\,N_{j,a,t}}
\nonumber\\
&\quad=:~\textsf{(RootTerm)} + \textsf{(LinearTerm)}.
\label{eq:step3-splitting}
\end{align}

\medskip
\noindent
\textbf{Bounding the linear part.}
By \citep[Lemma~4.4]{jones2023efficient}, 
there is an event of probability at least \(1-\tfrac{\delta}{2}\) on which
\begin{equation}
\sum_{t=1}^{T} \,\sum_{a\in[A]}
  \frac{\pi_{j,a,t}}{N_{j,a,t}}
~\le~
2\,A\,\bigl(\ln\tfrac{T}{A} + 1\bigr)
\;+\;
\ln\!\bigl(\tfrac{2}{\delta}\bigr).
\label{eq:lemma44}
\end{equation}
Hence,
\[
\textsf{(LinearTerm)}
~=~
2\,\sum_{t\in K\setminus K'}
   \sum_{j,a} 
     \pi_{j,a,t}\,
     \frac{\ln(\tfrac{2\,M\,A\,T}{\delta})}{3\,N_{j,a,t}}
~\le~
\frac{2\,\ln(\tfrac{2\,M\,A\,T}{\delta})}{3}\,
\sum_{t=1}^T \sum_{j,a}
  \frac{\pi_{j,a,t}}{N_{j,a,t}}
\]
\[
\quad\le
\frac{2\,\ln(\tfrac{2\,M\,A\,T}{\delta})}{3}\,M \Bigl[
  2\,A\,(\ln\tfrac{T}{A}+1) \;+\; \ln\!\bigl(\tfrac{2}{\delta}\bigr)
\Bigr],
\]

\textbf{Bounding the root term.} We begin with
\[
\textsf{(RootTerm)} \;=\; 2\sum_{t\in K\setminus K'}\sum_{j,a} \pi_{j,a,t}\,\sqrt{\frac{2\Bigl(\hat{\mu}_{j,a,t}-\hat{\mu}_{j,a,t}^2\Bigr)L}{N_{j,a,t}}},
\]
with
\[
L \;=\; \ln\!\Bigl(\frac{2MAT}{\delta}\Bigr).
\]
Since
\[
\sqrt{\frac{\hat{\mu}_{j,a,t}-\hat{\mu}_{j,a,t}^2}{N_{j,a,t}}}
\;=\; \sqrt{\hat{\mu}_{j,a,t}-\hat{\mu}_{j,a,t}^2}\,\sqrt{\frac{1}{N_{j,a,t}}},
\]
we apply Young's inequality,
\[
a\,b \;\le\; \frac{q\,a^2}{2} + \frac{b^2}{2q}\quad (\text{for any }q>0),
\]
with
\[
a = \sqrt{\hat{\mu}_{j,a,t}-\hat{\mu}_{j,a,t}^2},\quad b = \sqrt{\frac{1}{N_{j,a,t}}},
\]
to obtain
\[
\sqrt{\hat{\mu}_{j,a,t}-\hat{\mu}_{j,a,t}^2}\,\sqrt{\frac{1}{N_{j,a,t}}}
\;\le\; \frac{q}{2}\Bigl(\hat{\mu}_{j,a,t}-\hat{\mu}_{j,a,t}^2\Bigr) + \frac{1}{2q}\,\frac{1}{N_{j,a,t}}.
\]
Thus,
\[
\textsf{(RootTerm)} \;\le\; \sqrt{2L}\sum_{t\in K\setminus K'}\sum_{j,a} \pi_{j,a,t}\left[q\,\Bigl(\hat{\mu}_{j,a,t}-\hat{\mu}_{j,a,t}^2\Bigr) + \frac{1}{q}\,\frac{1}{N_{j,a,t}}\right].
\]
\medskip

\noindent\textbf{(1) Application of Lemma~\ref{lem:hat_mu_minus_square}:}\\[1mm]
For each \(t\),
\[
\sum_{j,a}\pi_{j,a,t}\Bigl(\hat{\mu}_{j,a,t}-\hat{\mu}_{j,a,t}^2\Bigr)
\;\le\; 2+6\ln T\,\log M + \sum_{j,a}\pi_{j,a,t}\,\hat{\mu}_{j,a,t}\,w_{j,a,t}.
\]
Summing over \(t\in K\setminus K'\) yields
\[
\sum_{t\in K\setminus K'}\sum_{j,a}\pi_{j,a,t}\Bigl(\hat{\mu}_{j,a,t}-\hat{\mu}_{j,a,t}^2\Bigr)
\le \sum_{t\in K\setminus K'}\Bigl[2+6\ln T\,\log M\Bigr] + \sum_{t\in K\setminus K'}\sum_{j,a}\pi_{j,a,t}\,\hat{\mu}_{j,a,t}\,w_{j,a,t}.
\]
\medskip

\noindent\textbf{(2) Bounding the term containing $N_{j,a,t}$}\\[1mm]
To continue the analysis, we need to bound the summation term containing $N_{j,a,t}$. According to a result from previous work, on an event with probability at least $1-\frac{\delta}{2}$, the following inequality holds \citep[Lemma~4.4]{jones2023efficient}:
\[
    \sum_{t=1}^{T}\sum_{a\in[A]} \frac{\pi_{j,a,t}}{N_{j,a,t}}
    \le 2A\Bigl(\ln\frac{T}{A}+1\Bigr) + \ln\frac{2}{\delta}.
\]
Thus,
\[
    \sum_{t\in K\setminus K'}\sum_{j,a}\frac{\pi_{j,a,t}}{N_{j,a,t}}
    \le M\Bigl[2A\Bigl(\ln\frac{T}{A}+1\Bigr) + \ln\frac{2}{\delta}\Bigr].
\]

\noindent\textbf{(3) Further bound on the \(w\)-term:}\\[1mm]
Since
\[
w_{j,a,t} = \sqrt{\frac{2\Bigl(\hat{\mu}_{j,a,t}-\hat{\mu}_{j,a,t}^2\Bigr)L}{N_{j,a,t}}} + \frac{L}{3\,N_{j,a,t}},
\]
and using \(\hat{\mu}_{j,a,t}-\hat{\mu}_{j,a,t}^2\le 1\), we have
\[
\hat{\mu}_{j,a,t}\,w_{j,a,t} \le \sqrt{\frac{2L}{N_{j,a,t}}} + \frac{L}{3\,N_{j,a,t}}.
\]
Thus, by summing over \(t,j,a\) and applying \citep[Lemma~4.4]{jones2023efficient} (and Cauchy--Schwarz for the square-root term), one obtains an upper bound of the form
\[
\sum_{t\in K\setminus K'}\sum_{j,a}\pi_{j,a,t}\,\hat{\mu}_{j,a,t}\,w_{j,a,t}
\le M\Biggl[\sqrt{2L}\,\sqrt{2A\Bigl(\ln\frac{T}{A}+1\Bigr)+\ln\frac{2}{\delta}} + \frac{L}{3}\Bigl(2A\Bigl(\ln\frac{T}{A}+1\Bigr)+\ln\frac{2}{\delta}\Bigr)\Biggr].
\]
\medskip

\noindent\textbf{(4) Final Bound:}\\[1mm]
Substituting the bounds from (1), (2) and (3) into our original inequality gives
\[
\begin{aligned}
\textsf{(RootTerm)} \;\le\; \sqrt{2L}\Biggl\{\,q\Biggl[T\Bigl(2+6\ln T\,\log M\Bigr) + M\Biggl(\sqrt{2L}\,\sqrt{2A\Bigl(\ln\frac{T}{A}+1\Bigr)+\ln\frac{2}{\delta}}\\[1mm]
\quad\quad\quad\quad\quad\quad\quad\quad + \frac{L}{3}\Bigl(2A\Bigl(\ln\frac{T}{A}+1\Bigr)+\ln\frac{2}{\delta}\Bigr)\Biggr)\Biggr] + \frac{M}{q}\Bigl[2A\Bigl(\ln\frac{T}{A}+1\Bigr)+\ln\frac{2}{\delta}\Bigr]\Biggr\}.
\end{aligned}
\]
Choosing
\[
q = \frac{\sqrt{A\,M}}{(\sqrt{A\,M}+\sqrt{T})\,\sqrt{L}},
\]
one obtains (after some algebra) the final bound
\[
\textsf{(RootTerm)} = O\!\Biggl(\Bigl(L\Bigr)^{3/2}\Bigl(1+\frac{\sqrt{T}}{\sqrt{A\,M}}\Bigr)
\Bigl[MA\Bigl(\ln\frac{T}{A}+1\Bigr)+ M\ln\frac{2}{\delta}\Bigr] + \frac{T\sqrt{A\,M}}{\sqrt{A\,M}+\sqrt{T}}\Biggr).
\]
Recalling that \(L=\ln\!\Bigl(\frac{2MAT}{\delta}\Bigr)\), we finally have
\[
\textsf{(RootTerm)} = O\!\Biggl(\Bigl(\ln\frac{2MAT}{\delta}\Bigr)^{3/2}\Bigl(1+\frac{\sqrt{T}}{\sqrt{A\,M}}\Bigr)
\Bigl[MA\Bigl(\ln\frac{T}{A}+1\Bigr)+ M\ln\frac{2}{\delta}\Bigr] + \frac{T\sqrt{A\,M}}{\sqrt{A\,M}+\sqrt{T}}\Biggr).
\]
\medskip

This completes the derivation of the precise bound, which (after further simplification) immediately implies the desired Big‑O bound.
 
\bigskip

\textbf{Conclusion of Step~2.}

Thus, summing over large-\(\gamma\) rounds costs at most \(\frac{1}{T^2}\). Including the warm-up cost \(A\) from the first \(A\) rounds, we obtain
\begin{equation*}
\begin{split}
\Bigl[\text{Cost from } t \in [A]\Bigr] 
&\;+\; \sum_{t\in K} \mathbb{E}\Bigl[
    \mathrm{NSW}\bigl(S_t,R_t,U_t,\pi_t\bigr)
    - \mathrm{NSW}\bigl(S_t,R_t,\bmu,\pi_t\bigr)
\Bigr] \\
&\;\le\; MA + \frac{1}{T^2} \\
&\quad + \sum_{t\in K\setminus K'} \mathbb{E}\Bigl[
    \mathrm{NSW}\bigl(S_t,R_t,U_t,\pi_t\bigr)
    - \mathrm{NSW}\bigl(S_t,R_t,\bmu,\pi_t\bigr)
\Bigr].
\end{split}
\end{equation*}
Then, summing this over \(t\in K\setminus K'\) gives
\begin{align*}
&\sum_{t\in K\setminus K'} 
  \mathbb{E}\Bigl[
    \mathrm{NSW}(S_t,R_t,U_t,\pi_t)
    - \mathrm{NSW}(S_t,R_t,\bmu,\pi_t)
  \Bigr]
\nonumber\\[1mm]
&\quad\le
2\sum_{t\in K\setminus K'}\sum_{j,a}
  \pi_{j,a,t}\,\sqrt{\frac{2\,(\hat{\mu}_{j,a,t}-\hat{\mu}_{j,a,t}^2)\,L}{N_{j,a,t}}}
\;+\;
2\sum_{t\in K\setminus K'}\sum_{j,a}
  \pi_{j,a,t}\,\frac{L}{3\,N_{j,a,t}}
\nonumber\\[1mm]
&\quad=:~\textsf{(RootTerm)} + \textsf{(LinearTerm)},
\label{eq:step3-splitting}
\end{align*}
where
\[
L \;=\; \ln\!\Bigl(\frac{2MAT}{\delta}\Bigr).
\]
Using the fact that \(\hat{\mu}_{j,a,t}-\hat{\mu}_{j,a,t}^2\le 1\) and applying
\citep[Lemma~4.4]{jones2023efficient} yields
\[
\textsf{(LinearTerm)}
= 2\sum_{t\in K\setminus K'}\sum_{j,a}\frac{\pi_{j,a,t}\,L}{3\,N_{j,a,t}}
\;\le\; \frac{2L}{3}\,M\Bigl[2A\Bigl(\ln\frac{T}{A}+1\Bigr) + \ln\frac{2}{\delta}\Bigr].
\]
Meanwhile, by further decomposing \(w_{j,a,t}\) via
\[
w_{j,a,t} = \sqrt{\frac{2(\hat{\mu}_{j,a,t}-\hat{\mu}_{j,a,t}^2)L}{N_{j,a,t}}} + \frac{L}{3\,N_{j,a,t}},
\]
and using \(\hat{\mu}_{j,a,t}-\hat{\mu}_{j,a,t}^2\le 1\), one may show that
\[
\textsf{(RootTerm)}
= O\!\Biggl(\,L^{3/2}\Bigl(1+\frac{\sqrt{T}}{\sqrt{A\,M}}\Bigr)
\Bigl[MA\Bigl(\ln\frac{T}{A}+1\Bigr)+ M\ln\frac{2}{\delta}\Bigr]
+\; \frac{T\sqrt{A\,M}}{\sqrt{A\,M}+\sqrt{T}}\Biggr).
\]
Thus, the overall regret (temporarily ignoring \(\zeta\) and constant factors) is bounded by
\[
\begin{aligned}
\mathcal{R}_{\mathrm{regret}}(T)\le\; & MA \;+\; \frac{1}{T^2} \\
&+\; O\!\Biggl(\,L^{3/2}\Bigl(1+\frac{\sqrt{T}}{\sqrt{A\,M}}\Bigr)
\Bigl[MA\Bigl(\ln\frac{T}{A}+1\Bigr)+ M\ln\frac{2}{\delta}\Bigr]
+\; \frac{T\sqrt{A\,M}}{\sqrt{A\,M}+\sqrt{T}}\Biggr)\\[1mm]
&+\; \frac{2L}{3}\,M\Bigl[2A\Bigl(\ln\frac{T}{A}+1\Bigr)+ \ln\frac{2}{\delta}\Bigr].
\end{aligned}
\]
Recalling that \(L=\ln\!\Bigl(\frac{2MAT}{\delta}\Bigr)\), in Big‑O notation we have
\[
\mathcal{R}_{\mathrm{regret}}(T) = O\!\Biggl(MA \;+\; \Bigl(\ln\frac{2MAT}{\delta}\Bigr)^{3/2}\Bigl(1+\frac{\sqrt{T}}{\sqrt{A\,M}}\Bigr)
\Bigl[MA\Bigl(\ln\frac{T}{A}+1\Bigr)+ M\ln\frac{2}{\delta}\Bigr] \;+\; \frac{T\sqrt{A\,M}}{\sqrt{A\,M}+\sqrt{T}}\Biggr).
\]
After further simplification (and absorbing lower-order logarithmic terms), we deduce that
\[
\mathcal{R}_{\mathrm{regret}}(T) = O\Bigl((\sqrt{MAT}+MA)\,\ln^{c}\!\Bigl(\frac{MAT}{\delta}\Bigr)\Bigr),
\]
for some constant \(c>0\). 

The simplifying process is:
Recalling that 
\[
L \;=\; \ln\!\Bigl(\frac{2MAT}{\delta}\Bigr),
\]
our earlier derivation gave
\[
\begin{aligned}
\mathcal{R}_{\mathrm{regret}}(T)\le\; & MA \;+\; \frac{1}{T^2} \\[1mm]
&+\; O\!\Biggl(\,L^{3/2}\Bigl(1+\frac{\sqrt{T}}{\sqrt{A\,M}}\Bigr)
\Bigl[MA\Bigl(\ln\frac{T}{A}+1\Bigr)+ M\ln\frac{2}{\delta}\Bigr]
+\; \frac{T\sqrt{A\,M}}{\sqrt{A\,M}+\sqrt{T}}\Biggr)\\[1mm]
&+\; \frac{2L}{3}\,M\Bigl[2A\Bigl(\ln\frac{T}{A}+1\Bigr)+ \ln\frac{2}{\delta}\Bigr].
\end{aligned}
\]
Observing that the bracketed term is at most \(O(MA\,L)\) and that
\[
L^{3/2}\Bigl(1+\frac{\sqrt{T}}{\sqrt{A\,M}}\Bigr) \cdot O(MA\,L)
= O\Bigl(MA\,L^{5/2}\Bigl(1+\frac{\sqrt{T}}{\sqrt{A\,M}}\Bigr)\Bigr),
\]
and noting that 
\[
\frac{T\sqrt{A\,M}}{\sqrt{A\,M}+\sqrt{T}} = O\Bigl(\sqrt{T\,A\,M}\Bigr),
\]

we deduce that
\[
\mathcal{R}_{\mathrm{regret}}(T) = O\Bigl(MA\,L^{5/2} + \sqrt{TAM}\,L^{5/2}\Bigr)
= O\Bigl((MA + \sqrt{TAM})\,L^{5/2}\Bigr).
\]
Since \(L=\ln\!\Bigl(\frac{2MAT}{\delta}\Bigr)\), this immediately implies (after absorbing constant factors, multiplying by \(\zeta\), and replacing \(2MAT\) by \(MAT\) inside the logarithm)
\[
\mathcal{R}_{\mathrm{regret}}(T) = O\Bigl(\zeta(\sqrt{MAT}+MA)\,\ln^{c}\!\Bigl(\frac{MAT}{\delta}\Bigr)\Bigr),
\]
for some constant \(c>0\).

\end{proof}

\end{document}